\crefname{figure}{Figure}{Figures}
\crefname{remark}{Remark}{Remarks}
\crefname{section}{Section}{Sections}
\crefname{appendix}{Appendix}{Appendices}
\crefname{theorem}{Theorem}{Theorems}
\crefname{equation}{Eq.}{Eqs.}
\crefname{assumption}{Assumption}{Assumptions}
\crefname{example}{Example}{Examples}
\crefname{table}{Table}{Tables}
\tikzset{
    -Latex,auto,node distance =1 cm and 1 cm,semithick,
    state/.style ={ellipse, draw, minimum width = 0.7 cm},
    point/.style = {circle, draw, inner sep=0.04cm,fill,node contents={}},
    bidirected/.style={Latex-Latex,dashed},
    el/.style = {inner sep=2pt, align=left, sloped}
}
\def\##1\#{\begin{align}#1\end{align}}
\def\$#1\${\begin{align*}#1\end{align*}}
\def\given{\,|\,}
\def\tr{\mathop{\text{tr}}\kern.2ex}
\def\J{{\mathbb J}}
\def\P{{\mathbb P}}
\def\E{{\mathbb E}}
\long\def\comment#1{}
\def\tr{\mathop{\text{Tr}}}
\def\cS{{\mathcal{S}}}
\def\cX{{\mathcal{X}}}
\def\cP{{\mathcal{P}}}
\def\cN{{\mathcal{N}}}
\def\cT{{\mathcal{T}}}
\def\cT{{\mathcal{T}}}
\def\tr{{\text{Tr}}}
\newcommand{\bel}{\begin{eqnarray}\label}
\newcommand{\eel}{\end{eqnarray}}
\newcommand{\bes}{\begin{eqnarray*}}
\newcommand{\ees}{\end{eqnarray*}}
\newcommand*\diff{\mathop{}\!\mathrm{d}}
\newcommand{\defeq}{\vcentcolon=}
\def\real{{\mathbb{R}}}
\def\R{{\real}}
\newcommand{\doo}{\operatorname{do}}
\def\indi{{\mathds{1} }}
\def \inv {^{-1}}
\def \invsq {^{-2}}
\def \sq {^{2}}
\def \LA {L_A}
\def \LB {L_B}
\def \LP {L_P}
\def \muA {\mu_A}
\def \muB {\mu_B}
\def \muIV {\mu_{\operatorname*{IV}}}
\def\sumh{ \sum_{h=1}^H }
\def\Vstar{{V^*}}
\def\Vhat{{\hat V}}
\def\Vpihat{{V^{\hat \pi}}}
\def\Vstarh{{V^*_h}}
\def\Vhath{{\hat V_h}}
\def\Vpihath{{V^{\hat \pi}_h}}
\def\Vstarhp{{V^*_{h+1}}}
\def\Vhathp{{\hat V_{h+1}}}
\def\Vpihathp{{V^{\hat \pi}_{h+1}}}
\def\Qstar{{Q^*}}
\def\Qhat{{\hat Q}}
\def\Qpihat{{Q^{\hat \pi}}}
\def\Qstarh{{Q^*_h}}
\def\Qhath{{\hat Q_h}}
\def\Qpihath{{Q^{\hat \pi}_h}}
\def \ome {{\omega}}
\def \omehat{{\hat \omega}}
\def \omest{{\omega^*}}
\def \omet{{\omega_t}}
\def \ometp{{\omega_{t+1}}}
\def \etaomet{{\eta^\omega_{t}}}
\def \etaometsq{{ (\eta^\omega_{t}) ^ 2 }}
\def \th {{\theta}}
\def \thest {{\theta^*}}
\def \thet{{\theta_t}}
\def \thetp{{\theta_{t+1}}}
\def \etathet{{\eta^\theta_{t}}}
\def \etathetsq{{ (\eta^\theta_{t} ) ^2 }}
\def \signathesq {{\sigma_{\nabla \theta}^2}}
\def \signaomesq {{\sigma_{\nabla \omega}^2}}
\def \natthephi {{\Tilde{\nabla}_\theta \Phi}}
\def \natomephi {{\Tilde{\nabla}_\omega \Phi}}
\def \Hphi {{\cH_{\phi}}}
\def \Hpsi {{\cH_{\psi}}}
\def \ltxa {{L^2(\cS,\cA)}}
\def \ltz {{L^2(\cZ)}}
\def \Wsad {{W^{\operatorname*{sad}}}}
\newcommand{\ass}{Assumption}
\newcommand{\csineq}{Cauchy–Schwarz inequality}
\newcommand{\la}{\langle}
\newcommand{\ra}{\rangle}
\newcommand{\indep}{\perp \!\!\! \perp}
\def\sad{{\operatorname*{sad}}}
\def \cond {{\,|\,}}
\def \I {{\operatorname*{I}}}
\def \II {{\operatorname*{II}}}
\def \III {{\operatorname*{III}}}
\def \iid {{\scriptscriptstyle \operatorname*{iid}}}
\newcommand{\change}[1]{ {#1}}
\newif\ifjmlr
\begin{document}

\title{Instrumental Variable Value Iteration for Causal Offline Reinforcement Learning}

\author{\name Luofeng Liao$^*$ \email{ll3530@columbia.edu} \\
		\addr Department of Industrial Engineering and Operations Research \\
		Columbia University \\
		New York, NY 10027, USA
\AND
\name Zuyue Fu$^*$ \email{zuyuefu2022@u.northwestern.edu} \\
\addr Department of Industrial Engineering and Management Sciences \\
Northwestern University \\
Evanston, IL 60208, USA
\AND
\name Zhuoran Yang \email{zhuoran.yang@yale.edu} \\
\addr Department of Statistics and Data Science  \\
Yale University \\
New Haven, CT 06520, USA
\AND
\name Yixin Wang \email{yixinw@umich.edu} \\
\addr Department of Statistics \\
University of Michigan \\
Ann Arbor, MI 48109, USA
\AND
\name Dingli Ma \email{dingli98@uw.edu} \\
\addr Department of Information Systems and Operations Management, Michael G. Foster School of Business \\
University of Washington \\
Seattle, WA 98195, USA
\AND
\name Mladen Kolar \email{mkolar@marshall.usc.edu} \\
\addr Department of Data Sciences and Operations, Marshall School of Business \\
University of Southern California \\
Los Angeles, CA 90089, USA
\AND
\name Zhaoran Wang \email{zhaoranwang@gmail.com} \\
\addr Department of Industrial Engineering and Management Sciences \\
Northwestern University \\
Evanston, IL 60208, USA
}

\editor{Eric Laber}
\maketitle
\begin{abstract}%
In offline reinforcement learning (RL) an optimal policy is learned solely from a priori collected observational data. However, in observational data, actions are often confounded by unobserved variables. Instrumental variables (IVs), in the context of RL, are the variables whose influence on the state variables is all mediated by the action. When a valid instrument is present, we can recover the confounded transition dynamics through observational data. We study a confounded Markov decision process where the transition dynamics admit an additive nonlinear functional form. Using IVs, we derive a conditional moment restriction through which we can identify transition dynamics based on observational data. We propose a provably efficient IV-aided Value Iteration (IVVI) algorithm based on a primal-dual reformulation of the conditional moment restriction. To our knowledge, this is the first provably efficient algorithm for instrument-aided offline~RL.
\end{abstract}
\begin{keywords}
    instrumental variables, reinforcement learning, causal inference
  \end{keywords}
\section{Introduction}
\label{sec:intro}

In reinforcement learning (RL) \citep{sutton2018reinforcement}, an agent maximizes its expected total reward by sequentially interacting with the environment. RL algorithms have been applied in the healthcare domain to dynamically suggest optimal treatments for patients with certain diseases \citep{raghu2017continuous, komorowski2018artificial,futoma2018learning, namkoong2020off,guez2008adaptive,parbhoo2017combining,prasad2017reinforcement}. One of the main concerns of working with observational data, especially for RL applications in healthcare, is the confounding caused by unobserved variables. Because available data may not contain measurements of important prognostic variables that guide treatment decisions or heuristic information such as visual inspection or discussions with patients during each treatment period, 
there exist variables that affect both treatment decisions and the next stage health status of patients. See \citet{brookhart2010confounding} for a detailed discussion of sources of confounding in healthcare datasets. 

Another motivating example for this work is recommender systems. In movie recommendation systems, the platform collects users' viewing history and movie ratings. It is desirable to learn from the collected datasets a movie recommendation policy that fits users' preferences and results in high movie ratings. However, there are often factors that affect users' action (watch movies or not) and movie preference. For example, the director or the star of the movie \citep{wang2020causal}.

Instrumental variables (IVs) are a well-known tool in econometrics and causal inference to identify causal effects in the presence of unobserved confounders (UCs). Informally, a variable $Z$ is an IV for the causal effect of the treatment variable $X$ on the outcome variable $Y$, if (i) it is correlated with $X$, and (ii) $Z$ affects only $Y$ through $X$, and (iii) 
$Z$ should be exogenous, 
e.g.,
$Z$ is independent of unobserved confounders. 
We provide several concrete use cases below, beginning with a recommendation system application. 

\begin{example}[(Recommendation as an IV, MovieLens 1M data)]
    In recommender systems, we could model users' experience as the outcome variable, and watching some movie as the treatment.
    The goal is to identify a sequence of movies that improve user experience when the user actually watches these movies.
    Conditional on a user, when the recommendation is sufficiently randomized, the recommendation itself can be used as an IV to deconfound the effect of a movie to user experience.  We discuss this application with a semi-synthetic dataset based on the MovieLens 1M dataset \citep{harper2015movielens} in \cref{sec:movielense}.
\end{example}

IVs are also commonly used in the healthcare domain to identify the effects of a treatment or intervention on health outcomes. There are some common sources of IVs in the medical literature, such as preference-based IVs (see Example~\ref{ex:preference}), distance to a specialty care provider (see Example~\ref{ex:nicu}), and genetic variants \citep{baiocchi2014instrumental}. 
Such a wide range of potential use of IVs in these healthcare sequential decision-making settings is a key motivator of the paper.

\begin{example}[(Differential travel time as an IV, NICU data)]
\label{ex:nicu}
    
\cite{Lorch2012}, \cite{michael2020instrumental}, and \cite{chen2021estimating} studied the effect of delivery on neonatal mortality in high-level neonatal intensive care units (NICU), using the same differential travel time as an IV. The goal is to design a neonatal regionalization system that designates hospitals according to the level of care that infants need. The available dataset has $\sim$180,000 records of mothers who delivered exactly two births during 1995 and 2009 in Pennsylvania and relocated at the second delivery. In \cref{fig:nicu} we present a possible causal DAG for the NICU application. UCs are present due to mothers' self-selection effects or unrecorded side information on which the physicians base the NICU suggestion. The differential travel time to the closest high-level NICU versus low-level NICU serves as a valid IV, since it affects the choice of the mother's NICU and does not impact clinical outcomes through other means. A neonatal regionalization system (\cref{fig:nicu}, bottom panel) designates the NICU solely based on the clinical outcome at the previous stage (since differential travel time does not affect the clinical outcome anymore once we actually \textit{assign} NICU, and confounders remain unobserved), removing arrows that point to the decision of the NICU in the DAG presented in the upper panel.

\end{example}

\begin{example}[(Preference-based IV, MIMIC-III data)]
    \label{ex:preference}
    For example, the work of \citet{Brookhart2007} discusses the use of preference-based IVs. 
They assume that different healthcare providers, at the level of geographic regions, hospitals, or 
individual physicians, have different preferences on how medical procedures are performed. 
Then preference-based IVs are variables 
that represent the variation in these healthcare providers. 
In the context of sepsis management by applying RL \citep{komorowski2018artificial} on the MIMIC-III dataset \citep{johnson2016mimic}, 
the effect of doses of intravenous fluids and vasopressors ($X$) on the health status of patients ($Y$) 
is likely to be confounded by unrecorded severity level of comorbidities. 
Then a physician's preference for prescribing vasopressors ($Z$)
is a potentially valid IV since it affects directly the actual doses given ($X$),
but is unlikely to affect the next-stage health status through other causes of $Y$.
\end{example}

\begin{figure}[t]
    \centering
    \fbox{\includegraphics[scale=.41]{figs/nicu_offline.png}}
    \\
    \hspace{-.13cm} \fbox{\includegraphics[scale=.41]{figs/nicu_online.png}}
    \caption{The NICU application, adapted from \citet[Figure 1]{chen2021estimating}. Sufficient covariates have been conditioned on.  Top panel: DAG representing data generation process where UCs are present. Bottom panel: DAG representing a prenatal regionalization system in action.}
    \label{fig:nicu}
\end{figure}

We summarize three aspects of offline sequential datasets often encountered by RL practitioners: (i) there is a large amount of logged data where the actual effects of action on the outcome are confounded, (ii) a valid IV is, in some situations, available, and (iii) it is expensive or unethical to do experimentation and then inspect the actual performance of a target policy. We ask 
\vspace{-.2cm}
\begin{center}
\emph{
    When a valid IV is present, can we design a provably efficient offline RL\\ algorithm using only confounded observational data?}
\end{center}
\vspace{-.2cm}

\noindent 
We answer this question affirmatively. We formulate the sequential decision-making process in the presence of both IVs and UCs through a model that we call Confounded Markov Decision Process with Instrumental Variables (CMDP-IV). We then propose an IV-aided Value Iteration (IVVI) algorithm to recover the optimal policy through a model-based approach. Our contribution is threefold. First, under the additive UC assumption, we derive a conditional moment restriction through which we point identify transition dynamics.
Second, we reformulate the conditional moment restriction as a primal-dual optimization problem and propose an estimation procedure that enjoys computational and statistical efficiency jointly. 
Finally, we show that the sample complexity of recovering an $\epsilon$-optimal policy using observational data with IVs is $O(\muIV^{-4}\muB^{-2.5}H^4d_x\epsilon^{-2})$, where $0< \muIV< 1$ quantifies the strength of the IV, $\mu_B$ is the minimum eigenvalue of the dual feature covariance matrix, quantifying the compatibility of the dual linear function space and the IV, $H$ is the horizon of the MDP, and $d_x$ is the dimension of states. To the best of our knowledge, this is the first result on sample complexity for an IV-aided offline RL.

Several results developed in the paper are worth noting. 
We propose a stochastic approximation estimator for nonparametric IV problem, which is jointly computationally and statistically efficient. 
We are also among the first to study offline RL in multi-stage settings with continuous actions in the face of unobserved confounding and continuous IVs. 
Our results on solving a stochastic quadratic saddle-point problem may be of independent interest.

\subsection{Related Work}

\textbf{Identification of Causal Estimand in Sequential Settings}
RL in the presence of UCs has attracted increasing attention.
One major difficulty of working with unobserved confounders is the issue of identification. When unobserved confounders are present, causal effects of actions are not identifiable from data without further assumptions. In these settings, several approaches are available. The first one is the sensitivity-analysis based approach \citep{rosenbaum2002}, where we posit additional sensitivity assumptions on how strong the unobserved confounding can possibly be. These sensitivity assumptions enable partial identification of the causal quantity. This approach is employed by a series of work by \citet{pmlr-v89-kallus19a, kallus2020confounding,Kallus2021,namkoong2020off}. The second approach is to assume access to other auxiliary variables that can enable point or partial identification. We adopt the
second approach in this work, by assuming the access to instrumental variables. 
Under an additive UC assumption (see~Eq.\ \ref{eq:modelbegin}), 
instrumental variables can enable \textbf{point identification} of the structural quantity through conditional moment restriction (along with certain completeness assumptions), allowing us to work with continuous actions and continuous IVs. For example, in the NICU application, differential travel time (the IV) is a continuous quantity. Note that  several other related works also study the use of instrumental variables~\citep{Pu_2021,chen2021estimating}.
These works, and in particular the work by \cite{chen2021estimating},
rely on partial identification bounds in the fully nonparametric IV setting \citep{manski1990nonparametric, balke1994counterfactual}. These bounds are only available for binary IVs or binary treatments, restricting the use of their algorithms in many real-world scenarios where the IV is continuous. A continuous IV like the differential travel time must be dichotomized if one were to apply these algorithms.

\textbf{Dynamic treatment regime (DTR)}
DTRs \citep{Murphy2003, Chakraborty2013, Chakraborty2014} are a popular model for sequential decision making. DTR learning differs from RL in that it does not require the Markov assumption and the quantity of interests is an optimal adaptive dynamic policy that makes its decision based on all information available prior to the decision point. However, unobserved confounding is often expected in observational data, and yet few works handle UCs in DTR learning. A concurrent work by \cite{chen2021estimating} study the policy improvement problem in the presence of UCs, using partial identification results of causal quantities with IVs \citep{manski1990nonparametric, balke1994counterfactual}. However, these identification results often apply to binary treatments or binary IVs, restricting their use in many real-world scenarios where the IV is continuous. In our work, the transition function is point-identified under the additive UC assumption. This enables us to work with continuous actions and continuous IVs.

\textbf{RL in the presence of UCs.} \citet{zhang2016markov} formulate
the MDP with unobserved confounding using the 
language of structural causal models. \citet{lu2018deconfounding} study a model-based RL algorithm
in a combined online and observation setting. They propose a structural 
causal model for the confounded MDP and estimate the structural 
function with neural nets using the observational data. \citet{buesing2018woulda} propose a model-based RL algorithm
in the evaluation setting that learns the optimal policy for a partially 
observable Markov decision process (POMDP). \citet{pmlr-v97-oberst19a} propose a class of 
structural causal models (SCMs) for the data generating process of POMDPs and then discuss identification of counterfactuals of trajectories in the SCMs. \citet{tennenholtz2019off} study offline policy evaluation in POMDP. Their identification strategy relies on the identification results of proxy variables in causal inference \citep{miao2018identifying}.
\citet{zhang2019near,pmlr-v119-zhang20a} study the dynamic treatment regime and propose an algorithm to recover 
optimal policy in the online RL setting that is based on partial identification bounds
of the transition dynamics, which they use to design an online RL algorithm. 
\citet{namkoong2020off} study offline policy evaluation when UCs affect only one of the many decisions made. They work with a partially identified model and construct partial identification bounds of the target policy value. \citet{andrew2020offpolicy} study off-policy evaluation in infinite horizon. Their method relies on estimation of the density ratio of the behavior policy and target policy through a conditional moment restriction. \citet{kallus2020confounding} study off-policy evaluation in infinite horizon. 
They characterize the partially identified set of policy values and compute bounds on such a set. \citet{NEURIPS2018_3a09a524, Kallus2021} study policy improvement using sensitivity analysis.

\textbf{Primal-dual estimation of nonparametric IV (NPIV)} Typical nonparametric approaches to IV regression include smoothing kernel estimators and sieve estimators \citet{newey2003instrumental,carrasco2007linear, chen2018optimal, darolles2011nonparametric}, and very recently, reproducing kernel Hilbert space-based estimators \citet{singh2019kernel, muandet2019dual}. However, traditional nonparametric methods are not scalable and thus not suitable for modern-day RL datasets. 

Our proposed method builds on a recent line of work
that investigates primal-dual estimation of NPIV
\citep{dai17a, lewis2018adversarial,NIPS2019_8615,muandet2019dual,dikkala2020minimax, liao2020provably}.

This paper differs from previous works in primal-dual estimation of NPIV in two aspects. First, we solve the NPIV problem through a stochastic approximation (SA) approach \citep{robbins1951stochastic}. The SA approach is an online procedure in the sense it updates the estimate upon receiving new data points. This is a more desirable framework for practical RL applications. For example, in business application of RL, data is logged following business as usual, streaming into the database system. New technology such as wearable devices allows real-time collection of health information, medical decisions and their associated outcomes. Faced with large amounts of data, practitioners typically prefer algorithms that process new data points in real time; see Remark~\ref{rm:savssaa} for a detailed comparison with the sample average approximation approach. Our stochastic approximation approach to NPIV problem tackles computational error and statistical error jointly and is well-suited for streaming data.

Second, despite that the stochastic saddle-point problem is not strongly-convex-strongly-concave, we show a fast rate of $O(1/T)$ can be attained by a simple stochastic gradient descent-ascent algorithm.


\subsection{Notation}\label{sec:notation}
We use $\|\cdot\|_2$ to denote the $\ell_2$-norm of a vector or the spectral norm of a matrix, and use $\|\cdot\|_F$ to denote the Frobenius norm of a matrix. For vectors $a,b$ of the same length, let $a\cdot b$ denote the inner product. We denote by $\Delta(\cM;\cN)$ the set of distributions on $\cM$ indexed by elements in $\cN$. For a real symmetric matrix $A$, let $\sigma_\text{max}(A)$ and $\sigma_\text{min}(A)$ be its largest and smallest eigenvalues, respectively. For any positive integer $n$, we define $[n]=\{1,\dots,n\}$.
For any bounded function $\varphi\colon \cX\to \RR^{d_\varphi}$, we define the linear function space spanned by $\varphi$ as $\cH_\varphi = \{\theta\cdot \varphi \colon \theta \in \R^{ d_\varphi} \}$. For any function $f=\theta \cdot \varphi \in \cH_\varphi$, we denote by $\|f\|_{\varphi}=\| \theta \|_{2}$ its norm.

\section{Problem Setup} \label{sec:problemsetup}
We formulate the problem in this section. 
We first define instrumental variables (IVs) in \cref{sec:defiv} as a preliminary.
In \cref{sec:onlinesetting}, we describe the \textit{evaluation setting}, where we test the performance of our learned policy. 
In \cref{sec:offlinedata}, we describe the \textit{observation setting} in
which we collect the observational data to learn a policy. 
Our goal is then to recover the optimal policy for the evaluation setting, using only data collected 
in the observation setting.

\subsection{Preliminaries: Instrumental Variables} \label{sec:defiv}

We define confounders and IVs as follows.
\begin{definition}[Confounders and Instrumental Variables, \citealt{pearl2009causality}]\label{def:iv}
A variable $\varepsilon$ is a confounder relative to 
the pair $(X, Y)$ if $(X,Y)$ are both caused by $\varepsilon$. 
A variable $Z$ is an IV relative to the pair $(X, Y)$, 
if it satisfies the following two conditions: (i) $Z$ is independent of all 
variables that have influence on $Y$ and are not mediated by $X$; (ii) $Z$ is not independent of $X$.
\end{definition}
Figure~\ref{fig:confoundedmcp} (left panel) illustrates a typical causal directed acyclic graph (DAG) for an IV,
where $Z$ is the IV relative to the pair $(X,Y)$, and $\varepsilon$ is the UC relative to the pair $(X,Y)$. The the DAG in Figure~\ref{fig:confoundedmcp} (left) 
can also be characterized by $X=g(Z,\varepsilon)$ and $Y=f(X,\varepsilon)$ given independent $Z$ and $\varepsilon$, 
where $f$ and $g$ are two deterministic functions.

\subsection{CMDP-IV}
\label{sec:cmdp-iv}

We first introduce a type of finite-horizon Markov Decision Process (MDP)
in the observation setting with UCs and IVs, 
which we term \textit{Confounded Markov Decision Process with Instrumental Variables} (CMDP-IV). CMDP-IV is a natural extension of the IV model introduced in \cref{sec:defiv} to the multi-stage decision making process.

A CMDP-IV is defined as a tuple $M=(\cS,\cA,\cZ,\cU, H, r; \xi_0, \cP_e, \cP_z,F^*,\pi_b)$, 
where the sets $\cS \subseteq \R^{d_x}$ and $\cA$ are state and action spaces;
the set $\cZ \subseteq \RR^{d_z}$ is the space of IVs;
the set $\cU \subseteq \R^{d_x}$ is the space of UCs;
the integer $H$ is the length of each episode;
and $r = \{r_h:\cS\times\cA\to [0,1]\}_{h=1}^H$ is the set of deterministic reward functions, 
where $r_h$ is the reward function at the $h$-th step. 
For simplicity of presentation, 
we assume that the reward function $r_h$ is known for any $h\in [H]$. 
Furthermore, $\xi_0 \in \Delta(\cS)$ is the initial state distribution,
$\cP_e = \mathcal{N}(0,\sigma\sq I_{d_x})$ is the distribution of UCs, 
and $\cP_z$ is the distribution of IVs. The function $F^*:\cS\times \cA\to \cS$ is a 
deterministic transition function and  
$\pi_b = \{ \pi_{b,h}\in\Delta(\cA;\cS,\cZ,\cU)\}_{h=1}^H$ is the behavior policy,
where $\pi_{b,h}$ is the behavior policy at the $h$-th step.

\subsubsection{Evaluation setting: Bellman Equations and Performance Metric} \label{sec:onlinesetting}




We now introduce the evaluation setting of CMDP-IV. The evaluation setting is the same as the usual RL setup \citep{sutton2018reinforcement}: we want to find an optimal policy in the MDP.

For a policy $\pi = \{ \pi_h \in \Delta(\cA;\cS)\}_{h=1}^H$, given an initial state $x_1\sim \xi_0$, for any $h\in [H]$, 
the dynamics in an evaluation setting at the $h$-th step is
\# \label{eq:modelonlinebegin}
a_h \sim \pi_h(\cdot \cond x_h)\;,\quad
x_{h+1} = F^*(x_h,a_h) + e_h \;, 
\#
where $\{ e_h\}_{h=1}^H \stackrel{\iid}{\sim} \cP_e$ is the sequence of Gaussian innovations. The episode terminates if we reach the state $x_{H+1}$.
For simplicity, for any $F:\cS\times\cA\to\R^{d_x}$ we define the following transition kernel
\# \label{eq:defguasstranskernel}
\cP_F(\cdot \cond x_h,a_h) = \mathcal{N}(F(x_h,a_h), \sigma\sq I_{d_x})\;.
\#
We define the value function and the Q-function of a policy under the evaluation setting \cref{eq:modelonlinebegin}. For any $h\in [H]$, given any policy $\pi_h$ at the $h$-th step, we define its value function $V^{\pi}_h: \cS \rightarrow \RR$ and its Q-function $Q^{\pi}_h: \cS \times \cA \rightarrow \R$ as follows, 
\#\label{eq:vfunc}
  V^{\pi}_h(x) \defeq \EE_{\pi} \Bigl[ \sum_{i=h}^H r_i(x_i,a_i) \,\Big|\, x_h=x \Bigr]
  \;,\,\,\,
  Q_{h}^{\pi}(x, a) \defeq\mathbb{E}_{\pi}\Bigl[\sum_{i=h}^{H} r_{i}\left(x_{i}, a_{i}\right) \Big| x_{h}=x, a_{h}=a\Bigr]
  \;.
\#
Here, the expectation $\EE_{\pi}$ is taken with respect to the randomness of the state-action sequence $\{(x_i,a_i)\}_{i=h}^H$, where the action $a_i$ follows the policy $\pi_i(\cdot \,|\,x_i)$ and the next state $x_{i+1}$ follows the transition kernel $\cP_{F^*}(\cdot\,|\,x_i,a_i)$ defined in \cref{eq:defguasstranskernel} for any $i \in \{ h, h+1,  \ldots, H\}$. 

An optimal policy $\pi^*$ gives the optimal value $V_{h}^{*}(x)=\sup _{\pi} V_{h}^{\pi}(x)$ for any $(x,h)\in\cS \times [H]$. We assume that such an optimal policy $\pi^*$ exists.
For a given policy $\pi = \{ \pi_h \in \Delta(\cA;\cS)\}_{h=1}^H$, 
its suboptimality compared to the optimal policy $\pi^* = \{ \pi_h^* \}_{h=1}^H$ is defined as \footnote{We should use $\operatorname{esssup}$ to be more measure-theoretically rigorous.} 
\# \label{eq:subopt}
\| V^*_1 - V^\pi_1  \|_\infty \coloneqq \sup_{x\in \cS} V^*_1(x) - V^\pi_1(x)\;. \;
\#
We describe the Bellman equation and the Bellman optimality equation for the evaluation setting. For any $(x, a, h) \in \mathcal{S} \times \mathcal{A}\times [H]$, 
the Bellman equation of the policy $\pi$ takes the following form, 
\$
 Q^{\pi}_h(x,a) = (r_h + \mathbb{P} V^{\pi}_{h+1})(x,a) \;,\quad
 V^{\pi}_h(x) = \la Q^{\pi}_h(x, \cdot),  \pi_h(\cdot \given x) \ra_\cA \;,\quad
V^\pi_{H+1}(x) = 0\;,
\$
where $\la Q^{\pi}_h(x, \cdot), \pi_h(\cdot \given x) \ra_\cA 
= \int_\cA Q_h^\pi(x, a) \pi_h( \ud a\given x)$
and $\mathbb{P}$ is the operator form of the transition kernel $\cP_{F^*}$, i.e., defined as
$( \P f)(x,a) = \EE_{x'\sim \cP_{F^*}(\cdot\,|\,x,a)}[ f(x')  ]$
for any function $f:\cS \rightarrow \R$. 
The subscript $\cA$ is omitted subsequently 
if it is clear from the context.
Similarly, the Bellman optimality equation takes the following form, 
\#\label{eq:fu-opt-bell}
Q_{h}^{*}(x,a)= (r_{h}+\mathbb{P} V_{h+1}^{*})(x,a)
\;,\quad
V_{h}^{*}(x)=\max _{a \in \mathcal{A}} Q_{h}^{*}(x, a)
\;,\quad
V_{H+1}^{*}(x)=0
\;, 
\#
which implies that to find an optimal policy $\pi ^*$, 
it suffices to estimate the optimal Q-function
and then construct the greedy policy with respect to the optimal Q-function.

\subsubsection{Observation Setting: Data Collection Process} \label{sec:offlinedata}

We describe the observation setting of CMDP-IV, in which we collect the data by executing the behavior policy $\pi_b \in \Delta(\cA;\cS, \cZ,\cU)^H$. This distinguishes our work from most works in offline RL since we need to handle the issue of unobserved confounders, which makes the already difficult offline RL problem even more challenging.

At the beginning of each episode, the environment  generates an initial state $x_1\sim\xi_0$,  a sequence of UCs $\{e_h\}_{h} \stackrel{ \iid}{\sim} \cP_e$, and a sequence of observable IVs $\{z_h\}_{h} \stackrel{\iid}{\sim}\cP_z$. At the $h$-th step, given the current state $x_h$, the action $a_h$ and the next state $x_{h+1}$ are generated according to the following dynamics, 
\#
\label{eq:modelbegin}
a_h  \sim \pi_{b,h}(\cdot \cond x_h,z_h,e_h)\;,\quad
 x_{h+1}= F^*(x_h, a_h) + e_h\;.
\#
The episode terminates if we reach the state $x_{H+1}$
and we collect all observable variables, 
i.e., $\{ (x_h, a_h, z_h, x_h')  \}_{h\in [H]}$, where $x_h' = x_{h+1}$ for any $h\in [H]$. 
\change{
    \begin{assumption}
        \label{as:dynamic}
    The collection of random variables $\{ e_1,\dots, e_H, z_1, \dots, z_H, x_1 \}$ are independent.
    Moreover, we assume the marginal distribution of confounders are identical, and that and the marginal distribution of confounders of instruments are identical, i.e., $e_h \sim \cP_e$, $z_h \sim \cP_z$ for all $h\in[H]$.
    \end{assumption}
}
A causal DAG is given in Figure~\ref{fig:confoundedmcp} (right) to graphically illustrate 
such dynamics. At any stage $h$, the variable $z_h$ is an IV relative to the pair $(a_h,x_{h+1})$.  Indeed, $z_h$ affects the action $a_h$ only through \cref{eq:modelbegin}, and its effect on $x_{h+1}$ must be channelled through $a_h$ because it does not appear in the second equation in \cref{eq:modelbegin}.

The main difference between the evaluation setting \cref{eq:modelonlinebegin} and the observation setting \cref{eq:modelbegin} is whether the UC $e_h$ has an effect on the action $a_h$. 
In the language of causal inference \citep{pearl2009causality}, a policy $\pi = \{ \pi_h\in \Delta(\cA;\cS) \}_{h=1}^H$ induces the stochastic intervention $\doo(a_1\sim \pi_1(\cdot \cond x_1), \dots, a_H\sim\pi_H(\cdot \cond x_H))$ on the DAG in Figure~\ref{fig:confoundedmcp}~(left part of the right panel), and the resulting DAG is obtained by removing all arrows pointing into the action $a_h$; see Figure~\ref{fig:confoundedmcp}~(right part of the right panel).  \cref{sec:defofscm} includes more details on the do-operation. 



Under the observation and the evaluation settings 
described in Sections~\ref{sec:offlinedata}~and~\ref{sec:onlinesetting}, respectively, 
we aim to answer the following question:
\begin{center}
\emph{Given data collected from the confounded dynamics \cref{eq:modelbegin} 
in the observation setting, can we find a policy that minimizes the suboptimality defined in \cref{eq:subopt} in the evaluation setting? }
\end{center}

We now remark on the modeling assumptions.

\begin{figure}[t]
\fbox{   
\begin{tikzpicture}[scale = 1.225]
    \node (z) at (-1.3,1) [label=below:$Z$, circle, fill=black]{};
    \node (x) at (0,0) [label=below:$X$,circle, fill=black]{};
    \node (y) at (1.3,0) [label=below:$Y$,circle, fill=black]{};
    \node (u) at (1,1) [label=right:{$\varepsilon$}, circle, draw]{};
    \path (z) edge (x);
    \path (x) edge (y);
    \path (u) [dashed ] edge (x)
              [dashed] edge (y);
\end{tikzpicture}
}
\hfill
\fbox{ 
\begin{tikzpicture}[scale=.95]
        \node (s) at (-1.6,0) [label=below:{$x_h$}, circle, fill=black]{};
        \node (sp) at (1.6,0) [label=below:{$x_{h+1}$}, circle, fill=black]{};
        \node (u) at (1,1) [label=right:{$e_h$}, circle, draw]{};
        \node (z) at (-1,1) [label=left:{$z_h$}, circle, fill=black]{};
        \node (a) at (0,0) [label=below:{$a_h$}, circle, fill=black]{};
    \path (z) edge (a);
    \path (a) edge (sp);
    \path (s) edge (a);

        \path (s) [out=-45,in=220] edge (sp);

    \path (u) [dashed ] edge (a)
              [dashed] edge (sp);
\end{tikzpicture}
 \hspace{.5cm}
\begin{tikzpicture}[scale=.8]
        \node (s) at (-1.6,0) [label=below:{$x_h$}, circle, fill=black]{};
        \node (sp) at (1.6,0) [label=below:{$x_{h+1}$}, circle, fill=black]{};
        \node (u) at (1,1) [label=right:{$e_h$}, circle, draw]{};
        \node  (a) at (0,0) [label=above:{$\textstyle\doo(a_h)\sim \pi_h$}, circle, fill=black]{};
    \path (a) edge (sp);

    \path (u) 
              [dashed] edge (sp);
                      \path (s) [out=-45,in=220] edge (sp);
                      
\end{tikzpicture}
}
    \caption{Left panel: An illustration of Definition~\ref{def:iv} with one UC $\varepsilon$ and three observable variables $X$, $Y$, and $Z$. Right panel:
    \textit{Observation setting} of CMDP-IV with a behavior policy $\pi_b$ (left). \textit{Evaluation setting} of CMDP-IV with intervention induced by $\pi$ (right).}
    \label{fig:confoundedmcp}
\end{figure}

\begin{remark}[Generalization of Figure~\ref{fig:confoundedmcp}~(right panel)] We have made two simplifying assumptions. First, we assume $e_h$ only confounds the transition dynamics (the arrow from $a_h$ to $x_{h+1}$). The unobservables $e_h$ could also affect the action and the reward, or state and reward, or both. Second, we assume in each stage, $z_h$ and $e_h$ are generated in an i.i.d.\ manner and are independent of all other random variables in the MDP. In practice it is likely that the sequences $\{z_h\}$ and $\{ e_h\}$ exhibit temporal dependence. We focus on this simplified model because it captures the essence of IVs: a variable that affects $x_{h+1}$ only through the action $a_h$. In the work of \citet{andrew2020offpolicy} where the authors study policy evaluation with unobserved confounders, confounders are also assumed i.i.d.
\end{remark}

\begin{remark}[On additive noise assumption]
A more general version of this problem, which we leave for future work, would be the setting where the transition dynamics are of the form $x_{t+1} = F(x_h,a_h,e_h)$, in contrast to our additive Gaussian noise assumption. We remark non-identification is a key issue in the fully non-parametric model. Let us revisit the IV diagram presented in Figure \ref{fig:confoundedmcp}, which represents the simplest case of an IV with structural equations $Y=f(X,\varepsilon)$ and $X = g(Z,\varepsilon)$, with $Z\indep \varepsilon$. It is well-known that the conditional independence implied by the IV diagram is not enough to identify the causal effect of $X$ on $Y$ \citep{causal2012bareinboim, hunermund2019causal}. Roughly this means there exist two distributions of random variables $(X,Y,Z)$ that are compatible with the IV diagram, and yet the structural functions $f$ are different. One could instead work with a partially identified IV model, using bounds of the causal effects \citep{balke1994counterfactual, balke1997bounds,zhang2021bounding}.

\end{remark}

\begin{remark}[The challenge of UCs] \label{rm:challengeofconfounder}
 The challenge stems from the fact that the UC $e_h$
 enters both of the equations \cref{eq:modelbegin}.
 For ease of discussion, suppose
 that the behavior policy $\pi_b$ is deterministic.
 With slight abuse of notations, we denote by 
 $\pi_{b,h}:\cS\times\cZ\times\cU\to\cA$ the deterministic
 behavior policy at the $h$-th step for any $h\in [H]$. 
 Now, \cref{eq:modelbegin} writes 
 $a_h = \pi_{b,h}( x_h,z_h,e_h)$. We further assume 
 that the behavior policy $\pi_{b,h}(x,z,e)$ is invertible in the third argument $e$ for any $(x,z)\in \cS\times \cZ$, which allows us to define its inverse $\pi_{b,h}\inv: \cS\times \cZ\times \cA\to\cU$. Then, by substituting $e_h = \pi_{b,h}\inv(x_h,z_h,a_h)$ into \cref{eq:modelbegin}, we have
$
x_{h+1} = F^*(x_h, a_h) + \pi_{b,h}\inv (x_h,z_h,a_h).
$
By taking expectation conditioning on $(x_h,a_h)$, we obtain  
$
\E[x_{h+1}\cond x_h,a_h] = F^*(x_h,a_h) + \delta(x_h,a_h), 
$
where $\delta(x_h,a_h) \defeq \E [ \pi_{b,h}\inv (x_h,z_h,a_h)\cond x_h, a_h]$.
This indicates that
the true transition function $F^*$ 
cannot be obtained by simply regressing $x_{h+1}$ on $(x_h,a_h)$, 
since that would result in a biased estimate. 
\end{remark}

\begin{remark}[Global IVs and global UCs] 
    \label{rm:globalivuc}
{
    Our method directly extends to cases where, instead of a time-varying IV, we only have access to a global IV that affects all the actions taken on a trajectory simultaneously, e.g. a doctor's  preference to certain treatments. The reason is that the global IV, conditional on the past history, is also a valid IV for each time step, mimicking the structure of the time-varying IV. Specifically, having a global IV is equivalent to having $z_h = z$ for all $h$, i.e. all local IVs take the same value. Then, by the full independence between $\{e_h\}_h$ and $z$, the core requirement of the time-varying IV $\E[e_h\given z]=0$ still holds, and thus our result applies. }
    
    {
    We feel that policy learning would be difficult if the global confounder affects both actions and states. 
    In more detail, consider global UCs that affect all stages of decision making, and thus affect all states $x_h$ and actions $a_h$. While IV can deconfound the effects of global UCs on the actions $a_h$, it cannot deconfound their effects on the states $x_h,x_{h+1}$. The transition dynamics from $x_h$ to $x_{h+1}$ would depend on the global UCs. This dependence would limit the performance of the learned policy in evaluation settings if the evaluation transition dynamics from $x_h$ to $x_{h+1}$ does not depend on the global UCs in the same way. 
   
    Global confounders do not seem a natural extension in our additive dynamics.
    For example, suppose the dynamics for stage $h$ write
    \$
       a_h\sim \pi_b(\cdot\given x_h, z_h, e) \;, \quad x_{h+1}=F^*(x_h, a_h) + e \;,
    \$ 
    where the UC at each stage is identical and is denoted $e$. One can difference the sequence $\{x_h\}_h$, and obtain $x_{h+1}-x_h = F^*(x_h,a_h)-F^*(x_{h-1},a_{h-1})$, where the global UC is cancelled. Due to these considerations, we focus on the CMDP-IV setting in this work, which itself is a natural extension of the IV model introduced in \cref{sec:defiv} to the multi-stage decision making process.
    }
\end{remark}


\section{IV-Aided Value Iteration} \label{sec:algo}

How can an IV help us design an offline RL algorithm? To answer this question, we proceed by a model-based approach. We estimate the transition function $F^*$ first. And then any planning algorithm (value iteration in our case) can be used to recover the optimal policy under the evaluation setting.

\subsection{A Primal-Dual Estimand}
We observe that, thanks to the presence of IVs, the transition function $F^*$ is the solution of a conditional moment restriction (CMR). To estimate the transition function $F^*$ based on the CMR, we derive a primal-dual formulation of the CMR in \cref{sec:pdformulation}. 

\subsubsection{Conditional Moment Restriction}
\label{sec:cmr}

Following the confounded dynamics \cref{eq:modelbegin}, the behavior policy $\pi_b$ induces the distribution of the observable trajectories $\{x_h,a_h,z_h ,x'_h=x_{h+1}\}_{h=1}^H$. We denote by $d_{h,\pi_b}$ the distribution of the tuple $(x_h,a_h,z_h,x'_h) \in \cS\times\cA\times\cZ\times\cS$ at the $h$-th step for any $h\in[H]$, i.e., $d_{h,\pi_b}(x,a,z,x') $. 
We further define the \textit{average visitation distribution} as follows,
\# \label{eq:def:averagemixture}
& \bar d_{\pi_b}(x,a,z,x') \defeq \frac1H \cdot \sum_{h=1}^H d_{h,\pi_b}(x,a,z,x')
\#
for any $(x,a,z,x') \in \cS\times\cA\times\cZ\times\cS$.  
We denote by $\ltxa = \{ f\colon \cS\times\cA \to \R,~\E[f(x,a)\sq]<\infty\}$ 
the space of square integrable functions equipped with 
the norm $\| f\|^2_{\ltxa} = \E[f(x,a)\sq]$.
Similarly, we define $\ltz$ and the norm  $\| g\|^2_{\ltz} = \E[g(z)\sq]$.
The operator $\cT \colon \ltxa \to \ltz$ is defined as
\# \label{eq:defT}
(\cT f)(\cdot)=\E[f(x,a)\cond z = \cdot\,] \;.
\#

The following proposition states the conditional moment restriction (CMR)
implied by the IVs in the observational confounder dynamics \cref{eq:modelbegin}. See \cref{pf:lm:mixture} for the proof.

\begin{proposition}[CMR] \label{lm:mixture}
If $(x,a,z,x')$ is distributed according to the law $\bar d_{\pi_b}$, then for any $z\in \cZ$, 
\# \label{eq:thecondmoment}
\E[ F^*(x,a)\cond z] = \E[x' \cond z]\,.
\#
\end{proposition}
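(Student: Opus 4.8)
The plan is to reduce the conditional moment restriction \eqref{eq:thecondmoment} to the single structural identity $x' = F^*(x,a)+e$ that the offline dynamics \eqref{eq:modelbegin} supply, establish it at each step $h$ in isolation, and then verify it survives the averaging $\bar d_{\pi_b}=\frac1H\sum_h d_{h,\pi_b}$.

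\textbf{Step 1 (fixed step).} First I would fix $h\in[H]$ and work under $d_{h,\pi_b}$, the joint law of $(x_h,a_h,z_h,x_{h+1})$. Since $x_{h+1}=F^*(x_h,a_h)+e_h$ by \eqref{eq:modelbegin}, we have $x_{h+1}-F^*(x_h,a_h)=e_h$ identically, so it suffices to show $\E[e_h\mid z_h]=0$. In the data-collection process of \S\ref{sec:offlinedata} the innovations $\{e_i\}_{i}$ and the instruments $\{z_i\}_{i}$ are generated at the start of the episode as two independent i.i.d.\ draws, so $e_h\indep z_h$, and hence $\E[e_h\mid z_h]=\E[e_h]=0$ because $\cP_e=\mathcal{N}(0,\sigma^2 I_{d_x})$. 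Rearranging yields $\E[F^*(x_h,a_h)\mid z_h]=\E[x_{h+1}\mid z_h]$. The point to emphasize is that $a_h\sim\pi_{b,h}(\cdot\mid x_h,z_h,e_h)$ genuinely depends on $e_h$ — this is exactly the confounding that makes $\E[x_{h+1}\mid x_h,a_h]\ne F^*(x_h,a_h)$ in general, cf.\ Remark~\ref{rm:challengeofconfounder} — yet conditioning on $z_h$ alone is uninformative about $e_h$; this is the instrumental-variable mechanism in action.

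\textbf{Step 2 (averaging over steps).} Next I would observe that the $z$-marginal of $d_{h,\pi_b}$ equals $\cP_z$ for every $h$, because $z_h$ is drawn independently of everything realized before step $h$. Consequently the conditional law of $(x,a,x')$ given $z$ under $\bar d_{\pi_b}$ is the equal-weight mixture of those under the $d_{h,\pi_b}$,
\[
\bar d_{\pi_b}(x,a,x'\mid z)=\frac{\frac1H\sum_{h=1}^H d_{h,\pi_b}(x,a,z,x')}{\cP_z(z)}=\frac1H\sum_{h=1}^H d_{h,\pi_b}(x,a,x'\mid z),
\]
so integrating $F^*(x,a)$ and $x'$ against it and applying Step 1 termwise gives $\E_{\bar d_{\pi_b}}[F^*(x,a)\mid z]=\frac1H\sum_h\E_{d_{h,\pi_b}}[F^*(x,a)\mid z]=\frac1H\sum_h\E_{d_{h,\pi_b}}[x'\mid z]=\E_{\bar d_{\pi_b}}[x'\mid z]$, which is \eqref{eq:thecondmoment}. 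Integrability is harmless: $F^*\in\ltxa$ and $x'=F^*(x,a)+e$ with $e$ Gaussian, so both sides lie in $L^1$ (indeed $L^2$) under $\bar d_{\pi_b}$.

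\textbf{Where the difficulty lies.} There is no analytic hard part; all the care is probabilistic bookkeeping. The first thing to get right is $e_h\indep z_h$: one must track the generative order so that the history feeding $x_h$ and $a_h$ (past confounders, past instruments, and any internal randomization of $\pi_b$) does not secretly couple $e_h$ with $z_h$ — given the description in \S\ref{sec:offlinedata} this is immediate but should be stated explicitly. The second is Step 2: a conditional-moment identity need not survive mixing of distributions in general, and what rescues it here is precisely that all $H$ steps share the same instrument marginal $\cP_z$; a differently designed collection process (e.g.\ with $z_h$ depending on the history) would break this, so it is worth flagging as the boundary of the argument.
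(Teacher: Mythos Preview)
Your proof is correct and rests on the same two facts the paper uses: the structural identity $x'=F^*(x,a)+e$ together with $\E[e\mid z]=0$. The only difference is organizational. You establish the CMR at each step $h$ and then average, using that all steps share the common instrument marginal $\cP_z$ so the mixture conditional is the uniform mixture of the stepwise conditionals. The paper instead writes down the joint density $\bar d_{\pi_b,*}(x,a,z,e,x')$ explicitly (keeping the unobserved $e$), factors it as $\cP_z(z)\cP_e(e)\,p_x(x)\,\bar\pi(a\mid x,z,e)\,p_{x'}(x'\mid x,a,e)$ for an averaged state marginal $p_x$ and a weighted policy $\bar\pi$, and reads off that a draw from the mixture obeys exactly the single-step structural model $x'=F^*(x,a)+e$ with $z\indep e$. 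Your route is more direct for the proposition as stated; the paper's explicit factorization buys a little extra, namely that $\E[e\mid x,z]=0$ as well, which allows the instrument to be enlarged to $(x,z)$ without change to the analysis.
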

Proposition~\ref{lm:mixture} implies that the transition function $F^*$ satisfies the equation $\cT F^*=\E[x'\cond z]$, where the operator $\cT$ is defined in \cref{eq:defT}. Such an equation is a Fredholm integral equation of the first kind \citep{kress1989linear}. Given data collected from $\bar d_{\pi_b}$, we aim to estimate $F^*$ based on the CMR.

\subsubsection{A Primal-Dual Estimand} \label{sec:pdformulation}
We derive a primal-dual estimand for $F^*= [f_1^*, \dots, f_{d_x}^*]^\top$.
For any $i\in[d_x]$, by Proposition~\ref{lm:mixture},
$
\E[f^*_i(x,a)\cond z ] = \E[x'_i \cond z],
$
where $x'_i$ is the $i$-th element of the next state $x'$.
We find $f^*_i$ by solving the least-square problem
$ 
\min_{f_i\in \ltxa}  \tfrac12  \E[ (\E[f_i(x,a)\cond z ] - \E[x'_i \cond z])^2].
$
By Fenchel duality, the least-square problem admits a primal-dual formulation
\#\label{eq:fu31-2}
\min_{f_i\in \ltxa} \; \max_{u_i\in \ltz} \Bigg\{ \E\big[(f_i(x,a) - x'_i) u_i(z)\big] - \tfrac12 \E\big[u_i(z)\sq\big] \Bigg\} \,,
\#
where $u_i$ is the dual variable. To approximate the $L^2$ spaces, we introduce two known feature maps
\$\phi\colon \cS\times\cA\to \R^{d_\phi} \; ,\quad \psi\colon \cZ \to \R^{d_\psi} \;,\$ and let $\cH_\phi$ and $\cH_\psi$ denote the spaces
spanned by $\phi$ and $\psi$, respectively. 
For simplicity, we define the following uncentered covariance matrices
\begin{equation}
    \hspace*{-.25cm}   
\label{eq:defAB}
    \begin{aligned}
    A \defeq \E[ \psi(z) \phi(x,a) ^\top]
    \;,
    B \defeq \E[\psi(z)\psi(z)^\top]
    \;,
    C \defeq \E[x' \psi(z)^\top]
    \;,
    D \defeq \E[\phi(x,a)\phi(x,a)^\top]
    \;.
    \end{aligned}
\end{equation}
where the expectations are taken following $\bar d_{\pi_b}$. We replace the $L^2$ spaces in \cref{eq:fu31-2} by their finite-dimensional subspaces,
\$
    \min_{f_i\in \cH_\phi}\,\max_{u_i\in \cH_\psi} \Bigg\{ \E[(f_i(x,a) - x'_i) u_i(z)] - \tfrac12 \E[u_i(z)\sq] \Bigg\} 
\,, 
\$
which, in matrix form, writes
\# \label{eq:theminimax}
\min_{\theta_i\in \R^{d_\phi}} \; \max_{\omega_i\in \R^{d_\psi}} 
\Bigg\{ \omega_i^\top A \theta_i - b_i ^ \top \omega_i - \tfrac12 \omega_i^\top B\omega_i \Bigg\}
\,, 
\#
where $b_i  \defeq \E[x_i' \psi(z)]$
and $A$ and $B$ are defined in \cref{eq:defAB}. We address the 
approximation error incurred by such finite-dimensional approximation in \cref{sec:nonparacase}.
Now we collect \cref{eq:theminimax} for all coordinates $i\in[d_x]$, giving the key primal-dual estimand $\Wsad$ 
\#\label{eq:estimatorw}
\Wsad \defeq \argmin_{W} \max_{K} \, L(W, K)
\;,
\#
where 
$L(W,K)
     \defeq \tr(KAW^\top) - \tr(CK^\top) - \tfrac{1}{2}\tr(KBK^\top)$
with $W=[\theta_1,\dots, \theta_{d_x}]^\top\in \RR^{d_x\times d_\phi}$
and $K=[\omega_1,\dots, \omega_{d_x}]^\top\in \RR^{d_x\times d_\psi}$. For appropriately chosen feature maps we expect~$\Wsad \phi \approx F^*$.


\subsection{Algorithm}\label{sec:algorithm}

We first introduce the following data sampling assumption for the algorithm.

\begin{assumption}[Observation data]\label{as:iiddata}
    We have access to i.i.d.~data from the average visitation distribution 
    defined in \cref{eq:def:averagemixture}. That is,
    $\{(x_t,a_t,z_t,x'_t)\}_{t=0}^{T-1} \stackrel{\iid}{\sim} \bar d_{\pi_b}$.
    \end{assumption}
    
    \ass~\ref{as:iiddata} is only used to simplify the presentation of our results, 
    by ignoring the temporal dependence in the data.

Algorithm~\ref{algo} introduces the backbone of the paper, IV-aided Value Iteration (IVVI),
which recovers the optimal policy under the evaluation setting 
given data collected from the confounded dynamics 
under the observation setting. Algorithm~\ref{algo} 
consists of the following two phases.

\begin{algorithm}[t]
\caption{IV-aided Value Iteration (IVVI)}
\begin{spacing}{1}
\begin{algorithmic}[1]
\STATE \textbf{Input:} 
Reward functions $\{r_h\}_{h=1}^H$, 
feature maps $\phi$ and $\psi$, 
iterations $T$, 
stepsizes $\{\eta_t^\theta, \eta_t^\omega\}_{t=1}^T$, 
initial estimates $K_0$ and $W_0$, variance $\sigma^2$, samples $\{(x_t,a_t,z_t,x'_t)\}_{t=0}^{T-1}$ in \ass~\ref{as:iiddata}. 
\STATE \textbf{Phase 1 (Estimation of $\Wsad$ in Eq. \ref{eq:estimatorw})} 
\FOR{$t=0,1,\ldots, T-1$} \label{algo:beginsgda} \label{algo:phase1begin}
\STATE $\phi_t\gets \phi(x_t,a_t)\;,$ $\psi_t\gets \psi(z_t)\;.$ 
\STATE $W_{t+1}\gets  W_{t}-\eta^\theta_t \cdot (K_t \psi_t \phi_t ^\top)\;,\quad K_{t+1}\gets K_t + \eta^\omega_t\cdot (K_t \psi_t \psi_t ^\top + x'_t \psi_t ^\top  -W_t\phi_t \psi_t ^\top)\;.$\label{algo:primalupdate}
\ENDFOR\label{algo:endsgda}
\STATE \textbf{Phase 2 (Value iteration)}   \label{algo:phase1end}

\STATE{$\hat V_{H+1}(\cdot) \leftarrow 0$,\quad $\hat W \gets W_{T}$.} \label{algo:initvhp}
\FOR{$h=H,H-1,\ldots,1$}  \label{eq:loop-s}
\STATE $\hat{Q}_h(\cdot, \cdot)   \leftarrow r_h(\cdot,\cdot) + \int_\cS \hat V_{h+1} (x')\cP_{\hat W}(\diff x'\cond \cdot,\cdot)\;.$ \label{algo:integral} 
\STATE $\hat \pi_h( \cdot) \leftarrow \argmax_{{a}} \hat Q_{h}(\cdot,a),\quad\hat V_h (\cdot) \leftarrow \max_a \hat Q_h(\cdot, a)\;.$  \label{algo:max1}
\ENDFOR \label{algo:lsviend} \label{eq:loop-t}
\STATE \textbf{Output:} $\hat \pi =\{ \hat \pi_h\}_{h=1}^H\;.$
\end{algorithmic}\label{algo}
\end{spacing}
\end{algorithm} 
\noindent\textbf{Phase 1.} 
In Lines~\ref{algo:phase1begin}--\ref{algo:phase1end} of Algorithm~\ref{algo}, we solve \cref{eq:estimatorw} using stochastic gradient descent-ascent. 
At the $t$-th iteration, we have
$
\frac{\partial L}{\partial W} = K_t A,  \frac{\partial L}{\partial K} = -( K_t B+C -W_t A^\top),
$
which combined with the definitions of $A$, $B$, and $C$ in \cref{eq:defAB}, 
gives us the updates of $W_{t+1}$ and $K_{t+1}$ in Line~\ref{algo:primalupdate}, respectively. 

\noindent\textbf{Phase 2.} Given the estimated matrix $\hat W$
generated from Phase~1, in Lines~\ref{algo:initvhp}--\ref{algo:lsviend} of Algorithm~\ref{algo},
we implement value iteration to recover an optimal policy for the evaluation setting. 
In the optimality Bellman equation \cref{eq:fu-opt-bell}, we replace the true transition operator $\PP$ with the estimated transition operator induced by $\hat W$, i.e., 
$
\hat Q_{h} (x, a) = r_{h}(x,a) + (\hat\PP \hat V_{h+1}) (x, a),
$
for any $(x,a)\in \mathcal{S} \times \mathcal{A}$. Here, $\hat\PP$ is the operator form of $\cP_{\hat W}\defeq\cP_{\hat W \phi}$, such that $(\hat\P f)(x, a) = \EE_{x'\sim \cP_{\hat W }(\cdot \given x,a)}[f(x')]$ for any $f\colon \cS\to \RR$.

We remark that to efficiently implement the integration and maximization in Phase 2 of Algorithm~\ref{algo}, one can use Monte Carlo integration and gradient methods, respectively. 

\section{Theory}\label{sec:theory}

We first introduce two assumptions on the feature maps $\phi$ and $\psi$.
\begin{assumption}[Bounded feature maps]  \label{as:boundedfeature}
    We have $\| \phi(x,a)\|_2 \leq 1$ and $ \| \psi(z) \|_2 \leq 1$ for any $(x,a,z)\in \cS\times\cA\times \cZ$. 
    \end{assumption}



\begin{assumption}[Nondegenerate feature maps] \label{as:nonddualfeature}
It holds that $\operatorname{rank}(A)=d_\phi$ and $\operatorname{rank}(B)=d_\psi$ 
for $A$ and $B$ defined in \cref{eq:defAB}.
\end{assumption}

\textbf{Uniqueness of $\Wsad$.}\quad
\cref{as:nonddualfeature} implies the minimax problem \cref{eq:estimatorw} admits a unique solution.
In the min-max problem \cref{eq:estimatorw}, for a fixed primal variable $W$, 
the unique maximizer $K^*(W)$ of the inner problem in 
takes the form $
K^*(W)\defeq (WA^\top - C)B\inv$.
This holds by the invertibility of $B$, whose minimum eigenvalue is now denoted by $\mu_B \defeq \sigma_{\min}(B) > 0$. Plug in this optimal value we have $\max_K L(W,K) = \tfrac12 \tr[(WA^\top - C)B\inv (WA^\top - C)^\top]$. By full-rankness of $A$ we know $\Wsad$ is the unique minimizer of the map $W \mapsto \max_K L(W,K)$.

\textbf{Instrument Strength.}\quad
\ass~\ref{as:nonddualfeature} implicitly impose sufficient correlation between $\phi(x,a)$ and $\psi(z)$. In other words, IVs needs to be strong to have enough explanatory power
for the behavior policy $\pi_b$. 
Weak IV is a well-known pitfall in applied economic research \citep{angrist2008mostly}.
For RL applications with confounded data, practitioners should take into account domain knowledge
of the behavior policy to avoid using weak IVs. We introduce a quantity $\muIV$, which quantifies the strength of IVs. 
We define the IV strength $\muIV$ as follows, 
\#\label{eq:mupdef}
\muIV  \defeq \inf \Bigg\{  \frac{\| \Pi_\psi \cT f\|_{\ltz}^2 }{\|f \|_{\phi}^2 } \,\bigg|\, f\in \Hphi, \|f \|_{\phi}\neq 0  \Bigg\}
\;, 
\#
where $\Pi_\psi$ is the projection operator onto
the space $\Hpsi$, i.e., 
$\Pi_\psi u = \argmin_{u'\in \Hpsi} \| u - u'\|_{\ltz}^2$ for any $u \in \ltz$. 
The definition of $\muIV$ in \cref{eq:mupdef} mimics the
notion of \textit{sieve measure of ill-posedness}  well-known in the literature on
NPIV as a measure of IV strength \citep{blundell2007semi, chen2018optimal}. 
We next show $\muIV$ admits a simple expression.
\begin{proposition} \label{lm:mupisillposenss}
Let \ref{as:nonddualfeature} hold. Then $\muIV = \sigma_{\operatorname*{min}}(A^\top B\inv A)\,.$
\end{proposition}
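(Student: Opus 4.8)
The plan is to unwind the definition of $\muIV$ in \eqref{eq:mupdef} and express everything in terms of the covariance matrices $A$, $B$, $D$ of \eqref{eq:defAB}. First I would write $f = \theta\cdot\phi \in \Hphi$ so that $\|f\|_\phi^2 = \|\theta\|_2^2$. The key is to identify $\Pi_\psi \cT f$, i.e., the $L^2(\cZ)$-projection of $\cT f$ onto $\Hpsi$. Since $\cT f(z) = \E[f(x,a)\mid z] = \E[\theta\cdot\phi(x,a)\mid z]$, the projection onto $\Hpsi = \{\omega\cdot\psi : \omega\in\R^{d_\psi}\}$ is the usual population least-squares regression: writing $\Pi_\psi \cT f = \omega_\star\cdot\psi$, the normal equations give $B\,\omega_\star = \E[\psi(z)\,\cT f(z)] = \E[\psi(z)\,\phi(x,a)^\top]\theta = A\theta$ (using the tower property to drop the conditional expectation against $\psi(z)$). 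By Assumption~\ref{as:nonddualfeature}, $B$ is invertible, hence $\omega_\star = B\inv A\theta$.

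Next I would compute the squared norm: $\|\Pi_\psi \cT f\|_{\ltz}^2 = \omega_\star^\top B\,\omega_\star = \theta^\top A^\top B\inv B B\inv A\theta = \theta^\top A^\top B\inv A\theta$. Therefore
\#
\muIV = \inf_{\theta\neq 0} \frac{\theta^\top (A^\top B\inv A)\theta}{\theta^\top\theta} = \sigma_{\min}(A^\top B\inv A),
\#
where the last equality is the Rayleigh quotient characterization of the smallest eigenvalue of the symmetric positive semidefinite matrix $A^\top B\inv A$ (positive semidefiniteness follows since $B\inv \succ 0$). This gives the claimed identity. One should also note that $\rank(A)=d_\phi$ from Assumption~\ref{as:nonddualfeature} forces $A^\top B\inv A \succ 0$, so $\muIV > 0$, which is consistent with the earlier discussion of IV strength (though the statement as written only asserts the identity).

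The main technical point to get right — rather than a genuine obstacle — is the justification that the $L^2(\cZ)$-projection of $\cT f$ onto the finite-dimensional subspace $\Hpsi$ coincides with the linear regression solution $\omega_\star\cdot\psi$ with $\omega_\star = B\inv A\theta$; this is standard Hilbert-space projection theory (the residual $\cT f - \omega_\star\cdot\psi$ is orthogonal to every $\psi_j$), but it is worth stating the normal equations explicitly and invoking the tower property $\E[\psi(z)\,\E[\phi(x,a)\mid z]^\top] = \E[\psi(z)\phi(x,a)^\top] = A^\top$ carefully, since $\cT f$ is only defined $\bar d_{\pi_b}$-almost everywhere in $z$. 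Everything else is a one-line Rayleigh-quotient argument.
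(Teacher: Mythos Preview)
Your proposal is correct and follows essentially the same route as the paper: identify $\Pi_\psi \cT f = (B\inv A\theta)\cdot\psi$ via the projection/normal equations, compute $\|\Pi_\psi \cT f\|_\ltz^2 = \theta^\top A^\top B\inv A\,\theta$, and conclude by the Rayleigh-quotient characterization of the minimum eigenvalue. The paper's proof is terser on the projection step (it simply writes down $\Pi_\psi \cT f$) but is otherwise identical.
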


\subsection{Parametric Case}

We impose the following assumptions on the transition function $F^*$ and the conditional expectation operator $\cT$.  

\begin{assumption}[Linear representation] \label{as:spec} 
It holds $F^*=W^*\phi$ for some $W^* \in \R^{d_x \times d_\phi}$.
\end{assumption}
Such a linear form of the transition function $F^*$ is commonly assumed in the literature \citep{sham2020information, mania2020active} in 
the context of dynamical system identification. 
\begin{assumption}[Realizability] \label{as:dualapprox}
    For all $f\in \Hphi$, it holds that  $\cT f\in \Hpsi$.
\end{assumption}

\begin{proposition} \label{prop:wstartequalwsad}
    Let \ref{as:nonddualfeature}, \ref{as:spec} and \ref{as:dualapprox} hold. Then $W^* = \Wsad$.    
\end{proposition}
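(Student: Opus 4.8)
The plan is to reduce the claim to a single matrix moment identity. First I would recall the reasoning in the paragraph ``Uniqueness of $\Wsad$'': under Assumption~\ref{as:nonddualfeature}, eliminating the dual variable gives $\max_K L(W,K) = \tfrac12\tr[(WA^\top - C)B\inv(WA^\top - C)^\top]$; call this function $\Phi(W)$, and recall $\Wsad$ is its \emph{unique} minimizer (full column rank of $A$ makes $\Phi$ strictly convex). Since $\muB = \sigma_{\min}(B) > 0$, the matrix $B\inv$ is positive definite, so writing $M := WA^\top - C$ and letting $m_i$ be the $i$-th row of $M$ we have $\Phi(W) = \tfrac12\sum_{i=1}^{d_x} m_i B\inv m_i^\top \ge 0$, with equality if and only if every $m_i = 0$, i.e.\ $WA^\top = C$. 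Hence it suffices to prove the identity $W^*A^\top = C$: this forces $\Phi(W^*) = 0 = \min_W \Phi(W)$, and by uniqueness of the minimizer $W^* = \Wsad$.

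To establish $W^*A^\top = C$ I would run a short tower-property computation. By Assumption~\ref{as:spec}, $F^* = W^*\phi$, and by \eqref{eq:defAB}, $A^\top = \E[\phi(x,a)\psi(z)^\top]$ with the expectation under $\bar d_{\pi_b}$; pulling the constant $W^*$ inside gives $W^*A^\top = \E[W^*\phi(x,a)\,\psi(z)^\top] = \E[F^*(x,a)\,\psi(z)^\top]$. Conditioning on $z$ and pulling out the $z$-measurable factor $\psi(z)^\top$ yields $W^*A^\top = \E\big[\,\E[F^*(x,a)\mid z]\,\psi(z)^\top\,\big]$. Now invoke the conditional moment restriction (Proposition~\ref{lm:mixture}): under $\bar d_{\pi_b}$, $\E[F^*(x,a)\mid z] = \E[x'\mid z]$. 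Substituting and applying the tower property once more, $W^*A^\top = \E\big[\,\E[x'\mid z]\,\psi(z)^\top\,\big] = \E[x'\psi(z)^\top] = C$, which is exactly what we needed.

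That is essentially the whole argument. The roles of the hypotheses are transparent: Assumption~\ref{as:spec} is what lets $W^*A^\top$ be rewritten as a moment of $F^*(x,a)$; the CMR of Proposition~\ref{lm:mixture} (valid thanks to the IV structure of \eqref{eq:modelbegin}) supplies $\cT F^* = \E[x'\mid z]$; and Assumption~\ref{as:nonddualfeature} is used both to guarantee $B\inv$ exists (so $\Phi \ge 0$) and to make $W \mapsto WA^\top$ injective, which upgrades ``$W^*$ attains $\min\Phi$'' to ``$W^* = \Wsad$''. Assumption~\ref{as:dualapprox} can be brought in to phrase the middle paragraph entirely inside $\Hpsi$, identifying $\cT f$ with its $\Pi_\psi$-projection, but it is not the load-bearing ingredient for this identity, since the inner products against $\psi(z)$ only ever see the $\Hpsi$-components of $\cT F^*$ and of $z \mapsto \E[x'\mid z]$, which coincide by the CMR. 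I do not expect a genuine obstacle; the only thing needing care is matching the transpose conventions between $A = \E[\psi(z)\phi(x,a)^\top]$ in \eqref{eq:defAB} and the coordinatewise least-squares derivation leading to \eqref{eq:theminimax}, so that ``$WA^\top = C$'' really is the zero-residual condition for $\Phi$.
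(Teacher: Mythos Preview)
Your argument is correct and is more direct than the paper's. The paper establishes this proposition inside Item~\ref{lm:obs:identification} of Lemma~\ref{lm:obs}: for the key step $W^*_i=\theta^{\sad}$ it invokes the general nonparametric bound of Theorem~\ref{lm:misspecification} and notes that, under Assumptions~\ref{as:spec} and~\ref{as:dualapprox}, both approximation errors $\eta_1$ and $\eta_2$ vanish, forcing $\|f^*_i - W^{\sad}_i\cdot\phi\|_{\ltxa}=0$. You instead verify the zero-residual condition $W^*A^\top=C$ directly via the tower property and the CMR of Proposition~\ref{lm:mixture}, and then appeal to the uniqueness of $\Wsad$ already recorded in the paper. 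This is essentially the same moment computation the paper performs \emph{afterward} to show $\hat\omega_i(\theta^{\sad})=0$ (namely $A\theta^{\sad}-b_i=\E[\psi(z)e_i]=0$), but you promote it to the main engine rather than a corollary. Your route is self-contained, avoids the detour through Theorem~\ref{lm:misspecification}, and, as you observe, shows that Assumption~\ref{as:dualapprox} is not actually load-bearing for the identity $W^*=\Wsad$; the paper's route, by contrast, genuinely uses \ref{as:dualapprox} to force $\eta_2=0$.
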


One important contribution of our work is that we quantify how the strength of the IV is playing a role in terms of recovering optimal policy from confounded data. 
We provide a sketch of the proof for Theorem~\ref{thm:convergence} in \cref{sketch}.
The complete proofs are given in \cref{pf:thm:convergence}.

\begin{theorem}[Parametric case] \label{thm:convergence} 

Let \ref{as:iiddata}--\ref{as:dualapprox} hold. 
There exists a choice for stepsizes in Algorithm~\ref{algo} of the form $\etathet={\beta}/{(\gamma+t)}$ and $\etaomet =\alpha \etathet$ for any $t\in [T]$, where 
$\alpha=c_1 \muIV\inv \muB^{-1.5}$, $\beta = c_2\muIV\inv$, $\gamma = c_3 \muIV^{-4}\muB^{-3.5}$, and $c_1, c_2, c_3$ 
are positive absolute constants, such that

    (i) the estimation error satisfies
     \# \label{eq:est_error}
    \E \big[\|W_T - W^* \|_F^2\big] \leq \frac{\nu}{\gamma + T}
    \;,
    \#
    where $\nu =  \max \{\gamma\Tilde{P}_0,\,c_4 \muIV^{-4}\muB^{-2.5} \cdot d_x \sigma^2\}$ and $\Tilde{P}_0 = \|W_0 - W^* \|_F\sq + \sqrt{\muB}\cdot \|K_0-K^*(W_0) \|_F\sq$ with $c_4$ being a positive absolute constant. And

    (ii) the planning error satisfies
    \# \label{eq:planning_error}
    \E \big[ \|V_1^* - V_1^{\hat \pi} \| _\infty \big] \leq  H \cdot \min \bigg \{ 2H\sigma\inv \sqrt{\frac{\nu}{\gamma + T}},1   \bigg\}
    \;.
    \#
    The expectation is taken over the data.
\end{theorem}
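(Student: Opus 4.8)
The plan is to decouple the two claims. For part (i), I would analyze the stochastic gradient descent-ascent (SGDA) recursion in Line~\ref{algo:primalupdate} as a two-timescale stochastic approximation scheme, where the dual variable $K_t$ runs on the fast timescale (stepsize $\eta_t^\omega = \alpha\eta_t^\theta$ with $\alpha$ large, of order $\muIV^{-1}\muB^{-1.5}$) and the primal variable $W_t$ on the slow timescale. First I would write the exact conditional expectations of the stochastic updates given $(W_t,K_t)$, recovering $\nabla_W L = K_t A$ and $\nabla_K L = -(K_t B + C - W_t A^\top)$ as stated, so the updates are unbiased gradient steps plus mean-zero noise whose second moment is bounded using Assumption~\ref{as:boundedfeature} and the Gaussian innovation variance $\sigma^2$ (this is where the $d_x\sigma^2$ term in $\nu$ enters, through $\E\|x_t'\psi_t^\top\|_F^2$). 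Then I would introduce the Lyapunov function $P_t \defeq \|W_t - W^*\|_F^2 + \sqrt{\muB}\,\|K_t - K^*(W_t)\|_F^2$, matching $\tilde P_0$ in the statement, and show it satisfies a one-step contraction of the form $\E[P_{t+1}\mid \mathcal F_t] \le (1 - c\,\eta_t^\theta \muIV)\,P_t + C'(\eta_t^\theta)^2 \muIV^{-2}\muB^{-2.5} d_x\sigma^2$. The key algebraic inputs here are: (a) $\|W A^\top - C\|$-type strong convexity of $W\mapsto \max_K L(W,K)$ with modulus $\muIV = \sigma_{\min}(A^\top B^{-1} A)$ via Proposition~\ref{lm:mupisillposenss}; (b) strong concavity of the inner problem with modulus $\muB$; and (c) control of the cross term $\|K_t - K^*(W_t)\|$ induced by the drift of $K^*(W_t)$ as $W_t$ moves, which is Lipschitz in $W_t$ with constant $\|A\|\muB^{-1} \le \muB^{-1}$. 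Balancing the timescales so that the fast variable tracks $K^*(W_t)$ accurately enough is precisely why $\alpha$ must scale like $\muIV^{-1}\muB^{-1.5}$ and why $\gamma$ must be taken as large as $\muIV^{-4}\muB^{-3.5}$ — it ensures $1 - c\eta_t^\theta\muIV \ge 0$ for all $t$ and absorbs the higher-order cross terms. Given the recursion $\E[P_{t+1}] \le (1 - c\beta\muIV/(\gamma+t))\E[P_t] + (\text{const})\beta^2\muIV^{-2}\muB^{-2.5}d_x\sigma^2/(\gamma+t)^2$ with $c\beta\muIV \ge 1$, a standard induction on $t$ (the $\beta=c_2\muIV^{-1}$ choice makes $c\beta\muIV$ an absolute constant $\ge 1$) gives $\E[P_T] \le \nu/(\gamma+T)$ with $\nu = \max\{\gamma\tilde P_0,\, c_4\muIV^{-4}\muB^{-2.5}d_x\sigma^2\}$; since $\E\|W_T - W^*\|_F^2 \le \E[P_T]$ this yields \eqref{eq:est_error}.

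For part (ii), I would use a model-error-to-planning-error reduction. Because both the true and estimated transition kernels are Gaussian with common variance $\sigma^2 I_{d_x}$ and means $F^*(x,a) = W^*\phi(x,a)$ (Assumption~\ref{as:spec}, and $W^* = \Wsad$ by Proposition~\ref{prop:wstartequalwsad}) and $\hat W\phi(x,a)$ respectively, I can bound the total-variation or KL distance between $\cP_{F^*}(\cdot\mid x,a)$ and $\cP_{\hat W}(\cdot\mid x,a)$ by a multiple of $\sigma^{-1}\|(W^* - \hat W)\phi(x,a)\|_2 \le \sigma^{-1}\|W^* - \hat W\|_F$ via Pinsker together with the closed-form Gaussian KL. Then, since value functions are bounded in $[0,H]$, a one-step-error telescoping over the $H$ stages of value iteration (the standard simulation-lemma argument: $\|V_1^* - V_1^{\hat\pi}\|_\infty \le \sum_{h=1}^H \|(\PP - \hat\PP)\hat V_{h+1}\|_\infty$, bounded by $H$ times the per-step model error times $\|\hat V_{h+1}\|_\infty \le H$) gives the deterministic bound $\|V_1^* - V_1^{\hat\pi}\|_\infty \le 2H^2\sigma^{-1}\|\hat W - W^*\|_F$, also trivially at most $H$ since values lie in $[0,H]$. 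Taking expectations, applying Jensen to pull the expectation inside the square root of \eqref{eq:est_error}, and using the min with $H$ to handle the regime where the bound is vacuous yields \eqref{eq:planning_error}.

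The main obstacle is part (i): establishing the uniform-in-$t$ linear contraction of the coupled Lyapunov function $P_t$ with the correct dependence on $\muIV$ and $\muB$. The difficulty is genuinely in the two-timescale coupling — the primal descent sees a biased gradient $K_t A$ rather than $K^*(W_t) A = \nabla(\max_K L)(W_t)$, so one must carefully cancel the discrepancy $\|(K_t - K^*(W_t))A\|$ against the contraction of the dual tracking error, while simultaneously accounting for the fact that the tracking target $K^*(W_t)$ itself drifts by $O(\eta_t^\theta\,\|\nabla\max_K L(W_t)\|\,\muB^{-1})$ each step. Getting the bookkeeping right so that all cross terms are dominated for the stated (large) choice of $\gamma$, and so that the noise floor comes out as exactly $\muIV^{-4}\muB^{-2.5}d_x\sigma^2$, is the technical heart of the argument; everything else (the unbiasedness and second-moment bounds on the SGDA noise, the Gaussian KL computation, and the value-iteration telescoping) is comparatively routine. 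I would expect to isolate the one-step inequality as a standalone lemma and then invoke a generic decreasing-stepsize recursion lemma to finish.
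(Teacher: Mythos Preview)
Your high-level plan matches the paper's proof almost exactly: the same Lyapunov function $P_t = \|W_t-W^*\|_F^2 + \sqrt{\muB}\,\|K_t-K^*(W_t)\|_F^2$, the same one-step contraction-plus-noise recursion, the same induction for~(i), and for~(ii) a Gaussian transition-kernel distance combined with a telescoping value-error decomposition. The paper uses a chi-squared/Cauchy--Schwarz bound (its Lemma~\ref{lm:gausdiff}) rather than Pinsker/KL, and an explicit three-term suboptimality decomposition (Lemma~\ref{lm:errordecomposition}) in place of a generic simulation lemma, but these are cosmetic differences leading to the same constants.

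There is, however, one genuine gap in your part~(i). You write that the SGDA updates are ``unbiased gradient steps plus mean-zero noise whose second moment is bounded using Assumption~\ref{as:boundedfeature} and the Gaussian innovation variance.'' This is false as stated: the stochastic gradients $K_t\psi_t\phi_t^\top$ and $(K_t\psi_t\psi_t^\top + x_t'\psi_t^\top - W_t\phi_t\psi_t^\top)$ depend linearly on the unbounded iterates $W_t,K_t$, and the algorithm performs no projection. The second moment of the noise is \emph{not} uniformly bounded, and the paper explicitly flags this (Remark~\ref{rm:oneovertrate}) as the reason standard SGDA analyses fail here. The fix, carried out in the paper's Lemma~\ref{lm:obs:varboound}, is to bound the conditional variance of each stochastic gradient by a constant multiple of $P_t$ itself plus the variance at the saddle point (the latter is what contributes the clean $d_x\sigma^2$ term, since $\omega^*=0$ and $\sigma_{\nabla\theta}^2=0$). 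This self-bounding structure is what lets the recursion close: the extra $O(\eta_t^2)P_t$ terms are absorbed into the contraction factor precisely when $\gamma$ is taken as large as $\muIV^{-4}\muB^{-3.5}$. Without making this explicit, your one-step inequality $\E[P_{t+1}\mid\cF_t]\le(1-c\eta_t^\theta\muIV)P_t + C'(\eta_t^\theta)^2\cdot(\text{const})$ cannot be established, and the argument does not go through.
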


For an appropriately chosen initial estimates $W_0$ and $K_0$, Theorem~\ref{thm:convergence} shows that the sample complexity 
needed to recover an $\epsilon$-optimal policy using observational data is of
order \$O(\muIV^{-4}\muB^{-2.5}\cdot H^4d_x \sigma \sq \epsilon^{-2})
\;,\$ where 
$\muIV$ characterizes IV strength, 
i.e., how well the IV is able to explain the behavior policy,  
$\mu_B$ quantifies the compatibility of the dual feature map and the IV, 
$H$ is the horizon of the MDP, and $d_x$ is the dimension of states. 
To the best of our knowledge, this is the first sample complexity 
result for recovering optimal policy using confounded data when a valid IV is present.

\begin{remark}[Joint computational and statistical efficiency] \label{rm:oneovertrate} 
The estimation procedure (phase 1) is readily a scalable algorithm, in contrast to estimators defined as the saddle-point of a finite-sum; see Remark~\ref{rm:savssaa}.
From an optimization perspective, 
the saddle-point problem \cref{eq:theminimax} is a stochastic convex-strongly-concave one, a case rarely investigated in the optimization literature.
The asymmetric structure in the primal and dual variables demands more detailed analysis of the algorithm in order to achieve a fast $O(1/T)$ rate.

We now review literature that studies convex-strongly-concave (CSC) stochastic saddle point problem. A slow rate $O(1/\sqrt{T})$ is obvious by the results for general stochastic convex-concave problem \citep{nemirovski2009robust}. The work of \cite{chambolle2011first} studies deterministic CSC problem with bilinear coupling and shows a $O(1/T\sq)$ rate. \citet{wang2017exploiting,du2017stochastic, du2019linear} consider CSC problem with finite sum structure and bilinear coupling structure, and shows a linear convergence rate by variance reduction techniques. In contrast, our algorithm solves stochastic CSC problem with linear coupling structure with a fast $O(1/T)$ rate without the need of projection. Moreover, the assumption of bounded variance of the stochastic gradient does not hold in our case, rendering most existing analysis invalid.

\end{remark}

\begin{remark}[Dependence on IV strength] 
In \cref{eq:planning_error}, for appropriately chosen 
initial estimates $W_0$ and $K_0$, only the second term in the
definition of $\nu$ matters. We are effectively solving $d_x$ NPIV problems, and 
the asymptotic order for solving just one NPIV problem is $O({\muIV^{-4}\muB^{-2.5}\sigma^2}{T\inv})$. 
The dependence on the dimension of feature maps
$d_\phi$ and $d_\psi$ is hidden in the minimum eigenvalues $\mu_B$ and $\muIV$. 
We compare our result with the work by
\citet{dikkala2020minimax} under \ref{as:dualapprox}.
There the proposed estimator is the saddle-point
of the sample version of \cref{eq:theminimax}; see Remark~\ref{rm:savssaa} for more details. 
In particular, they provide a bound in the $L^2$-norm,
and the order of the variance term is $O(\muIV^{-4} \max\{ d_\phi,d_\psi\} T^{-1})$~ \footnote{In Appendix D of \citet{dikkala2020minimax},
their $(\gamma_n, k_n,m_n)$ is the same as our $(\muIV\inv, d_\phi, d_\psi)$.}. 
The minimax optimal rate for NPIV problem is established 
in the work of \citet{blundell2007semi}, attained by sieve estimators. 
In comparison, the variance term in the minimax optimal rate is 
of order $O \big(\tilde\muIV^{-2} d_\psi T^{-1}\big)$~\footnote{In Theorem 2 of \citet{blundell2007semi}, their $(\tau_n,k_n)$ is the same as  our $(\tilde \muIV\inv, d_\phi)$.}, 
where $\tilde \muIV$ is the minimum nonzero singular value of $D^{-1/2}AB^{-1/2}$, quantifying the strength of an IV in a similar way to our $\muIV$.
\end{remark}

\begin{remark}[Dependence on horizon and state dimension] 
The work of \cite{sham2020information} provides a $\sqrt{T}$-regret bound for 
online learning of an additive nonlinear dynamics. Their regret bound
translates to a $O(d_\phi(d_\phi+d_x+H)H^3 \epsilon^{-2})$ sample complexity bound,
ignoring logarithmic factors; see Corollary~3.3 of \cite{sham2020information}.
Despite that we deal with confounders in 
additive nonlinear dynamics, our dependence on $d_x$ and $H$ 
matches their sample complexity bounds.
\end{remark}

\begin{remark}[Stochastic approximation for instrumental variables] \label{rm:savssaa}
    Our stochastic approximation (SA) estimation procedure is in contrast with the empirical saddle-point estimator proposed in \cite{dikkala2020minimax}. To estimate $f^*_j$, their estimator would be defined as the solution to the finite-sum saddle-point problem
    \# \label{eq:empsaddle}
    \argmin_{f\in \Hphi} \max_{u\in \Hpsi} \frac{1}{n}\sum_{i=1}^n\Big\{ (f(x_i,a_i) - x_{i,j}')u(z_i) + \frac{1}{2} u(z_i)^2 \Big\} - \frac{\lambda}{2} \|u \|_{\Hphi}\sq + \frac{\mu}{2} \| f\|_{\Hpsi}\sq
    \#
    for some positive $\lambda$ and $\mu$. Here the data $\{ x_i,a_i,z_i,x'_i\}$ are i.i.d.\ draws from $\bar d_{\pi_b}$, and $x_{i,j}'$ denotes the $j$-th coordinate of $x_i'\in \R^{d_x}$. Their procedure faces two challenges: (i) choosing the correct regularization parameter, and (ii) finding an approximate solution of the convex-concave optimization problem \cref{eq:empsaddle}, which requires a separate discussion of computational complexity. The theoretical trade-off among regularization bias, statistical error and optimization error is unclear, as is shown in related primal-dual methods in RL; see, e.g., \cite{dai17a,pmlr-v80-dai18c,nachum2019dualdice}. In contrast, the SA approach considered in this work tackles computational error and statistical error jointly and enjoys a fast rate of $O(1/T)$.
    \end{remark}

\subsection{Nonparametric Case}
\label{sec:nonparacase}

In \ref{as:spec}~and~\ref{as:dualapprox} we make the simplifying assumption that both the true transition function $F^*$ and the image of the operator $\cT$ lie in some known finite dimensional spaces. To extend out theory to the nonparametric case (e.g., $F^*:\cS\times\cA\to\R^{d_x}$ is H\"older continuous, and functions of the form $\{\cT f\given f:\cS\times \cA\to\R \text{, bounded and continuous} \}$ are also H\"older continuous), we need to discuss two issues. 

The first one is identification: whether $F^*$ is the unique solution to the CMR \cref{eq:thecondmoment}. Identification in NPIV usually requires some form of completeness assumptions. For example, bounded completeness condition is a relatively weak regularity assumption on the average visitation distribution $\bar d_{\pi_b}$. For two random variables $X$ and $Y$, $X$ is boundedly complete w.r.t\ $Y$ if for all $Y$-a.s.\ bounded function $f$, it holds $\E[f(Y)\cond X]=0$ implies $f=0$ $Y$-a.s. Intuitively, it requires that the distribution of $Y$ exhibits a sufficient amount of variation when conditioning on different values of $X$. It is well-known that there is a wide range of distributions that satisfy bounded completeness; see, for example, \cite{blundell2007semi, d2011completeness,Hu2017,Andrews2017}.

The second issue is the error caused by finite-dimensional approximation which we address below.
Let $f^*$ be one element of $F^*=[f^*_1,\dots, f^*_{d_x}]^\top$. If \ref{as:spec} is violated, we define the primal approximation error 
$$
\eta_1 \defeq \| f^* - \Pi_\phi f^*\|_{\ltxa}.
$$
If \ref{as:dualapprox} is violated, we define the dual  error, which characterizes how well the dual function space $\Hpsi$ approximates functions of the form $\cT(f-f^*)$ for $f\in \Hphi$. Formally we define 
$$
\eta_2 \defeq \sup_{} \{ \| \cT f - \Pi_\psi \cT f \|_{\ltz} : f\in \Hphi, \| f \|_{\ltxa} \leq 1 \}.
$$
Obviously \ref{as:spec} implies $\eta_1=0$ and \ref{as:dualapprox} implies $\eta_2=0$.

We show that, when \ref{as:spec} and \ref{as:dualapprox} are violated, the difference between $F^*$ and $W^\sad \phi$ has only linear dependence on the approximation errors $\eta_1$ and $\eta_2$. Notably, the dual error $\eta_2$ is inflated by $\muIV\inv$.
Recall $f^*_i$ is the $i$-th element of $F^*=[f^*_1,\dots, f^*_{d_x}]^\top$, and $W^{\sad}_i $ is the $i$-th row of the estimand $\Wsad$ defined in \cref{eq:estimatorw}. 

\begin{theorem}[Nonparametric case] \label{lm:misspecification}
    Let \ref{as:iiddata}--\ref{as:nonddualfeature} hold.
    Assume there is a constant $c>0$ such that $
    \muIV \inv \cdot \|\cT(f^*_i - \Pi_\phi f^*_i )\|_{\ltz} \leq c\cdot \|f^*_i - \Pi_\phi f^*_i \|_{\ltxa}$.
    We define the operator $Q_{}\colon\ltxa \mapsto\Hphi$, $Q_{} f = \argmin_{f'\in\Hphi} \| \Pi_{\psi}\cT(f'-f)\|_\ltz$. Let $\mu = \|(\Pi_\phi - Q_{})f^*_i \|_\ltxa$. It holds 
    \$
   \| f^*_i - W^{\sad}_i \cdot \phi\|_\ltxa \leq (1+2c) \cdot \eta_1 + \muIV \inv \cdot \mu \cdot \eta_2\,.
    \$
\end{theorem}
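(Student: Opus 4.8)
The plan is to pass to coordinates and identify the estimand with the solution of a $\Pi_\psi\cT$-projected least-squares problem. First I would note that the min-max \eqref{eq:estimatorw} decouples row by row: substituting the inner maximizer $K^*(W)=(WA^\top-C)B\inv$ one gets $\max_K L(W,K)=\tfrac12\tr[(WA^\top-C)B\inv(WA^\top-C)^\top]=\tfrac12\sum_{i}(A\theta_i-b_i^\top)^\top B\inv(A\theta_i-b_i^\top)$, so $W^{\sad}_i$ is the minimizer in $\theta\in\R^{d_\phi}$ of $(A\theta-b_i^\top)^\top B\inv(A\theta-b_i^\top)$, where by Proposition~\ref{lm:mixture} we may write $b_i^\top=\E[\psi(z)\,(\cT f^*_i)(z)]$. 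In $\ltz$-geometry this quadratic equals $\|\Pi_\psi\cT(\theta\cdot\phi-f^*_i)\|_\ltz^2$, so $W^{\sad}_i\cdot\phi=Qf^*_i$ with $Q$ the operator of the statement. The theorem thus reduces to bounding $\|f^*_i-Qf^*_i\|_\ltxa$.

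Next I would split the error at the $\ltxa$-projection $\Pi_\phi f^*_i\in\Hphi$, writing $f^*_i-Qf^*_i=(f^*_i-\Pi_\phi f^*_i)+(\Pi_\phi f^*_i-Qf^*_i)$, which already gives $\|f^*_i-Qf^*_i\|_\ltxa\le \eta_1+\mu$ with $\eta_1=\|f^*_i-\Pi_\phi f^*_i\|_\ltxa$ and $\mu=\|(\Pi_\phi-Q)f^*_i\|_\ltxa$ exactly as in the statement. It then remains to upgrade this crude bound to $(1+2c)\,\eta_1+\muIV\inv\mu\,\eta_2$ by exploiting the structure of $Q$.

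For that, set $g\defeq(\Pi_\phi-Q)f^*_i\in\Hphi$. Optimality of $Qf^*_i$ — its $\Pi_\psi\cT$-image is the $\ltz$-orthogonal projection of $\Pi_\psi\cT f^*_i$ onto the subspace $\Pi_\psi\cT(\Hphi)$, and $\Pi_\phi f^*_i$ lies in $\Hphi$ — yields the Pythagorean estimate $\|\Pi_\psi\cT g\|_\ltz\le \|\Pi_\psi\cT(f^*_i-\Pi_\phi f^*_i)\|_\ltz\le\|\cT(f^*_i-\Pi_\phi f^*_i)\|_\ltz$, and the last term is controlled by the hypothesized compatibility condition $\muIV\inv\|\cT(f^*_i-\Pi_\phi f^*_i)\|_\ltz\le c\,\eta_1$. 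A parallel accounting of the gap between $\cT$ and $\Pi_\psi\cT$ on elements of $\Hphi$, which is what the dual error $\eta_2$ bounds, is what feeds in the $\mu\,\eta_2$ contribution. Finally the instrument-strength inequality of \eqref{eq:mupdef} — equivalently $\muIV=\sigma_{\min}(A^\top B\inv A)$ by Proposition~\ref{lm:mupisillposenss} — gives $\|g\|_\phi\le\muIV^{-1/2}\|\Pi_\psi\cT g\|_\ltz$, and the feature normalization $\|g\|_\ltxa\le\|g\|_\phi$ turns this into a bound on $\mu$; the iterated passage through this ill-posed inversion is what multiplies the dual error by $\muIV\inv$ rather than $\muIV^{-1/2}$. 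Collecting the pieces gives $\mu\le 2c\,\eta_1+\muIV\inv\mu\,\eta_2$ and hence $\|f^*_i-W^{\sad}_i\cdot\phi\|_\ltxa\le\eta_1+\mu\le(1+2c)\,\eta_1+\muIV\inv\mu\,\eta_2$.

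The main obstacle is that the two projections in play do not match: $\Pi_\phi$ is orthogonal in $\ltxa$, whereas the projection hidden inside $Q$ is orthogonal in $\ltz$ after applying $\cT$, and the relevant residual $f^*_i-\Pi_\phi f^*_i$ lies outside $\Hphi$, so neither $\eta_2$ nor the instrument-strength inequality — both of which are statements about $\Hphi$ — applies to it directly. This is precisely why the separate compatibility hypothesis on $\cT(f^*_i-\Pi_\phi f^*_i)$ is imposed. The remaining difficulty is purely bookkeeping: tracking how the $\muIV\inv$ factor propagates through the inversion and checking that $\eta_2$ enters only linearly and scaled by $\mu$.
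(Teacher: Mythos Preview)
Your proposal is correct and follows the paper's route: identify $W^{\sad}_i\cdot\phi=Qf^*_i$ from the row-wise decoupling of \eqref{eq:estimatorw}, split at $\hat f\defeq\Pi_\phi f^*_i$, and control $\mu=\|\hat f-Qf^*_i\|_\ltxa$ via the instrument-strength inequality together with the optimality of $Q$ (the paper uses $\|\Pi_\psi\cT(f^*_i-\theta^\sad\cdot\phi)\|_\ltz\le\|\Pi_\psi\cT(f^*_i-\hat f)\|_\ltz$, which is your Pythagorean estimate in rearranged form). One imprecision worth flagging: your own pieces already combine to $\mu\le\muIV^{-1/2}\|\Pi_\psi\cT g\|_\ltz\le\muIV^{-1/2}\|\cT(f^*_i-\hat f)\|_\ltz\le c\,\muIV^{1/2}\eta_1$, so neither the $\eta_2$ term nor the full $\muIV\inv$ factor is actually needed on your route; in the paper these appear only because of the loosening $\muIV^{-1/2}\|\Pi_\psi\cT g\|_\ltz\le\muIV\inv\|\cT g\|_\ltz$ at step \eqref{eq:1129:b} (valid since $\muIV\le1$), which then costs $\eta_2\mu$ to pass back to $\Pi_\psi\cT$ --- there is no ``iterated'' inversion as you suggest.
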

In Theorem~\ref{lm:misspecification}, the existence of such a constant $c$ is called the stability assumption; see \cite{blundell2007semi} and \cite{ chen2018optimal} for a detailed discussion. 
Note the dual approximation error $\eta_2$ is inflated by a factor of $\muIV\inv$.

The estimation phase still produces an estimator that converges to $\Wsad$ at $O(1/T)$ rate. 
The only difference is, in the planning phase, we are performing value iteration with a biased model.


\section{Experiment}\label{sec:experiment}
In this section, we present numerical experiments on the parametric and nonparametric cases described in Section~\ref{sec:theory}. The goal of this section is to verify that Algorithm~\ref{algo} successfully identifies the transition model based on sequential observational data and recovers the optimal policy by planning with the estimated transition model. Importantly, we aim to quantify how the strength of instrument affects estimation of causal quantities in the sequential setting.

All experiments in this section can be reproduced with the code at \url{https://github.com/ChampionRecLuse/ivvi}.

\subsection{Parametric Setting}

We consider data generation procedures with a linear transition dynamic. {We use our algorithm in 5-dimension, 10-dimension, and 20-dimension. In order to compare results in different dimensions, we use similar parameter setting in different dimensions. To be specific, we let $k$ stands for the number of dimensions. The CMDP-IV operates in the following spaces: $\mathcal{S} = \mathcal{U} = \mathcal{Z} = \mathbb{R}^{k}$, $\mathcal{A} = [-1,1]^{k}$. We generate 80 episodes, where each episode has a horizon $H = 1000$. The environment starts from an all-ten vector $x_{0}$, for example, $x_{0} = [10,10,10,10,10]^{\top}$ when $k=5$. The environment generates a sequence of UCs $\{e_{h}\}_{h} \stackrel{ \iid} {\sim} \mathcal{P}_{e}$ and a sequence of observable IVs $\{z_{h}\}_{h}\stackrel{ \iid} {\sim} \mathcal{P}_{z}$. We let $e_{h}$ and $z_{h}$ follow Gaussian distributions, i.e., $\mathcal{P}_{e} = \mathcal{N}(\mu_{e},\Sigma_{e})$, $\mathcal{P}_{z} = \mathcal{N}(\mu_{z},\Sigma_{z})$. At the $h$-th step, the next state $x_{h+1}$ is generated by the equations
\$
a_h  \sim \operatorname{Proj}_{[-1,1]^{k}} \big(\mathcal{N}(z_{h}+e_{h},\Sigma_{a})\big) \;,\quad
x_{h+1}=F^*(x_h, a_h) + e_h = Px_{h}-Qa_{h} + e_h\;,
\$
where 
\begin{gather*}
    \Sigma_{a} =  \begin{pmatrix}
1       &    0.3    & 0 & \ldots  &0\\
0.3     &    1      & 0.3   &\ddots &\vdots\\
0       & 0.3 & \ddots &   \ddots  &0\\
\vdots  &\ddots  &\ddots      &  1 &0.3\\
0  &\ldots  &0 &0.3  & 1
\end{pmatrix},
\quad
P = \begin{pmatrix}
0.5       &    0.2    & 0 & \ldots  &0\\
0.2     &    0.5      & 0.2   &\ddots &\vdots\\
0       & 0.2 & \ddots &   \ddots  &0\\
\vdots  &\ddots  &\ddots      &  0.5 &0.2\\
0  &\ldots  &0 &0.2  & 0.5
\end{pmatrix},\\[1em]
Q = \begin{pmatrix}
0.5       &    0.1    & 0 & \ldots  &0\\
0.1     &    0.5      & 0.1   &\ddots &\vdots\\
0       & 0.2 & \ddots &   \ddots  &0\\
\vdots  &\ddots  &\ddots      &  0.5 &0.1\\
0  &\ldots  &0 &0.1  & 0.5
\end{pmatrix}.
\end{gather*}
Note that each matrix above has size $k\times k$.} Here $a_h$ is generated by first sampling from a Gaussian and then projecting it onto the cube $[-1,1]^k$ w.r.t.~the Euclidean distance. The projection here is mainly to stabilize the state dynamics of the behavior policy. The reward function is $r_{h}(x_{h},a_{h})=-0.05\lVert x_{h}\rVert^{2}$. We can tell the optimal policy easily under this reward. The optimal policy forces the agent to get close to zero vector since this state has the highest reward. We use different covariance matrices $\Sigma_{e}$, $\Sigma_{z}$ to control the instrument strength (defined in Section~\ref{sec:theory}) and study its influences on our algorithm. The intuition is that if the Gaussian distribution of UCs is steeper or the Gaussian distribution of IV is flatter, we can have a stronger instrument strength. In each dimension, we consider three data generation processes, all of which correspond to an identical transition function. In the experiments, we use the following pairwise covariance matrices: 
$$
\text{I) }\Sigma_{z}=0.7I_{k} \text{ and } \Sigma_{e}=\Sigma_{a}; 
\quad\text{ II) }\Sigma_{z}=I_{k} \text{ and } \Sigma_{e}=\Sigma_{a}; 
\quad\text{ III) }\Sigma_{z}=1.5I_{k} \text{ and }\Sigma_{e}=\Sigma_{a}. 
$$ 
In the above setup, it is obvious that $z_{h}$ is a valid IV and $e_{h}$ is a valid UC in the environment.

In order to check the robustness of the proposed method, we introduce variations in the covariance matrices $\Sigma_{a}$ and $\Sigma_{e}$, corresponding to the action generation process and error term, respectively. This also helps to check the robustness of our method when the transition kernel is multivariate normal with general covariance matrices. We construct symmetric Toeplitz matrices where each sub-diagonal has constant values that decay exponentially as the covariance matrices. A symmetric Toeplitz matrix $T(\rho) \in \mathbb{R}^{k\times k}$ with parameter $\rho$ is defined as $T_{ij}=\rho^{|i-j|}$, where $T_{ij}$ is the $i,j$ element of $T(\rho)$ and $\rho$ is the decay parameter. In the experiment, we use the following pairwise covariance matrices:
\begin{align*}
    \text{IV) }&\Sigma_{z}=0.7I_{k}, \Sigma_{e}=T(-0.5) \text{ and } \Sigma_{a}=T(-0.3);\\
\quad\text{ V) }&\Sigma_{z}=I_{k}, \Sigma_{e}=T(0.1) \text{ and } \Sigma_{a}=T(0);\\
\quad\text{ VI) }&\Sigma_{z}=1.5I_{k}, \Sigma_{e}=T(0.5) \text{ and } \Sigma_{a}=T(0.3). 
\end{align*}
It is important to note that we collect data based on these settings and use our method to estimate the transition function. The training and testing environments are consistent with settings I, II, and III.

We instantiate Algorithm~\ref{algo} with feature maps $\phi(x,a) = [1,x,a]^{\top}$ and $\psi(x,z)=[1,x,z]^{\top}$. Note that the current state $x$ is also a variable in the feature map $\psi$. According to the feature map, the true transition function can be written as $W^{*}\phi(x,a)$, where 
\addtocounter{MaxMatrixCols}{10}
$
W^{*} = \begin{bmatrix} 
	\mathbf{0} & P & -Q\\
	\end{bmatrix}\,.
$
We use the minibatch stochastic gradient descent ascent to compute $\Wsad$. We set initial estimates $K_0$ as a zero matrix and $W_0$ as a matrix where every entry is one fifth. {For 5-dimension and 10-dimension case, at $t$-th iteration, stepsizes we use are $\eta_t^\theta=0.05+\frac{1}{18+t}$ and $\eta_t^\omega=\frac{1}{18+t}$. For 20-dimension case, stepsizes are $\eta_t^\theta=0.06+\frac{1}{18+t}$ and $\eta_t^\omega=\frac{1}{18+t}$.} The estimated transition dynamic can be expressed as $W^T\phi(x,a)$, where $W^T$ is the last iterate.

In Phase 2, we use the SPEDE algorithm \citep{ren2021free}, which is a planning algorithm based on the observation that under Gaussian noise, the linear spectral feature of the corresponding Markov transition operator can be obtained in a closed form. Moreover, SPEDE is suitable for a continuous state and action space. Note that in the original implementation, SPEDE samples transition functions from its posterior distribution at each episode, but, in our case, we do not need such a sampling.

We compare our method with a natural baseline: ordinary regression. For a fair comparison, we perform ordinary regression using the feature map $\phi(x,a)$. Let $J_{k}= \{ (x_h^k, a_h^k, x_{h+1}^k) \}_h$ be the trajectory that includes samples in the $k$-th episode. The baseline estimator for the transition model, $W^{\rm baseline}$, is defined as:
\$
W^{\rm baseline} := \argmin_{W} \Big\{\sum_{k=1}^{K}\Big(\sum_{h=0}^{H}\Vert W\phi(x_{h},a_{h})-x_{h+1}\Vert_{2}^{2}\Big)  \Big\}\,.
\$
Note that ordinary regression does not take the IV $z_{h}$ and the UC $e_{h}$ into consideration. We also use SPEDE as the planning component of the baseline algorithm.

\begin{figure}[p]
    \centering
    \subfigure[$k=5$]{\includegraphics[width=0.3\textwidth]{figs/loss/parametric_5d.png}}
     \hfill
    \subfigure[$k=10$]{\includegraphics[width=0.3\textwidth]{figs/loss/parametric_10d.png}}
     \hfill
     \subfigure[$k=20$]{\includegraphics[width=0.3\textwidth]{figs/loss/parametric_20d.png}}
        \caption{Experiment results for the parametric setting. The gradient descent ascent loss $\|W^t - W^*\|_{F}$ for different settings of instrument strength under different dimensions}
        \label{fig:parametric-loss}
\end{figure}

\begin{table}[p]
\centering
\begin{tabular}{c *{3}{ccc}}
\toprule
\diagbox{index}{dimensions} & 5 & 10 & 20\\ \hline
\midrule
I    & 0.14  &  0.14  &  0.13    \\ 
II   & 0.19  &  0.18  & 0.17   \\ 
III  & 0.24  &  0.24  &  0.23  \\ 
IV    & 0.14  &  0.14  &  0.13    \\ 
V   & 0.18  &  0.18  & 0.18   \\ 
VI  & 0.24  &  0.24  &  0.23  \\ 
\bottomrule
\end{tabular}
\caption{Instrument Strength of different dynamics (parametric setting)}
\label{tbl:parametric-instrument strength}
\end{table}

\begin{table}[p]
\centering
\begin{tabular}{c *{3}{ccc}}
\toprule
\diagbox{index}{dimension} & 5 & 10 & 20 \\ \hline
\midrule
I                                                                                  & 4.06  &  4.53  &  4.90    \\ 
II                                                                                 & 3.90  & 4.47   & 4.86   \\ 
III                                                                                & 3.71  & 4.44   &  4.89  \\ 
IV                                                                                  & 7.33  &  4.03  &  4.94    \\ 
V                                                                                 & 7.00  & 4.34   & 5.44   \\ 
VI                                                                                & 6.23  & 3.94   &  4.70  \\ 
\bottomrule
\end{tabular}
\caption{Average computation time (in seconds) for estimation phase (parametric setting)}
\label{tab:computational time metrics under parametric setting}
\end{table}

\begin{table}[p]
\centering
\begin{tabular}{c *{3}{ccc}}
\toprule
\diagbox{index}{dimensions} & 5 & 10 & 20 \\ \hline
\midrule
Baseline & $590.19\pm 29.14$  & $832.92\pm 15.15$  & $2257.49\pm 134.16$  \\ 
I        & $14.84\pm 1.31$  & $173.98\pm 3.97$  & $505.63\pm 8.86$ \\ 
II       & $4.17\pm 1.18$  & $38.30\pm 3.04$   & $290.65\pm 9.08$  \\ 
III      & $1.44\pm 1.30$  & $29.52\pm 2.89$   & $117.20\pm 6.48$   \\ 
IV        & $17.21\pm 1.43$  & $100.79\pm 2.92$  & $422.14\pm 18.41$ \\ 
V       & $8.00\pm 1.04$  & $20.97\pm 3.72$   & $290.41\pm 14.28$  \\ 
VI      & $6.36\pm 1.02$  & $19.24\pm 3.71$   & $140.58\pm 4.23$   \\ 
\bottomrule
\end{tabular}
\caption{Regret of different dynamics (parametric setting)}
\label{tbl:parametric-regret}
\end{table}

\change{We show the results in figures and tables. Table~\ref{tbl:parametric-instrument strength} indicates the instrument strength of different dynamics. Table~\ref{tab:computational time metrics under parametric setting} shows the computational time of different dynamics for estimation phase with a sample size of 5. We can observe that the average computation time for estimating dynamics under parametric setting is short.} Figure~\ref{fig:parametric-loss} shows how the instrument strength affects the convergence rate of the gradient descent ascent in Phase 1. 
The loss here is equal to the difference between the true value $W^{*}$ and the estimated value $W^{t}$ after each iteration.
It is evident that with stronger instrument strength the loss decreases faster than with weaker instrument strength. 
This phenomenon is consistent with our theory and intuition. With appropriately chosen instrument variables with sufficient instrument strength, we can still identify transition dynamics with small samples. 
Figure~\ref{fig:parametric-reward} plots the reward and its 95\% confidence interval obtained by the SPEDE planning algorithm. The curve labeled 'opt' represents the policy derived from planning with the true underlying transition function.
We observe that the reward of baseline decreases during training due to a wrong transition dynamic estimated through ordinary regression. 
This shows that in the presence of UCs, not only does ordinary regression produce biased estimates of the transition model, but also such estimation error will be propagated to planning and further amplified due to the sequential nature of the problem, producing a poor policy.
Compared to baseline, our algorithm performs well in this case, and we summarize the results in Table~\ref{tbl:parametric-regret} with a sample size of 5.
We used the observations to compute uncentered covariance matrices, defined in \eqref{eq:defAB}, and instrument strength (IS), defined in \eqref{eq:mupdef}.
The regret is defined as the decrease in reward gained due to the execution of the policy produced by planning with the estimated transition function instead of the execution of the optimal policy produced by planning with the true transition function. 
In our experiment, we used the same online setting for different settings of IS, which allows us to compare the performance of the transition functions across different settings. 
We can observe that the regret for the baseline is large. 
It can also be seen that the influence of UCs is magnified when the dimensions increase. In low dimensions, the regret with weak instrument strength gets close to 0. In high dimensions, the regret gap between our algorithm and the optimal reward becomes larger because of not only the influence of UCs but also the larger reward at every step. However, it is obvious that the reward of all examples with sufficient instrument strength is still close to the optimal reward, which means the estimated transition function is a good estimator. In addition, our algorithm is better than the baseline. The robustness check experiment results is shown in Figure~\ref{fig:parametric-non-diagonal-result}. We observe that the reward of all examples with sufficient instrument strength is still close to the optimal reward under the general covariance matrices setting. This consistency suggests that the estimated transition function is an effective estimator, and the proposed method is robust.

\begin{figure}[t]
    \centering
\begin{minipage}{\textwidth} 
        \subfigure[$k=5$]{\includegraphics[width=0.3\textwidth]{figs/reward/parametric_5d_without_baseline.png}}
     \hfill
    \subfigure[$k=10$]{\includegraphics[width=0.3\textwidth]{figs/reward/parametric_10d_without_baseline.png}}
     \hfill
     \subfigure[$k=20$]{\includegraphics[width=0.3\textwidth]{figs/reward/parametric_20d_without_baseline.png}}
		\end{minipage}
\begin{minipage}{\textwidth} 
    \subfigure[$k=5$]{\includegraphics[width=0.3\textwidth]{figs/reward/parametric_5d.png}}
     \hfill
    \subfigure[$k=10$]{\includegraphics[width=0.3\textwidth]{figs/reward/parametric_10d.png}}
     \hfill
     \subfigure[$k=20$]{\includegraphics[width=0.3\textwidth]{figs/reward/parametric_20d.png}}
		\end{minipage}

    \caption{Experiment results for the parametric setting. Top panel: The performance curves (with 95\% confidence interval) of reward versus the time steps for different transition functions (without baseline). Bottom panel: The performance curves (with 95\% confidence interval) of reward versus the time steps for different transition functions (including the ordinary regression baseline). The time step is the episode for SPEDE.}
    \label{fig:parametric-reward}
        
\end{figure}

\begin{figure}[t]
    \centering
\begin{minipage}{\textwidth} 
        \subfigure[$k=5$]{\includegraphics[width=0.3\textwidth]{figs/loss/parametric_non_diagonal_5d.png}}
     \hfill
    \subfigure[$k=10$]{\includegraphics[width=0.3\textwidth]{figs/loss/parametric_non_diagonal_10d.png}}
     \hfill
     \subfigure[$k=20$]{\includegraphics[width=0.3\textwidth]{figs/loss/parametric_non_diagonal_20d.png}}
		\end{minipage}
\begin{minipage}{\textwidth} 
    \subfigure[$k=5$]{\includegraphics[width=0.3\textwidth]{figs/reward/parametric_non_diagonal_5d.png}}
     \hfill
    \subfigure[$k=10$]{\includegraphics[width=0.3\textwidth]{figs/reward/parametric_non_diagonal_10d.png}}
     \hfill
     \subfigure[$k=20$]{\includegraphics[width=0.3\textwidth]{figs/reward/parametric_non_diagonal_20d.png}}
		\end{minipage}

    \caption{Robustness check experiment results for the parametric setting. Top panel: The gradient descent ascent loss $\|W^{t}-W^{sad}\|_{F}$ for different settings of instrument strength under different dimensions. Bottom panel: The performance curves (with 95\% confidence interval) of reward versus the time steps for different transition functions (without baseline). The time step is the episode for SPEDE.}
    \label{fig:parametric-non-diagonal-result}
        
\end{figure}

\subsection{Nonparametric Case}


\begin{figure}[p]
    \centering
\begin{minipage}{\textwidth} 
        \subfigure[$k=1$]{\includegraphics[width=0.3\textwidth]{figs/loss/nonparametric_1d.png}}
     \hfill
    \subfigure[$k=3$]{\includegraphics[width=0.3\textwidth]{figs/loss/nonparametric_3d.png}}
     \hfill
     \subfigure[$k=5$]{\includegraphics[width=0.3\textwidth]{figs/loss/nonparametric_5d.png}}
		\end{minipage}
\begin{minipage}{\textwidth} 
    \subfigure[$k=1$]{\includegraphics[width=0.3\textwidth]{figs/reward/nonparametric_1d.png}}
     \hfill
    \subfigure[$k=3$]{\includegraphics[width=0.3\textwidth]{figs/reward/nonparametric_3d.png}}
     \hfill
     \subfigure[$k=5$]{\includegraphics[width=0.3\textwidth]{figs/reward/nonparametric_5d.png}}
		\end{minipage}

    \caption{Experiment results for the nonparametric setting. Top panel: The gradient descent ascent loss $\|W^{t}-W^{sad}\|_{F}$ for different settings of instrument strength under different dimensions. Bottom panel: The performance curves (with 95\% confidence interval) of reward versus the time steps for different transition functions. The time step is the episode for SPEDE.}
    \label{fig:nonparametric-result}
        
\end{figure}

\begin{table}[p]
\centering
\begin{tabular}{c *{3}{ccc}}
\toprule
\diagbox{index}{dimensions} & 1 & 3 & 5 \\ \hline
\midrule
I        & 0.0024  & 0.0015   & 0.0011   \\ 
II       & 0.0044  & 0.0027   & 0.0019   \\ 
III      & 0.0064  & 0.0038   & 0.0028   \\ 
IV        & 0.0012  & 0.0007   & 0.0005   \\ 
V       & 0.0026  & 0.0015   & 0.0011   \\ 
VI      & 0.0041  & 0.0024   & 0.0017   \\ 
\bottomrule
\end{tabular}
\caption{Instrument Strength of different dynamics (nonparametric setting)}
\label{tbl:nonparametric-instrument strength}
\end{table}

\begin{table}[p]
\centering
\begin{tabular}{c *{3}{ccc}}
\toprule
\diagbox{index}{dimension} & 1 & 3 & 5 \\ \hline
\midrule
I                                                                                  & 5.56  & 4.75   & 4.05   \\ 
II                                                                                 & 5.38  & 5.08   & 3.15   \\ 
III                                                                                & 5.03  & 4.81   & 3.43   \\
IV                                                                                 & 7.23  & 8.05   & 5.45   \\ 
V                                                                                 & 6.68  & 8.43   & 5.78   \\ 
VI                                                                                & 7.12  & 7.21   & 6.23   \\
\bottomrule
\end{tabular}
\caption{Average computation time (in seconds) for estimation phase (nonparametric setting)}
\label{tbl:computational time metrics under nonparametric setting}
\end{table}

\begin{table}[p]
\centering
\begin{tabular}{c *{3}{ccc}}
\toprule
\diagbox{index}{dimensions} & 1 & 3 & 5 \\ \hline
\midrule
Baseline & $11.92\pm 0.91$  & $47.85\pm 0.77$ & $74.74\pm 1.01$ \\ 
I        & $2.54\pm 0.27$  & $16.04\pm 0.55$ & $35.90\pm 0.87$  \\ 
II       & $2.35\pm 0.35$  & $11.93\pm 0.82$ & $27.14\pm 0.77$  \\ 
III      & $1.73\pm 0.29$  & $6.83\pm 0.56$ & $19.98\pm 0.99$  \\ 
IV        & $3.22\pm 0.19$  & $13.25\pm 0.40$ & $27.29\pm 0.55$  \\ 
V       & $2.32\pm 0.16$  & $13.20\pm 0.46$ & $26.72\pm 0.53$  \\ 
VI      & $1.89\pm 0.22$  & $9.02\pm 0.41$ & $24.15\pm 0.61$  \\ 
\bottomrule
\end{tabular}
\caption{Regret of different dynamics (nonparametric setting)}
\label{tbl:nonparametric-regret}
\end{table}

We now consider examples with non-linear transition dynamics. In this case, we do not assume that the true transition function $F^{*}$ lies in a known finite-dimensional space. \change{We use our algorithm in 1-dimension, 3-dimension, and 5-dimension. In order to compare results in different dimensions, we use similar parameter setting in different dimensions. We let $k$ stands for the number of dimensions.} Specifically, we consider a CMDP-IV that operates in the following spaces: $\cS = \cU = \cZ = \mathbb{R}^{k}$, $\cA = [-1,1]^{k}$. We generate 80 episodes and each episode has a horizon $H=500$. The environment starts from an zero vector $x_{0}=\mathbf{0}$, generates a sequence of UCs $\{e_{h}\}_{h}  \stackrel{ \iid} {\sim} \mathcal{P}_{e}$ and a sequence of observable IVs $\{z_{h}\}_{h}\stackrel{ \iid} {\sim} \mathcal{P}_{z}$. As in the parametric setting, $e_{h}$ and $z_{h}$ are generated from different Gaussian distributions: $\mathcal{P}_{e} = \mathcal{N}(\mu_{e},\Sigma_{e})$, $\mathcal{P}_{z} = \mathcal{N}(\mu_{z},\Sigma_{z})$. \change{We use a non-linear transition function $F^*(x_h, a_h) = \ln(\lvert x_{h}-1\rvert +1)-\frac{a_{h}}{2}$.} At the $h$-th step, given the current state $x_{h}$, the behavior policy samples the action $a_{h}$, and then the next state $x_{h+1}$ is generated by the equations \change{
\$
a_h  \sim \operatorname{Proj}_{[-1,1]^{k}} \big(\mathcal{N}(z_{h}+e_{h},\Sigma_{a})\big)\;,\quad
x_{h+1}=F^*(x_h, a_h) + e_h = \ln(\lvert x_h-1\rvert +1)-\frac{a_{h}}{2} + e_h\;,
\$
where $\Sigma_{a}$ is a diagonal matrix with all diagonal elements equal to one half.} The action $a_h$ is generated by first sampling from a Gaussian distribution and then projecting the sample onto the interval $[-1,1]^{k}$. The projection is used to stabilize the state dynamics of the behavior policy. The reward function is $r_{h}(x_{h},a_{h})=-0.05\lVert x_{h}\rVert^{2}$. The optimal policy forces the agent to get close to zero vector, since this state has the highest reward. We use different variance $\Sigma_{e}$, $\Sigma_{z}$ to control the instrument strength and study its influences on our algorithm. In our experiment, we fix $\Sigma_{e}$ equals to identity matrix and use the following variances for $z_{h}$: 
$$
\text{I) }\Sigma_{z}=0.5I_{k};
\quad 
\text{II) }\Sigma_{z}=0.9I_{k}; 
\quad
\text{III) }\Sigma_{z}=1.5I_{k}. 
$$

Similar to the parametric setting, we introduce experiments which incorporate variations in the covariance matrices $\Sigma_{a}$ and $\Sigma_{e}$. In the experiment, we use the following pairwise covariance matrices:
\begin{align*}
    \text{IV) }&\Sigma_{z}=0.5I_{k}, \Sigma_{e}=T(-0.5) \text{ and } \Sigma_{a}=T(-0.3);\\
\quad\text{ V) }&\Sigma_{z}=0.9I_{k}, \Sigma_{e}=T(0.1) \text{ and } \Sigma_{a}=T(0);\\
\quad\text{ VI) }&\Sigma_{z}=1.5I_{k}, \Sigma_{e}=T(0.5) \text{ and } \Sigma_{a}=T(0.3). 
\end{align*}

We instantiate Algorithm~\ref{algo} with polynomial feature maps $\phi(x,a) = [1,x,a,x^2,a^2]^{\top}$ and $\psi(x,z) = [1,x,z,x^2,z^2]^{\top}$. Note that our transition function is a logarithmic function with some noise, so it does not lie in the finite-dimensional space spanned by the chosen feature maps. 
We use the minibatch stochastic gradient descent to compute $\Wsad$. We use zero matrices to initialize estimates $K_0$ and $W_0$. \change{At $t$-th iteration, stepsizes we use are $\eta_t^\theta=\frac{1}{16+t}$ and $\eta_t^\omega=\frac{1}{16+t}$.} The estimated transition function can be expressed as $W^T \phi(x,a)$. In Phase 2, we use the SPEDE planning algorithm. 
We compare our method with ordinary regression. For a fair comparison, we do ordinary regression on the feature map $\phi(x,a)$. We use the same baseline estimator as specified in the parametric setting. Note that ordinary regression does not take into account the IV $z_{h}$ and the UC $e_{h}$. We also use the SPEDE algorithm for planning.



\change{Table~\ref{tbl:nonparametric-instrument strength} shows the instrument strength of different dynamics. Table~\ref{tbl:computational time metrics under nonparametric setting} shows the computational time for estimation phase.} Figure \ref{fig:nonparametric-result} shows the result in the nonparametric setting. The top panel shows how the strength of the instrument affects the GDA convergence rate in Phase 1. We monitor the progress of GDA by plotting the difference between $\Wsad$, which is obtained in closed form in \eqref{eq:estimatorw}, and estimated $W^{t}$ after each iteration. We observe that the loss decreases faster with stronger instruments. The bottom panel shows the reward and its 95\% confidence interval obtained by the SPEDE planning algorithm. Similarly, the curve marked as ‘opt’ corresponds to the policy generated by planning with the actual underlying model. The reward of the baseline does not converge to the optimal reward. In the presence of UCs, the bias in the ordinary regression produces biased estimates whose error is propagated to planning and amplified due to the sequential nature of the problem, resulting in a low-performing policy. Table~\ref{tbl:nonparametric-regret} summarizes the results for a sample size of 5. We calculate instrument strength and regret in the same way as in the parametric setting, by computing the smallest nonzero eigenvalue of related covariance matrices. Compared to the baseline, our algorithm has a good performance that improves with the strength of the instrument.
There is a regret gap between our algorithm and the optimal regret, as the exact transition function can not be found in a finite-dimensional space.
This is predicted by \cref{lm:misspecification}. \change{We also observe that the regret gap between our algorithm and the optimal reward becomes larger as the dimension of space increases. This phenomenon is due to not only the larger reward at each step in higher dimensions but also the amplified influence of UCs, which lower the quality of the estimated function. However, it is obvious that the reward of all examples with sufficient instrument strength is still close to the optimal reward, which means the estimated transition function is a good estimator. We present the results of the robustness check experiments in Figure~\ref{fig:nonparametric-non-diagonal-result}. We observe that our algorithm is still better than the baseline, which indicates the robustness of the method.}

\begin{figure}[p]
    \centering
\begin{minipage}{\textwidth} 
        \subfigure[$k=1$]{\includegraphics[width=0.3\textwidth]{figs/loss/nonparametric_non_diagonal_1d.png}}
     \hfill
    \subfigure[$k=3$]{\includegraphics[width=0.3\textwidth]{figs/loss/nonparametric_non_diagonal_3d.png}}
     \hfill
     \subfigure[$k=5$]{\includegraphics[width=0.3\textwidth]{figs/loss/nonparametric_non_diagonal_5d.png}}
		\end{minipage}
\begin{minipage}{\textwidth} 
    \subfigure[$k=1$]{\includegraphics[width=0.3\textwidth]{figs/reward/nonparametric_non_diagonal_1d.png}}
     \hfill
    \subfigure[$k=3$]{\includegraphics[width=0.3\textwidth]{figs/reward/nonparametric_non_diagonal_3d.png}}
     \hfill
     \subfigure[$k=5$]{\includegraphics[width=0.3\textwidth]{figs/reward/nonparametric_non_diagonal_5d.png}}
		\end{minipage}

    \caption{Robustness check experiment results for the nonparametric setting. Top panel: The gradient descent ascent loss $\|W^{t}-W^{sad}\|_{F}$ for different settings of instrument strength under different dimensions. Bottom panel: The performance curves (with 95\% confidence interval) of reward versus the time steps for different transition functions. The time step is the episode for SPEDE.}
    \label{fig:nonparametric-non-diagonal-result}
        
\end{figure}

\subsection{Assessment with MovieLens Dataset} \label{sec:movielense}

We construct a semi-synthetic data based on MovieLens 1M dataset \citep{harper2015movielens},
MovieLens is a dataset that records people's ratings for different movies and it contains approximately 1 million ratings (on the scale 0 -- 5) of 3952 movies created by 6040 individuals. 
The rating matrix is a highly sparse matrix, containing very few user/movie rating pairs.

We now describe our semi-synthetic setup based on the user/movie rating pairs.
Let $R$ denote the rating matrix and
let $R=\hat{U}\Sigma \hat{V}^{\top}$ the SVD of the rating matrix $R$.
The rows of the matrix $\hat U$ represent the preference of each user for different movie categories, 
and the rows of the matrix $\hat V$ represent the membership of a movie in these categories. 
We focus on the top 10 categories and 10 movies selected by singular value decomposition (SVD). 
We keep the top 10 singular values in matrix $\Sigma$ and the corresponding leading singular vectors in $\hat{U}$ and $\hat V$.
We denote the resulting matrices by $U$ and $V$, respectively. 
Finally, we use $\tilde{V}$ to denote the first 10 rows of $V$. The matrix $\tilde{V}$ is a 10 by 10 matrix of movies by categories.


Figure \ref{fig:recommender} shows the causal diagram that represents data generation in our semi-synthetic application. 
To construct a Markov decision process,  
we let the state be the user's preference 
(captured by ratings for movie categories), 
and the action be whether the user watches these movies. 
In this application, the IV is the recommendation from the recommender system. 
In recommender systems, the recommendation is sufficiently randomized 
conditionally on the user's characteristics and only affects preference
by encouraging the user to watch the recommended movie.
The UC in this setting are the unobserved factors that affect both whether the user watches the recommended movie (the action) and his preference (the state variable) simultaneously.
For example, the director of a movie could affect the action 
(how likely the user actually watches the movie) and 
the state variable (the updated preference of movie categories after watching) 
at the same time. Our goal is to identify a sequence of movie recommendations that, when actually followed by users, will result in high ratings for movies.

\begin{figure}[t]
	\centering
	\hspace{-.0cm}
	\fbox{\includegraphics[scale=.6]{figs/recommender_offline.png}}
	\\
	\hspace{0cm} \fbox{\includegraphics[scale=.6]{figs/recommender_online.png}}
	\caption{The recommender application. \ifjmlr Top \else Left \fi panel: DAG representing data generation process in a recommender system where UCs are present. \ifjmlr Bottom \else Right \fi panel: DAG representing a recommender system in action.}
\label{fig:recommender}
\end{figure}

Formally, at the beginning of an episode, 
we randomly choose one user's initial ratings for the 10 movie categories from the rows of matrix $U$ (with Gaussian noise) as the initial state variable $x_{0} \in \R^n$.
We let a sequence of UCs $\{e_{h}\}_{h}$ follow a Gaussian distribution, that is,
$\{e_{h}\}_{h}  \stackrel{ \iid} {\sim} \mathcal{N}(0, I_{10})$, a sequence of i.i.d.~IVs, $z_{h} \in \R^{10}$, $h = 1,\dots, H$ follows a multinomial distribution with $n$ total recommendations, and probability of recommending the $i$-th movie $p_{i}$. 
Note that the IV is independent of the current state and UC.
At the $h$-th step, the behavior policy samples the action $a_{h}$, 
and then the next state $x_{h+1}$ is generated as 
\[
a_{h,i}  \sim \textrm{Bernoulli}\bigg(\frac{1}{1+e^{-2z_{h,i}-0.1x_{h,i}+0.8e_{h,i}}}\bigg) \;,\quad
x_{h+1}=F^*(x_h, a_h) + e_h = x_{h} + \tilde{V}^{\top}a_{h}  + e_h\;.
\]
The action $a_{h,i}$, the $i$-th entry of $a_h$, is generated by sampling from a Bernoulli distribution, where the parameter is a logistic transform of a linear combination of instruments, states and confounders. 
The transition function mimics how a user would update her preference after watching a movie. 
If a user watches one particular movie, 
we add movie's category to the current user's preference, and
if the user does not watch any movies, we do not update the preference.
The reward function is $r_{h}(x_{h},a_{h})=\lVert \operatorname{Proj}_{[0,5]} \big(\Sigma\tilde{V}^{\top}x_{h})\rVert_1$, 
representing the sum of ratings. \change{We generate 200 episodes,
each with horizon $H=100$.} We use different multinomial distributions to control 
the strength of the instrument and study its influences on our algorithm. 
Under the above setup, the sequences $\{z_{h}\}_{h} $ and $\{e_{h}\}_{h}$ are both i.i.d.~and independent of the current state variable. Therefore, $z_{h}$ is a valid IV and $e_{h}$ is a valid UC in the proposed dynamics. We use three different multinomial distributions:
\change{
\begin{enumerate}[label=\Roman*.]
    \item $n=10$, $p_{1}=p_{2}=\cdots=p_{4}=\frac{1}{5}$, $p_{5}=p_{6}=\cdots=p_{10}=\frac{1}{30}$; 
    \item $n= 10$, $p_{1}=p_{2}=\frac{1}{5}$, $ p_{3}=p_{4}=\cdots=p_{6}=\frac{1}{10}$, $ p_{7}=p_{8}=\cdots=p_{10}=\frac{1}{20}$; 
    \item $n=10$, $p_{1}=p_{2}=\cdots=p_{10}=\frac{1}{10}$. 
\end{enumerate}
}

\change{We also consider a revised data generation process based on the fact that recommendation is usually made based on user preference. The difference from the  model above is the extra arrow from user preference to recommendation. In the recommender system context, this extra arrow represents that the recommendation is made based on user preference. The sequence of IVs, $z_{h} \in \R^{10}$, $h = 1,\dots, H$ still follows a multinomial distribution with $n$ total recommendations, but the probability of recommending the $i$-th movie $p_{i}$ is determined by the estimated ranking of the $i$-th movie based on the current user preference. To be specific, we have a multinomial distribution with probability $q_{1},\cdots,q_{10}$, where $q_{1}\geq q_{2}\geq \cdots \geq q_{10}$. At the $h$-th step, the ranking of the $i$-th movie can be estimated as the $i$-th entry of $\operatorname{Proj}_{[0,5]} \big(\Sigma\tilde{V}^{\top}x_{h})$. We sort the estimated ranking in decreasing order and suppose the ranking of the $i$-th movie is the $j$-th highest ranking. Then, we let $p_{i}=q_{j}$. We use three different multinomial distributions:
\begin{enumerate}[label=\Roman*.]
    \item $n=10$, $q_{1}=q_{2}=q_{3}=\frac{1}{3}$, $q_{4}=q_{5}=\cdots=q_{10}=0$; 
    \item $n=10$, $q_{1}=q_{2}=\frac{1}{5}$, $ q_{3}=q_{4}=\cdots=q_{6}=\frac{1}{10}$, $ q_{7}=q_{8}=\cdots=q_{10}=\frac{1}{20}$; 
    \item $n=10$, $q_{1}=q_{2}=\cdots=q_{10}=\frac{1}{10}$. 
\end{enumerate}
}

In the three offline dynamics only the distribution of IV is different; 
their corresponding evaluation dynamics are the identical. 
This enables us to evaluate how the strength of instrument affects estimation of optimal policy in the evaluation dynamics.


We instantiate Algorithm~\ref{algo} with feature maps $\phi(x,a) = [1,x,a]^{\top}$ and $\psi(x,z)=[1,x,z]^{\top}$. Note that the current state $x$ is also a variable in the feature map $\psi$. With this feature map, the true transition function can be written as $W^{*}\phi(x,a)$, where $W^{*}=[I,\tilde{V}^{\top}]$. We use the minibatch stochastic gradient descent to compute $\Wsad$. We set initial estimates $K_0$ as a zero matrix and $W_0$ as a matrix where every entry is one fifth. \change{At $t$-th iteration, stepsizes we use are $\eta_t^\theta=\frac{1}{550+t}$ and $\eta_t^\omega=\frac{1}{1800+t}$. For the revised recommender application, stepsizes are $\eta_t^\theta=\frac{1}{600+t}$ and $\eta_t^\omega=\frac{1}{1800+t}$.} The estimated transition function can be expressed as $W^T \phi(x,a)$. In Phase 2, we use the SPEDE planning algorithm.


We compare our method with ordinary regression where we use the feature map $\phi(x,a)$ for a fair comparison. 
We use the same baseline estimator as specified in the parametric setting. 
Recall that ordinary regression does not take IV $z_{h}$ and UC $e_{h}$ into consideration.
We use the SPEDE planning algorithm to recover the optimal policy.

\begin{figure}[t]
    \centering
\begin{minipage}{0.8\textwidth} 
        \subfigure[Original Model]{\includegraphics[width=0.5\textwidth]{figs/loss/recommender_without_revise.png}}
     \hfill
    \subfigure[Revised Model]{\includegraphics[width=0.5\textwidth]{figs/loss/recommender_with_revise.png}}
		\end{minipage}
\begin{minipage}{0.8\textwidth} 
        \subfigure[Original Model]{\includegraphics[width=0.5\textwidth]{figs/reward/recommender_without_revise.png}}
     \hfill
    \subfigure[Revised Model]{\includegraphics[width=0.5\textwidth]{figs/reward/recommender_with_revise.png}}
		\end{minipage}

    \caption{Experiment results on the MovieLens dataset. Top panel: The gradient descent ascent loss $\|W^{t}-W^{sad}\|_{F}$ for different settings of instrument strength under different data generation process. Bottom panel: The performance curves (with 95\% confidence interval) of reward versus the time steps for different transition functions. The time step is the episode for SPEDE.}
    \label{fig:recommender-result}
\end{figure}

\begin{table}[p]
\centering
\begin{tabular}{c *{3}{ccc}}
\toprule
\diagbox{index}{model} & Original Model & Revised Model \\ \hline
\midrule
I      & 0.012  &  0.002  \\ 
II     & 0.018  & 0.019   \\ 
III    & 0.020  & 0.021   \\ 
\bottomrule
\end{tabular}
\caption{Instrument Strength of different dynamics (MovieLens dataset)}
\label{tbl:recommender-instrument strength}
\end{table}

\begin{table}[p]
\centering
\begin{tabular}{c *{3}{ccc}}
\toprule
\diagbox{index}{model} & Original Model & Revised Model \\ \hline
\midrule
I     & 4.92   & 3.18    \\ 
II    & 4.21   & 3.33   \\ 
III   & 4.25   & 3.13   \\ 
\bottomrule
\end{tabular}
\caption{Average computation time (in seconds) for estimation phase (MovieLens dataset)}
\label{tab:computational time metrics under recommender setting}
\end{table}

\begin{table}[p]
\centering
\begin{tabular}{c *{3}{ccc}}
\toprule
\diagbox{index}{model} & Original Model & Revised Model \\ \hline
\midrule
Baseline & $149.98\pm 79.84$  & $137.47\pm 76.56$  \\ 
I        & $76.24\pm 81.27$  & $110.88\pm 73.90$   \\ 
II       & $46.20\pm 75.71$  & $89.43\pm 74.68$   \\ 
III      & $18.30\pm 75.32$  & $85.34\pm 81.11$   \\ 
\bottomrule
\end{tabular}
\caption{Regret of different dynamics (MovieLens dataset)}
\label{tbl:recommender-regret}
\end{table}

Table~\ref{tbl:recommender-instrument strength} shows the instrument strength of different dynamics. Table~\ref{tab:computational time metrics under recommender setting} shows the computational time for the estimation phase. Figure \ref{fig:recommender-result} shows the results on the MovieLens 1M dataset. The top panel shows how the strength of the instrument affects the convergence of GDA in Phase 1. The estimation loss, measured by $\| W^t - W^*\|_F$, decreases faster as the strength of the instrument increases. We also plot the loss for the baseline in the top panel. We note that our method achieves a smaller loss when estimating the transition function compared to the baseline. The bottom panel plots the reward and its 95\% confidence interval obtained by the SPEDE planning algorithm. We observe that the baseline reward is lower than the optimal reward. This shows that in the presence of UC, a poor policy is produced due to the bias in the ordinary regression estimates, as well as the increased estimation error. We summarize the results in Table \ref{tbl:recommender-regret}. We compute instrument strength by averaging the five smallest nonzero eigenvalues from uncentered covariance matrices of the feature maps $\phi$ and $\psi$. The regret is calculated in the same way as in the parametric setting. Compared to the baseline, the policies obtained by the IV-based estimation all achieve higher rewards than the one obtained by OLS. Moreover, these policies have a similar reward as the policy obtained by planning with the true underlying model (the curve labeled `opt'). We also observe that there is a very minor numerical difference between the original model and the revised model.

\section{Conclusion and Discussions}

Our model is motivated by real-world applications of RL in recommender systems and healthcare,
where UCs are present. 
We show that, for additive nonlinear transition dynamics,
a valid IV can help identify the confounded transition function.
The proposed IVVI algorithm is based on a primal-dual formulation of
the conditional moment restriction implied by the IV. Moreover,
our stochastic approximation approach to the nonparametric IV problem is
of independent interest. We derive the convergence rate of IVVI. 
Furthermore, we derive the sample complexity of offline RL with 
IVs in the presence of unmeasured confounders.

\subsection*{An Updated Survey}
After this paper has been posted publicly, there is an emerging literature on the use of causal inference in RL.
Thanks to the suggestion by an anonymous reviewer, we present an updated survey for these works, even though some of these works come after this paper.

The work by \citet{wang2018bounded} considers average treatment estimation under unobserved confounders. They study binary IVs and binary treatment, derive conditions for ATE identification and propose a multiple robust estimator based on existing regression based, inverse probability weighting based, and G-formula based estimators.
Building upon the paper by \citet{wang2018bounded}, \citet{cui2021semiparametric} consider estimation of optimal treatment in the presence of unmeasured confounders. They look at settings with binary IVs and binary treatments.
With additional assumptions on the IVs (the no unmeasured common effect modifier or the independent compliance type conditions in their paper), the authors derive identification results for the optimal treatment and proposed multiply robust classification-based estimators. 
Different from our work, the model in the above two papers is one-stage and thus not sequential model. Our stochastic apporoixmation-based estimator is computationally attractive.
The very recent work by \citet{bilodeau2022adaptively} studies the problem of online learning an low-regret dynamic policy. They consider a bandit setting where the reward depends on the action only. Interestingly, they allow the presence of unobserved confounder that affects both the action and the reward.
Different from our paper, we consider an MDP setting where reward depends on the action and the state variable. They consider discrete action space while we work with continuous action space."

There is a line of work that explore the use of IV in RL.
\citet{chen2021estimating} use existing partial identification results for IV and study policy improvement under the binary-treatment binary-instrument setting. Different from our work, our 
paper studies the continuous-treatment continuous-instrument case, and the transition function is identified by the condition moment restriction.
\citet{fu2022offline} focus on the discrete-treatment discrete-instrument case, show that by using certain weighting schemes the values of policies are identified, and adapt pessimistic offline RL algorithm to learn the optimal policy. 
The work of \citet{xu2023instrumental} also study the discrete-treatment discrete-instrument case but, different from \citet{fu2022offline}, they derive the efficient influence function and propose a more efficient estimator for policy values. 
Different from these two works, our identification and estimation strategy is based on conditional moment restrictions. 
\citet{yu2022strategic} study a novel learning setting which they termed strategic MDP. Their model features the strategic interactions between a principal and a sequence of myopic agents with private types. They show that IV structure exists in the model. 
While we work under different learning settings, both their work and ours use conditional moment restrictions for identification and estimation.

Researchers have also explored other causal structure in RL. For example, \citet{wang2021provably} studies confounded MDP where front-door or back-door adjustments are available, \citet{bennett2021proximal} studies the case where proxy variables are present. Even in the IV case, our work has inspired several related work to explore the use of IV for other tasks in RL, such as offline policy evaluation \citet{xu2023instrumental}, offline policy learning in strategic MDP \citet{yu2022strategic}, policy improvement \citet{chen2021estimating} and offline RL with discrete instruments \citet{fu2022offline}. What we aim to show is the CMDP-IV model we propose is a reasonable model for quite a few datasets, and the estimation method is backed by both theoretical proofs and synthetic and semi-synthetic experiments.

\subsection*{Relaxation of \cref{as:dynamic}}

There are several directions to proceed from \cref{as:dynamic}.

For example, we allow instruments to depend on previous history.
Concretely, we allow $z_h$ to depend on $\{x_h, \dots, x_1, z_{h-1},\dots, z_1\}$.
	Consider 
	\begin{align*}
		& x_{h+1} = F(x_h, a_h)+ e_h, \, a_h \sim \pi_h(\cdot | x_h, z_h, e_h), 
		\\
		& \{e_1,\dots, e_H, x_1\} \text{ are independent} , 
		 x_1 \sim \xi, \, e_h \sim N(0, 1) \text{ for all $h$}
		\\
		& z_1 \sim \cP_{z,1}(\cdot|x_1),\, z_h\sim \cP_{z,h}(\cdot | x_h, \dots, x_1, z_{h-1},\dots, z_1) \text{ for all } h.
		\end{align*}
	By the same reasoning as above, a conditional moment restriction will be implied and estimation can be done. 
	Let $L_h$ be the law of $(x_h, a_h, z_h, x_{h+1})$, and $L(x,a,z,x')$ be the average mixture of $\{L_1,\dots, L_H\}$.
	Let $p_{x,z,h} (x, z)$ be the marginal of $(x_h, z_h)$. Then the density of $L_h$ is $p_{x,z,h}(x,z) \cP_e(e) \pi_h(a|x,z,e) 1(z' = F^*(x,a) + e)$, and the density of $L$ is 
	\begin{align*}
		& \frac1H \sum_{h=1}^H p_{x,z,h}(x,z) \cP_e(e) \pi_h(a|x,z,e) 1(z' = F^*(x,a) + e)
		\\
		& = \cP_e(e) \underbrace{\bigg(\frac 1H \sum_{h=1}^H p_{x,z,h}(x,z)\bigg)}_{=:p_{x,z}(x,z)} 
		\underbrace{ \bigg(\sum_{h=1}^H \frac{p_{x,z,h}(x,z)}{\sum_{h'=1}^H p_{x,z,h'}(x,z)} \pi_h(a|x,z,e)\bigg) }_{=:\bar \pi (a|x,z,e)}
		\\
		& \quad \cdot 1(z' = F^*(x,a) + e)	
	\end{align*}
	Clearly for $(x,a,z,e,x') \sim L$ we have $\E[F^*(x,a) - x' | z] = 0$.

	We do not aim to exhaust all possibilities in the paper because we mainly aim to develop the idea that in the confounded MDP setting, one could identify and estimate transition dynamics and thus the optimal policy through conditional moments implied by instrumental variables.

	Episode-wise dependence is also possible. Let $\cF_t =\sigma \{ x_{h,\tau}, z_{h,\tau}, a_{h,\tau}, x'_{h,\tau} \}_{h\in[H],\tau=1,\dots, t}$ be the data of the first $t$ episodes.
	Then we could let $x_{1,t+1} \sim \xi(\cdot | \cF_t)$, i.e., the first state at the $(t+1)$-th episode can be chosen depending on previous $t$ episodes. This is particularly relevant when the behavior policy is updated in the observation process.

The additive structure seems hard to relax in our opinion.
A possible extension is as follows. For ease of notation let the state space $\mathcal{X}$ be $\R$.
Then for a transition function $F: \R \times \cA \times \R \to \R$, the observation dynamics writes 
\begin{align*}
& x_{h+1} = F(x_h, a_h, e_h), \, a_h \sim \pi_h(\cdot | x_h, z_h, e_h), 
\\
& \{e_1,\dots, e_H, x_1, z_1, \dots, z_H\} \text{ are independent} , 
\\
& x_1 \sim \xi, \, e_h \sim N(0, 1), z_h\sim \cP_z \text{ for all } h.
\end{align*}
Following \citet{chernozhukov2007instrumental}, assume $F(x,a,\cdot)$ is strictly increasing on $\R$ for all $(x,a)$. Let $L_h$ be the law of $(x_h, a_h, z_h, x_{h+1})$, and $L(x,a,z,x')$ be the average mixture of $\{L_1,\dots, L_H\}$.
Then by the same argument as that of \citet{chernozhukov2007instrumental}, for any $\tau \in (0,1)$, it holds 
\begin{align}
    \E\big[1\big( x' < F(x, a, \tau)\big) - \tau  \big| z\big] = 0\,
\end{align} 
where the expectation is taken w.r.t\ $(x,a,z, x')\sim L$.
However, having arrived at a conditional moment restriction, we notice several difficulties to proceed. 
First, monotonicity is imposed on the transition function $F(x,a,\cdot)$, and it is unclear how to test this assumption on real data or exploit it for estimation.
Second, the indicator function present in the conditional moment restriction is nonsmooth, bringing challenges to theoretical analysis. 
Traditional estimation methods in existing literature are not suitable for RL applications.
Consider the special case of single decision stage ($H=1$), which is essentially the nonparametric quantile IV (NPQIV) problem in the econometric literature. There, most papers use traditional computation-heavy nonparametric estimators such as sieve or kernel estimators
\citep{chernozhukov2007instrumental,chernozhukov2005iv,horowitz2007nonparametric}, which are not suitable for RL applications where online procedures are definitely preferred.

\subsection*{Comment on \cref{as:iiddata}}

The i.i.d.\ data assumption is purely for simplifying notations. We just replace anywhere in Algorithm 1 where we evaluate a function $f(x,z,a,x')$ at a point $(x, z, a, x')$ drawn from the average visitation distribution $\bar{d}_{\pi_b}$ with $\frac{1}{H} \sum_{h=1}^{H} f(x_h, z_h, a_h, x_{h+1}) $, where $\{ x_h, z_h, a_h, x_{h+1} \}_h$ follows the dynamics describe by  $x_{h+1} = F^*(x_h, a_h) + e_h$ and $a_h \sim \pi_h(\cdot | x_h, z_h, e_h)$.

	Concretely, Lines 3-4 in Algorithm 1
	\begin{align*}
	& \phi_t \leftarrow \phi\left(x_t, a_t\right), \psi_t \leftarrow \psi\left(z_t\right) \\
	& W_{t+1} \leftarrow W_t-\eta_t^\theta \cdot\left(K_t \psi_t \phi_t^{\top}\right), \quad K_{t+1} \leftarrow K_t+\eta_t^\omega \cdot\left(K_t \psi_t \psi_t^{\top}+x_t^{\prime} \psi_t^{\top}-W_t \phi_t \psi_t^{\top}\right) .
	\end{align*}
	should be replaced with 
	\begin{align}
	\label{eq:new_update}
		& W_{t+1} \leftarrow W_t-\eta_t^\theta \cdot
		\Bigg(K_t \bigg(\frac1H \sum_{h=1}^H \psi(z_{h,t} ) \phi( x_{h,t}, a_{h,t}) ^ \top \bigg)\Bigg)
		\\
	& K_{t+1} \leftarrow 
	K_t+\eta_t^\omega \cdot
	\Bigg(\frac1H \sum_{h=1}^H 
	\bigg(K_t\psi(z_{h,t} )^{\otimes 2}
	+  x_{h+1,t} \psi(z_{h,t} )^\top 
	- W_t  \phi( x_{h,t}, a_{h,t}) \psi(z_{h,t})^\top\bigg)\Bigg) .
	\notag
	\end{align}
	where $(x_{h,t}, z_{h,t}, a_{h,t}, x_{h+1, t})$ are the $h$-th step data in the $t$-th episode.

	The proof of convergence for the stochastic approximation procedure goes through by the following reasoning. 
	Suppose we are at the $t$-th iteration of the primal-dual algorithm, then conditional on the previous $(t-1)$ iteration (i.e., data from the previous $t-1$ episodes), the gradients in Eq \ref{eq:new_update} are unbiased estimates of the corresponding population gradient.

	Finally, we remark that in practice, such an i.i.d.\ sampling oracle from $\bar d_{\pi_b}$ can be approximated by the following sampler: draw $k$ uniformly from $[K]$ and $h$ uniformly from $[H]$, and then output $\{x_{h,t}, z_{h,t}, a_{h,t}, x'_{h,t}\}$ which is data at the $h$-th timestep in the $t$-th episode.

\acks{We sincerely thank the editor, Professor Eric Laber, and three anonymous reviewers for their constructive comments. We also thank the attendees of the TTIC Chicago Summer Workshop: New Models in Online Decision Making for Real-World Applications (2022) for many helpful discussions. YW was supported in part by the Office of Naval Research under  grant number N00014-23-1-2590, and the National Science Foundation under  Grant No. ~2231174, and No. ~2310831, No. 2428059, and a Michigan Institute for Data Science Propelling Original Data Science (PODS) grant. MK was supported in part by the National Science Foundation under Grant No.~ECCS-2216912.}

\newpage
\appendix

\section{Proof Sketch}\label{sketch}

The proof consists of two parts: the analysis of the convergence of the stochastic gradient descent-ascent (Line~\ref{algo:beginsgda}--\ref{algo:endsgda}) and the analysis of the planning phase using the estimated model (Line~\ref{algo:initvhp}--\ref{algo:max1}).

In Remark~\ref{rm:oneovertrate} we emphasized the stochastic minimax optimization problem is only strongly concave in the dual variable. This motivates us to study the recursion of the following asymmetric potential function. For some $\lambda>0$, define
\$
\Tilde{P_t} = \E \big[ \|W_t - W^* \|_F\sq\big] + \lambda \E \big[\|K_t - K^*(W_t) \|_F\sq\big]
\$
where $K^*(W) = (WA^\top - C)B\inv$ with $A$, $B$ and $C$ defined in \cref{eq:defAB}. The matrix $K^*(W)$ is the optimal dual variable in the saddle-point problem \cref{eq:estimatorw} when the primal variable is fixed at $W$. In order to get around the assumption of bounded variance of stochastic gradients, which is common in the optimization literature \citet{nemirovski2009robust}, we follow the idea in the work of \cite{sgdandhogwild2018nguyen} where we upper bounds the variance of stochastic gradients by the suboptimality of the current iterate; see Lemma~\ref{lm:obs:varboound}. Thus our algorithm does not require projection in each iteration. A careful analysis of the recursion for the sequence $\{\Tilde{P_t}\}$ shows the error in squared Frobenius norm converges at the rate $O(1/t)$.

The second element in our analysis is the decomposition of difference of value functions, which is adapted from Lemma~4.2 of \cite{provably2020cai}.

\begin{lemma}[Suboptimality Decomposition]\label{lm:errordecomposition}
It holds that for all states $x\in \cS$, 
\#
 V_1^*(x) - V_1^{\hat \pi}(x) =  & \sumh \E_{\pi^*} [\iota _h (x_h,a_h) \cond x_1=x] 
 \label{eq:errordecomp}
\\
&  + 
\sumh \E_{\pi^*} [ \xi_h(x_h) \cond x_1=x ]  - \sumh \E_{\hat \pi } [\iota_h(x_h,a_h) \cond x_1 = x], 
\notag
\#
where $\hat \pi$ is the output of Algorithm \ref{algo}, the expectations $\E_{\pi^*}$ and $\E_{\hat\pi}$ are taken over trajectories generated by policies $\pi^*$ and $\hat\pi$ under the true transition function $F^*$, respectively,  $\xi_h=\langle \hat Q_h, \pi^*_h - \hat \pi _h \rangle_\cA $ for all $x\in \cS$, and $\iota_h = (r_h + \P\Vhat_{h+1}) - \hat Q_h$ for all $(x,a)\in \cS \times \cA$.
\end{lemma}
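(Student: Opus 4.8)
\textbf{Proof plan for Lemma~\ref{lm:errordecomposition} (Suboptimality Decomposition).}

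The plan is to derive the identity by a standard performance-difference telescoping argument, adapted to the fact that $\hat\pi$ is the greedy policy with respect to the \emph{estimated} Q-functions $\hat Q_h$, while the value functions $V_1^*$ and $V_1^{\hat\pi}$ are defined with respect to the \emph{true} transition $F^*$. First I would introduce the shorthand already fixed in the statement: the Bellman residual $\iota_h = (r_h + \P \hat V_{h+1}) - \hat Q_h$ on $\cS\times\cA$, which measures how far $\hat Q_h$ is from satisfying the true Bellman backup applied to $\hat V_{h+1}$; and $\xi_h = \langle \hat Q_h, \pi^*_h - \hat\pi_h\rangle_\cA$, which measures the greedy suboptimality of $\hat\pi_h$ relative to $\pi^*_h$ under $\hat Q_h$ (note $\xi_h \le 0$ since $\hat\pi_h$ is greedy for $\hat Q_h$, but we only need the identity, not the sign).

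The core step is to establish, for each $h$, the two one-step relations expressing $V_h^* - \hat V_h$ and $\hat V_h - V_h^{\hat\pi}$ in terms of $\iota_h$, $\xi_h$, and the corresponding quantity at step $h+1$. Concretely: (i) For the optimal policy, using $V_h^*(x) = \langle Q_h^*(x,\cdot), \pi_h^*(\cdot\mid x)\rangle$ and $Q_h^* = r_h + \P V_{h+1}^*$, I would write $V_h^*(x) - \hat V_h(x) = \langle Q_h^* - \hat Q_h, \pi_h^*\rangle + \langle \hat Q_h, \pi_h^* - \hat\pi_h\rangle = \langle r_h + \P V_{h+1}^* - \hat Q_h, \pi_h^*\rangle + \xi_h(x)$, and then insert $\pm(r_h + \P\hat V_{h+1})$ to split the first term into $\langle \P(V_{h+1}^* - \hat V_{h+1}), \pi_h^*\rangle$ plus $\langle \iota_h, \pi_h^*\rangle$. (ii) For $\hat\pi$, using $V_h^{\hat\pi}(x) = \langle Q_h^{\hat\pi}(x,\cdot), \hat\pi_h(\cdot\mid x)\rangle$ with $Q_h^{\hat\pi} = r_h + \P V_{h+1}^{\hat\pi}$, and $\hat V_h(x) = \langle \hat Q_h, \hat\pi_h\rangle$ (since $\hat\pi_h$ is greedy, $\hat V_h = \max_a \hat Q_h = \langle\hat Q_h,\hat\pi_h\rangle$), I would write $\hat V_h(x) - V_h^{\hat\pi}(x) = \langle \hat Q_h - r_h - \P V_{h+1}^{\hat\pi}, \hat\pi_h\rangle = -\langle \iota_h,\hat\pi_h\rangle + \langle \P(\hat V_{h+1} - V_{h+1}^{\hat\pi}), \hat\pi_h\rangle$. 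Then $V_h^* - V_h^{\hat\pi} = (V_h^* - \hat V_h) + (\hat V_h - V_h^{\hat\pi})$ combines these, and $\langle \P f, \pi_h\rangle(x) = \E_{\pi}[f(x_{h+1})\mid x_h = x]$ lets me fold the $\P(\cdot)$ terms into one-step-ahead conditional expectations under $\pi^*$ and $\hat\pi$ respectively.

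Unrolling this recursion from $h=1$ to $H$, with terminal condition $V_{H+1}^* = \hat V_{H+1} = V_{H+1}^{\hat\pi} = 0$, and using the tower property to push each conditional expectation back to conditioning on $x_1 = x$, yields exactly \eqref{eq:errordecomp}: the $\iota_h$ and $\xi_h$ terms picked up along the optimal trajectory accumulate into $\sum_h \E_{\pi^*}[\iota_h(x_h,a_h)\mid x_1=x] + \sum_h \E_{\pi^*}[\xi_h(x_h)\mid x_1 = x]$, while the $-\iota_h$ terms picked up along the $\hat\pi$ trajectory give $-\sum_h \E_{\hat\pi}[\iota_h(x_h,a_h)\mid x_1 = x]$. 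I do not anticipate a genuine obstacle here — the argument is a routine telescoping once the two one-step identities are set up correctly; the only point requiring care is bookkeeping the distinction between expectations taken under $\pi^*$ versus $\hat\pi$ and making sure the greedy identity $\hat V_h = \langle \hat Q_h, \hat\pi_h\rangle$ is used consistently so that no extra greedy-gap term appears on the $\hat\pi$ side. This is precisely the structure of Lemma~4.2 of \cite{provably2020cai}, so I would follow that template.
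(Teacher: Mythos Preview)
Your proposal is correct and follows essentially the same approach as the paper's proof: split $V_1^* - V_1^{\hat\pi}$ through the intermediate $\hat V_1$, derive the one-step recursions $V_h^* - \hat V_h = \J_h\P(V_{h+1}^* - \hat V_{h+1}) + \J_h\iota_h + \xi_h$ and $\hat V_h - V_h^{\hat\pi} = \hat\J_h\P(\hat V_{h+1} - V_{h+1}^{\hat\pi}) - \hat\J_h\iota_h$ exactly as you describe, and unroll each to $h=H$ using the terminal condition. The only cosmetic difference is that the paper packages the inner products via the operators $\J_h f = \langle f,\pi_h^*\rangle_\cA$ and $\hat\J_h f = \langle f,\hat\pi_h\rangle_\cA$, whereas you write them out explicitly.
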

\begin{proof}
See Appendix \ref{pf:lm:errordecomposition} for a detailed proof.
\end{proof}

\section{Structural Causal Model and Intervention} \label{sec:defofscm}

Structural Causal Models (SCMs) provide a formalism to discuss the concept of causal effects and intervention. We briefly review its definition in this section and refer readers to \citet[Ch.~7]{pearl2009causality} for a detailed survey of SCMs.

 A structural causal model is a tuple $(A,B,F,P)$, where $A$ is the set of exogenous (unobserved) variables, $B$ is the set of endogenous (observed) variables, $F$ is the set of structural functions capturing the causal relations, and $P$ is the joint distribution of exogenous variables. 
 An SCM is associated with a causal directed acyclic graph, where the nodes represent the endogenous variables and the edges represent the functional relationships. In particular, each exogenous variable $X_j\in B$ is generated through $X_{j}=f_{j}(X_{\mathrm{pa}_{D}(j)}, U_{j})$ for some $f_j\in F$, $U_j\in B$, where $\mathrm{pa}_{D}(j)$ denotes the set of parents of $X_j$ in $D$. A distribution over the endogenous variables is thus entailed. 

An intervention on a set of endogenous variables $X\subseteq B$ assigns a value $x$ to $X$ while keeping untouched other exogenous and endogenous variables and the structural functions, thus generating a new distribution over the endogenous variables. We denote by $\doo(X = x)$ the intervention on $X$ and write $\doo(x)$ if it is clear from the context. A stochastic intervention on a set of endogenous variables $X \subseteq B$ assigns a distribution $p$ to $X$ regardless of the other exogenous and endogenous variables as well as the structural functions. We denote by $\doo(X \sim p)$ the stochastic intervention on $X$. An intervention induces a new distribution over the endogenous variables.

For any two variables $X,Y\in B$ with a directed path from $X$ to $Y$ in $D$, we say the causal effect from $X$ to $Y$ is \textit{confounded} if $p(y | \doo(X=x)) \neq p(y| X=x)$ \citep[Def.~6.39]{peters2017elements}.
\section{Proofs} \label{app_proofs_of_theorems}

\subsection{Proof of Proposition~\ref{lm:mixture}} \label{pf:lm:mixture}
\begin{proof}[Proof of Proposition~\ref{lm:mixture}]

We recall the trajectories of a behavior policy is generated through \cref{eq:modelbegin} with $\{ e_h\}_h \indep \{z_h \}_h$. Let $p_{x,h}$ be the marginal distribution of $x_h$. Also define the probability density function and probability mass function
\$
& p_{a,h}(a\cond x,z,e) \defeq \pi_{b,h}(a\cond x,z,e) ,
\\
& p_{x'}(x'\cond x,a,e) \defeq \indi\{ x' = F^*(x,a)+e \}.
\$
Then the marginal distribution of $(x_h,a_h,z_h,e_h,x'_h)$, denoted $d_{h,\pi_b, *}$ (we use $*$ to emphasize the presence of unobserved confounder $e_h$), admits the factorization 
\$
d_{h,\pi_b,*}(x,a,z,e,x') = \cP_z(z)\cP_e(e) p_{x,h}(x) \cdot p_{a,h}(a\cond x,z,e) \cdot p_{x'}(x'\cond x,a,e).
\$
And the average visitation distribution of all random variables $\{x_h,a_h,z_h,e_h,x'_h\}_h$ is
\$
& \bar d_{\pi_b,*}(x,a,z,e,x') 
\\
&\defeq \frac1H \sum_{h=1}^H d_{h,\pi_b,*}(x,a,z,e,x')
\\
&=  \cP_z(z)\cP_e(e)  \cdot \Bigg(\sum_{h=1}^H {p_{x,h}(x)} p_{a,h}(a\cond x,z,e)\Bigg) \cdot p_{x'}(x'\cond x,a,e)
\\
&= \cP_z(z)\cP_e(e)  \cdot\Bigg(\frac{1}{H}\sum_{h=1}^H p_{x,h}(x)\Bigg) \cdot \Bigg(\sum_{h=1}^H \frac{p_{x,h}(x)}{\sum_{k=1}^H p_{x,k}(x)} p_{a,h}(a\cond x,z,e)\Bigg) \cdot p_{x'}(x'\cond x,a,e).
\$
Define the weighted policy $\bar \pi(a\cond x,z,e)=\big(\sum_{h=1}^H {p_{x,h}(x)}{} p_{a,h}(a\cond x,z,e)\big) / \sum_{h=1}^H p_{x,h}(x)$ and the average state visitation distribution $p_x=\frac{1}{H}\sum_{h=1}^H p_{x,h}(x)$. Then $(x,a,z,e,x')\sim \bar d_{\pi_b,*}$ can be equivalently written as
\$
z\sim \cP_z,\,e\sim \cP_e,\,x\sim p_x, \, a\sim \bar \pi(\cdot \cond x,z,e),\, x'=F(x,a) + e.
\$
We conclude if $(x,a,z,e,x')\sim \bar d_{\pi_b,*}$ then $x'=F^*(x,a)+e$ with $\E[e\cond z]=0$.
\begin{remark}\label{rm:extend} 
We also have $\E[e\cond x,z]=0$ so we could extend the instrument $z$ to $ z \leftarrow [x,z]$, and the algorithm and the theory in this paper remain the same.
\end{remark}

\end{proof}

\subsection{Proof of Proposition~\ref{lm:mupisillposenss}}
\label{pf:lm:mupisillposenss}
\begin{proof}[Proof of Proposition~\ref{lm:mupisillposenss}]
First note for $f =\phi \cdot \theta \in \Hphi$, the operator $\Pi_\psi \cT f$ admits the form
\$
\Pi_\psi \cT f = \psi^\top \E[\psi(z)\psi(z)^\top] \inv \E[\psi(z)  (\theta \cdot \phi)(x,a)] = \psi^\top B\inv A\theta.
\$
Recall $\|f \|_\phi = \| \theta \|$. Then the feature map ill-poseness can be written as
\$
{\muIV}\defeq \min_{f\in \Hphi} \frac{\|\Pi_\psi \cT f \|_\ltz \sq }{\| f\|_\phi\sq} = \min_{\theta \neq 0} {\frac{{\theta^\top (A^\top B\inv A)\theta}}{\theta^\top \theta}},
\$
which is the minimum eigenvalue of the matrix $A^T B\inv A$. This completes the proof of Proposition~\ref{lm:mupisillposenss}.
\end{proof}

\subsection{Proof of Lemma \ref{lm:errordecomposition}}  \label{pf:lm:errordecomposition}

To facilitate the discussion, we recall the definitions of relevant quantities and define some auxiliary operators. We define the operators $\J_h$ and $\hat \J_h$
\# \label{eq:defjh}
\left(\mathbb{J}_{h} f\right)(x)=\left\langle f(x, \cdot), \pi_{h}^{*}(\cdot \mid x)\right\rangle, 
\quad
( \hat \J_{h} f)(x)=\left\langle f(x, \cdot), \hat \pi_{h}(\cdot \mid x)\right\rangle
\#
for any $h\in [H]$ and function $f: \mathcal{S} \times \mathcal{A} \rightarrow \mathbb{R}$. For any function $g:\cS \rightarrow \R$, given the model parameter $\hat W$, define the operator
\$
(\hat \P g) (x,a) = \int  g (x')\cP_{\hat W}(x'\cond x,a)\diff x' ,
\$
where $\cP_{\hat W}(x'\cond x,a)$ is the probability density of $d_x$-dimensional Gaussian distribution with mean $\hat W \phi(x,a)$ and variance $\sigma^2 I_{d_x}$ (we overload notations and let $\cP$ denotes both the distribution and the density of a Gaussian). For the true underlying transition dynamics with model parameter $W^*$, we define the operator \# \label{eq:defp}( \P g) (x,a) = \int  g (x')\cP_{ W^*}(x'\cond x,a)\diff x'.\#

We define the quantity 
\# \label{eq:defxih}
\xi_{h}(x)=(\mathbb{J}_{h} \hat Q_{h})(x)-(\hat \J_{h} \hat Q_{h})(x)=\langle \hat  Q_{h}(x, \cdot), \pi_{h}^{*}(\cdot \mid x)-\pi_{h}(\cdot \mid x)\rangle
\#
for any $h\in [H]$ and all state $x\in \cS$.

Now we clarify the relationship among $(\pi^*, \Qstar, \Vstar)$, $(\hat \pi, \Qhat, \Vhat)$ and $(\hat \pi, \Vpihat, \Qpihat)$. Recall the Bellman equation of the optimal policy $\pi^*$. For $h=1,\ldots,H$,
\#
& Q^*_h = r_h + \P( V^*_{h+1}), \quad \forall (x,a), \label{eq:bellqstar}
\\
& \Vstarh = \langle \pi^*,Q^*_{h}\rangle = \J_h Q^*_h, \quad \forall x, \label{eq:bellvstar}
\\
& V^*_{H+1} = 0
\#
and the set of Bellman optimality equations that $\pi ^*$ satisfies: $\pi^*_h(x) =\argmax_a \Qstarh(x,a)$, and  $\Vstarh=\max_a \Qstarh$. 

The update rules of $\hat \pi$ in Algorithm \ref{algo} imply the following equations relating $\hat \pi$, $\Qhat$ and $\Vhat$. For $h=1,\ldots, H$,
\#
& \Qhath = r_h + \hat \P \hat V_{h+1}, \quad \forall (x,a), \label{eq:updateqhat}
\\
& \hat \pi_h(\cdot \cond x) = \argmax_a \Qhath(x,a), \quad \forall x, \label{eq:updatepihat} 
\\
& \Vhath = \la \Qhath,\hat \pi_h \ra = \max_a \Qhath = \hat \J_h \Qhath, \quad \forall x. \label{eq:updatevhat}
\#
We recall the definition of the model prediction term
\# \label{eq:defiotah}
\iota_h = (r_h + \P \hat V_{h+1}) - \Qhath
\#
for all $(x,a)\in \cS \times \cA$.
Finally, since $\Qpihat$ and $\Vpihat$ are the $Q$ function and value function of the output policy $\hat \pi$, the Bellman equations for $\hat \pi$ holds: for $h=1,\ldots, H$
\#
& \Qpihath = r_h + \P  V^{\hat \pi}_{h+1},  \quad \forall (x,a) \label{eq:bellqhat}
\\
& \Vpihath = \la \Qpihath, \hat \pi_h\ra = \hat \J_h \Qpihath,  \quad \forall x \label{eq:bellvhat}
\\
& V^{\hat \pi }_{H+1} = 0.
\#

\begin{proof}[Proof of Lemma \ref{lm:errordecomposition}]
We first write 
\$
V^*_1 - V^{\hat \pi}_1 = (V^*_1 - \hat V_1) - (\hat V_1 - V^{\hat \pi}_1).
\$
Next we analyze the two terms separately.

\textbf{Part I: Analysis of $(V^*_1 - \hat V_1) $.} For all state $x\in \cS$, and any $h=1,\ldots, H$
\#
V^*_h - \hat V_h & = \la \pi^*_h ,\Qstarh \ra - \la \Qhath ,\hat \pi _h\ra \label{0907203a}
\\
&= \J_h \Qstarh - \hat \J_h \Qhath \label{0907203b}
\\
&= \J_h (\Qstarh - \Qhath) + (\J_h - \hat J_h) \Qhath \label{0907203c}
\\
&= \J_h (\Qstarh - \Qhath) + \xi_h \label{0907203d}
\\
&= \J_h( [r_h + \P \Vstarhp ] -[r_h + \P\Vhathp - \iota_h] ) + \xi_h \label{0907203e}
\\
&= \J_h\P (\Vstarhp - \Vhathp ) + \J_h \iota_h + \xi_h. \label{0907203f}
\#
Here 
\eqref{0907203a} follows from Bellman equations of $\Vstarh$ \cref{eq:bellvstar} and the update rule of $\Vhath$ \cref{eq:updatevhat}; 
\eqref{0907203b} follows from the definition of operators $\J_h$ and $\hat \J_h$ \cref{eq:defjh};
in \eqref{0907203c} we add and subtract $\J_h \Qhath$;
\eqref{0907203d} follows from definition of $\xi_h$ in \cref{eq:defxih};
\eqref{0907203e} follows by using the Bellman equations satisfied by $\Qstarh$ and the definition of $\iota_h$ in \cref{eq:defiotah}.

Next we apply the above recursion formula for the sequence $\{V^*_h - \hat V_h\}_{h=1}^{H}$ repeatedly and obtain 
\$
V_{1}^{*}- \hat V_{1} =
\left(\prod_{h=1}^{H} \J_{h} \mathbb{P} \right)\left(V^{*}_{H+1}- \hat V_{H+1}\right)+
\sum_{h=1}^{H}\left(\prod_{i=1}^{h-1} \J_{i} \mathbb{P}\right) \mathbb{J}_{h} \iota_{h}+
\sum_{h=1}^{H}\left(\prod_{i=1}^{h-1} \J_{i} \mathbb{P}\right) \xi_{h}.
\$
Using $V^{*}_{H+1}= \hat V_{H+1}=0$ gives
\# \label{eq:recursionsh}
V_{1}^{*}- \hat V_{1} = \sum_{h=1}^{H}\left(\prod_{i=1}^{h-1} \J_{i} \mathbb{P}\right) \mathbb{J}_{h} \iota_{h}
+\sum_{h=1}^{H}\left(\prod_{i=1}^{h-1} \J_{i} \mathbb{P}\right) \xi_{h}.\#
By definitions of $\P$ in \cref{eq:defp}, $\J_h$ in \cref{eq:defjh}, and $\xi_h$ in \cref{eq:defxih}, we can equivalently write \cref{eq:recursionsh} in the form of expectation w.r.t the optimal policy $\pi^*$. For all $x\in \cS$,
\# \label{eq:errorde1}
V_{1}^{*}(x)- \hat V_{1}(x) = 
 \sumh \E_{\pi^*} [\iota _h (x_h,a_h) \cond x_1=x] + \sumh \E_{\pi^*} [ \xi_h(x_h)] \cond x_1=x ] .
\#

\textbf{Part II: Analysis of $(\hat V_1 - V^{\hat \pi}_1) $.} Notice for any $h=1,\ldots, H$,
\#
\Vhath - \Vpihath & = \hat \J_h \Qhath - \hat \J_h \Qpihath \label{9702201a}
\\
&= \hat \J_h ([r_h + \P\Vhathp - \iota_h] - [r_h+\P \Vpihath]) \label{9702201b}
\\
&= \hat \J_h \P (\Vhathp - \Vpihathp) - \hat \J_h \iota_h. \label{9702201c}
\#
Here \eqref{9702201a} follows from the update rule of $\Vhath$ \cref{eq:updatevhat} and the Bellman equation satisfied by $\Vpihath$ in \cref{eq:bellvhat};
\eqref{9702201b} follows from the Bellman equation satisfied by $\Qpihath$ in \cref{eq:bellqhat} and the definition of the model prediction error $\iota_h$ in \cref{eq:defiotah}.

Apply the recursion repeatedly we obtain
\$
\hat V_1 - V^{\hat \pi}_1 =
\left(\prod_{h=1}^{H} \hat \J_{h} \mathbb{P} \right)\left(\hat V _{H+1}- V^{\hat \pi}_{H+1}\right) -
\sum_{h=1}^{H}\left(\prod_{i=1}^{h-1} \hat \J_{i} \mathbb{P}\right) \hat {\J}_{h} \iota_{h}
\$
Using $\hat V _{H+1}=0$ by Line \ref{algo:initvhp} of Algorithm \ref{algo} and $V^{\hat \pi}_{H+1}=0$, we obtain 
\#\label{eq:recurssionhh}
\hat V_1 - V^{\hat \pi}_1 = -
\sum_{h=1}^{H}\left(\prod_{i=1}^{h-1} \hat \J_{i} \mathbb{P}\right) \hat {\J}_{h} \iota_{h}.
\#
By definition of $\hat\J_h$ in \cref{eq:defjh}, we write \cref{eq:recurssionhh} in the form of expectation w.r.t. the policy $\hat \pi$, and we have for all state $x\in \cS$
\# \label{eq:errorde2}
\hat V_1(x) - V^{\hat \pi}_1 (x) = - \sumh \E_{\hat \pi} [\iota _h (x_h,a_h) \cond x_1=x] .
\#
Putting together \cref{eq:errorde1} and \cref{eq:errorde2} completes the proof of Lemma \ref{lm:errordecomposition}.
\end{proof}

\subsection{Proof of Theorem~\ref{thm:convergence}}
\label{pf:thm:convergence}
We define
\$
& \mu_A = \sigma_{\min}\big(\sqrt{A^\top A}\big), \quad&&
L_A = \sigma_{\max}\big(\sqrt{A^\top A}\big),
\\
& \mu_B = \sigma_{\min}(B), \quad&&
L_B = \sigma_{\max}(B)\,,
\$
where for a symmetric positive definite matrix $M$, the matrix $\sqrt{M}$ is the unique matrix such that $M=\sqrt{M}\sqrt{M}$.
Recall the update rule in Algorithm \ref{algo} is 
\# \label{eq:210118}
W_{t+1}=  W_{t}-\eta^\theta_t \cdot (K_t \psi_t) \phi_t ^\top, \quad K_{t+1}= K_t + \eta^\omega_t\cdot (K_t \psi_t + x'_t  -W_t\phi_t) \psi_t ^\top \,. 
\#

Recall the saddle-point problem \cref{eq:theminimax} and we denote the saddle-point function by $\Phi_i$, i.e.,
\#\Phi_i(\theta, \omega)  \defeq \theta ^\top A^\top \omega - b_i^\top \omega - \tfrac{1}{2} \omega ^\top B \omega , \label{eq:def:Phi}\#
where $b_i = \E[x_i\psi(z)^\top]$. 
Given $\Phi_i$ defined above, we optimize out the dual variable, and define the primal function $P_i$ and the optimal dual variable $\hat \omega_i$ as follows.
\#
P_i(\theta) &=  \max_{\ome} \Phi_i(\th,\ome)
=\tfrac{1}{2}(A\th - b_i)^\top B^{-1} (A\th - b_i)
\label{eq:def:P}
\\
\hat \ome _i(\theta) &=  \argmax_{\ome} \Phi_i(\th,\ome) 
=
B^{-1} (A\theta - b_i).
\notag
\#
Uniqueness of $\hat \ome _i(\theta)$ is guaranteed by on the full-rankness of $A$ and $B$ (Assumption \ref{as:nonddualfeature}).
Define by $(\theta^{\sad}_i, \omega^{\sad}_i)$ the saddle-point of the convex-concave function $\Phi_i$. Then we have
\$ 
\theta^\sad_i = \argmin_{\theta}P_i(\theta),\quad  \omega^\sad_i = \hat \ome_i (\theta^*_i).
\$

Due to the separable structure of the update \cref{eq:210118}, if we denote the iterates $(W_t,K_t)$ by $W_t=[\theta_{1,t},\dots, \theta_{d_x,t}]^\top$ and $K_t = [\omega_{1,t},\dots,\omega_{d_x,t}]^\top$, then we can equivalently write the update as follows. For $i=1,\ldots, d_x$,
\# 
\theta_{i,t+1} 
&=
 \theta_{i,t} - \etathet \Tilde{\nabla}_\th \Phi_i(\theta_{i,t},\omega_{i,t})  \notag
\\
&=
\theta_{i,t} -\etathet (\phi(x_t,a_t) \psi(z_t)^\top) \omega_{i,t}  \label{eq:thetaupdate}
\\
\omega_{i,t+1} 
&=
 \omega_{i,t} + \etaomet \Tilde{\nabla}_\ome \Phi_i(\theta_{i,t},\omega_{i,t})   \notag
\\&=
 \omega_{i,t} + \etaomet (\phi(x_t,a_t)^\top \theta_{i,t} -x_{i,t}' - \psi(z_t)^\top \omega_{i,t}  )\psi(z_t) \,.
 \label{eq:omegaupdate}
\#

 Denote by $( W^\sad, K^\sad )$ the saddle-point of the problem \cref{eq:estimatorw}. Let $( \theta^\sad,  \omega^\sad)$ be the saddle-point of $\Phi_i$ in \cref{eq:def:Phi}. Since the minimax problem \cref{eq:estimatorw} is separable in the each coordinate of the primal and the dual variables, we have $ \theta^\sad = { W_i^\sad}$ and $ \omega ^\sad=  K_i^\sad$, for all $i=1,\ldots, d_x$, where $ W_i^\sad$ is the $i$-th row of the matrix $ W^\sad$, and $ K_i^\sad$ is the $i$-the row of $ K^\sad$. So we turn to study the convergence of $\{ \theta_{i,t},\omega_{i,t}\}_t$ to the saddle-point of $\Phi_i$.  

In the rest of the discussion we will ignore the subscript $i$ in $\ome_{i,t},\theta_{i,t}, x_{i,t}, x'_{i,t},\Phi_i, P_i,\hat \ome_i$ and $b_i$. Define the gradient of $\Phi$ evaluated at $(\thet,\omet)$, $\nabla_\theta \Phi$ and $\nabla_\omega \Phi$, and its stochastic version given a new data tuple $\xi_t=(x_t, a_t, z_t, x'_t)$, $\Tilde{\nabla}_\th \Phi$ and $\Tilde{\nabla}_\ome \Phi$, by
\# \label{eq:def:sg}
\nabla_\th \Phi(\thet,\omet) &= A^\top \omet,
\quad
&& \Tilde{\nabla}_\th \Phi(\thet,\omet;\xi_t) =  (\phi(x_t,a_t) \psi(z_t)^\top) \omega_{t} 
\\
\nabla_\ome \Phi(\thet,\omet) &= A\thet - b - B\omet, 
\notag
\quad
&& \Tilde{\nabla}_\ome \Phi(\thet,\omet;\xi_t) =  ( \phi(x_t,a_t)^\top \theta_{t} -x_{t}' - \psi(z_t)^\top \omega_{t}  )\psi(z_t) .
\notag
\#
 We will ignore the dependence of $\natthephi$ and $\natomephi$ on $\xi_t$ from now on. Define the auxiliary update sequences given the stochastic update sequence $\{ \thet,\omet\}$ in \cref{eq:thetaupdate} and \cref{eq:omegaupdate},
\$
\Tilde{\theta}_{t+1} &= \thet - \etathet {\nabla}_\th \Phi(\thet,\omet)
&&= \thet -\etathet A^\top \omet
\\
\hat{\theta}_{t+1} &= \thet - \etathet {\nabla P}(\thet )
&&= \thet - \etathet A^\top B^{-1} (A\thet - b) \,,
\\
\Tilde{\omega}_{t+1} &= \omet +\etaomet \nabla \Phi(\thet,\omet)
&&=   \omet + \etaomet (A\thet - b - B\omet)\,.
\$
 Define the $\sigma$-algebras $\cF_0 = \sigma\{\th_0,\ome_0 \}$, and $\cF_t =\sigma \{\th_0,\ome_0, \{ x_j,a_j,z_j,x'_j\}_{j=0}^{t-1}\}$ for $t=1,\ldots, T$. Note $\xi_{t-1} \in \cF_t$ but $\xi_t\notin \cF_t$. Note that for all $t\geq 1$, the random variables $\xi_{t-1}$, $\theta_t,\omet, \Tilde{\theta}_{t+1}, \Tilde{\omega}_{t+1}$ and $\hat{\theta}_{t+1}$ are deterministic given $\cF_t$, and we obviously have 
 \$\E[\Tilde{\nabla}_\th \Phi(\thet,\omet)  \cond \cF_t]= \nabla_\th \Phi(\thet,\omet)  
 \quad \text{and}\quad
 \E[\Tilde{\nabla}_\ome \Phi(\thet,\omet)  \cond \cF_t]= \nabla_\ome \Phi(\thet,\omet) .\$
We will denote $\E_t[\cdot]=\E[\cdot \cond \cF_t]$.

We start with some basic observations of the functions $P$ and $\Phi$.

\begin{lemma} \label{lm:obs}
Consider the functions $P$ in \cref{eq:def:P} and $\Phi$ in \cref{eq:def:Phi}.
\begin{enumerate}
   
    \item  \label{lm:obs:primalkappa} Recall $\muIV$ and $L_P$ are the minimum and the maximum eigenvalues of the matrix $A^\top B\inv A$, respectively. Then the function $P$ is $\muIV$-strongly convex and $L_P$-smooth. Moreover, we have $\muIV \geq \mu_A^2 / L_B$, and $L_P \leq \min\{ 1, L_A^2 / \mu_B\}$.
    
    \item  \label{lm:obs:dualkappa} For any fixed $\theta$, the function $\ome\mapsto -\Phi(\theta,\ome)$ is $\mu_B$-strongly convex and $\LB$ smooth.
    
     \item (Proposition~\ref{prop:wstartequalwsad})\label{lm:obs:identification} Assumptions~\ref{as:spec} and \ref{as:nonddualfeature} imply the existence and uniqueness of a  matrix $W^*=[W_1^*,\ldots,W_{d_x}^*]\in\R^{d_x\times d_{\phi}}$ such that $\E[W^*\phi(x,a)\cond z] = \E[x'\cond z]$. 
    Assumption \ref{as:nonddualfeature} implies the uniqueness of the saddle-point $( \theta^\sad, \omega^\sad)= \argmin_{\theta\in\R^{d_\phi}} \max_{\omega \in \R^{d_\psi}} \Phi_i(\th,\ome)$.
    Furthermore, in addition to Assumptions~\ref{as:spec} and \ref{as:nonddualfeature}, if Assumption \ref{as:dualapprox} holds, then $W^*_i = \theta ^\sad$ and $\omega^\sad = \hat \omega_i(\hat \theta)=0$.
\end{enumerate}
\begin{proof}
See \cref{pf:lm:obs}.
\end{proof}
\end{lemma}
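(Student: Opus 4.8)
The plan is to handle the three claims separately; once each object is written as an explicit quadratic form, the first two are pure eigenvalue bookkeeping and the third is a short identification argument.

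For item~\ref{lm:obs:primalkappa}, I would first note that, by \eqref{eq:def:P}, $P$ is a quadratic with constant Hessian $\nabla^2 P \equiv A^\top B^{-1}A$, so it is $\sigma_{\min}(A^\top B^{-1}A)$-strongly convex and $\sigma_{\max}(A^\top B^{-1}A)$-smooth; by Proposition~\ref{lm:mupisillposenss} these constants are exactly $\muIV$ and $L_P$. For $\muIV \ge \muA^2/\LB$, use $B \preceq \LB I$, hence $B^{-1} \succeq \LB^{-1}I$, so $A^\top B^{-1}A \succeq \LB^{-1}A^\top A$ and $\muIV \ge \LB^{-1}\sigma_{\min}(A^\top A) = \muA^2/\LB$; symmetrically $B^{-1}\preceq \muB^{-1}I$ gives $L_P \le \LA^2/\muB$. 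For the bound $L_P \le 1$, I would reuse the identity $\|\Pi_\psi\cT f\|_{\ltz}^2 = \theta^\top A^\top B^{-1}A\theta$ for $f=\phi\cdot\theta$ established in the proof of Proposition~\ref{lm:mupisillposenss}, so that $L_P = \sup_{\theta\neq 0}\|\Pi_\psi\cT f\|_{\ltz}^2/\|\theta\|_2^2$, and then chain $\|\Pi_\psi\cT f\|_{\ltz}\le \|\cT f\|_{\ltz}$ (non-expansiveness of the projection), $\|\cT f\|_{\ltz}^2 = \E[(\E[f(x,a)\mid z])^2]\le \E[f(x,a)^2]$ (Jensen), and $\E[f(x,a)^2] = \theta^\top D\theta \le \sigma_{\max}(D)\|\theta\|_2^2 \le \|\theta\|_2^2$, where $\sigma_{\max}(D)\le 1$ follows from $\|\phi\|_2\le 1$ in Assumption~\ref{as:boundedfeature}.

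Item~\ref{lm:obs:dualkappa} is immediate: by \eqref{eq:def:Phi}, $\ome\mapsto -\Phi(\theta,\ome) = \tfrac12\ome^\top B\ome - (A\theta-b)^\top\ome$ is a quadratic with constant Hessian $B$, whose eigenvalues lie in $[\muB,\LB]$ by Assumption~\ref{as:nonddualfeature}. For item~\ref{lm:obs:identification}, existence of $W^*$ is Assumption~\ref{as:spec} combined with the CMR (Proposition~\ref{lm:mixture}), which coordinatewise reads $\cT(W^*_i\cdot\phi) = \E[x'_i\mid\cdot]$; uniqueness follows since if $\tilde W$ also satisfies the CMR then $\cT((W^*_i-\tilde W_i)\cdot\phi)=0$, hence $(W^*_i-\tilde W_i)^\top A^\top B^{-1}A(W^*_i-\tilde W_i) = \|\Pi_\psi\cT((W^*_i-\tilde W_i)\cdot\phi)\|_{\ltz}^2 = 0$ and $\muIV>0$ forces $W^*=\tilde W$. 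Uniqueness of the saddle-point of $\Phi_i$ holds because the inner maximizer $\hat\ome_i(\theta)=B^{-1}(A\theta-b_i)$ is unique ($B$ invertible) and the reduced primal $P_i$ is $\muIV$-strongly convex by item~\ref{lm:obs:primalkappa}, so it has a unique minimizer $\theta^\sad_i$.

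It remains to identify $\theta^\sad_i$ with $W^*_i$ and to show $\omega^\sad=0$ under Assumption~\ref{as:dualapprox}. A direct computation from \eqref{eq:def:P} (the optimized-out dual of the per-coordinate, subspace-restricted version of \eqref{eq:fu31-2}) gives $P_i(\theta) = \tfrac12\|\Pi_\psi(\cT(\phi\cdot\theta) - \E[x'_i\mid\cdot])\|_{\ltz}^2$. Since $f^*_i = W^*_i\cdot\phi\in\Hphi$, Assumption~\ref{as:dualapprox} makes both $\cT(\phi\cdot\theta)$ and $\E[x'_i\mid\cdot]=\cT f^*_i$ lie in $\Hpsi$, so the projection is vacuous and $P_i(\theta) = \tfrac12\|\cT(\phi\cdot(\theta-W^*_i))\|_{\ltz}^2$; using once more that $\cT g\in\Hpsi$ for $g\in\Hphi$ together with the eigenvalue identity yields $P_i(\theta)\ge \tfrac{\muIV}{2}\|\theta-W^*_i\|_2^2$, so $P_i$ is minimized uniquely at $W^*_i$ and $\theta^\sad_i=W^*_i$. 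Finally $A\theta^\sad_i = AW^*_i = \E[\psi(z)\,\cT f^*_i(z)] = \E[\psi(z)\,\E[x'_i\mid z]] = \E[\psi(z)x'_i] = b_i$ by the CMR, whence $\omega^\sad=\hat\ome_i(\theta^\sad_i)=B^{-1}(A\theta^\sad_i-b_i)=0$. The only step beyond routine quadratic-form and eigenvalue manipulation is this last one: making precise the reduction from the matrix saddle-point \eqref{eq:estimatorw} to the per-coordinate $L^2$ least-squares problem, and observing that realizability forces $\E[x'_i\mid\cdot]=\cT f^*_i$ to land in $\Hpsi$; this is where I expect to spend the most care.
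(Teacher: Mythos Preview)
Your proposal is correct and follows the same overall plan as the paper, but two of your sub-arguments are arranged differently and are worth noting. For the bound $L_P\le 1$, the paper factors $A^\top B^{-1}A = D^{1/2}(D^{-1/2}A^\top B^{-1/2})(B^{-1/2}AD^{-1/2})D^{1/2}$ and appeals to $\|D^{-1/2}A^\top B^{-1/2}\|\le 1$ together with $\|D\|\le 1$; your chain $\|\Pi_\psi\cT f\|_{\ltz}\le\|\cT f\|_{\ltz}\le\|f\|_{\ltxa}\le\|\theta\|_2$ (projection non-expansive, Jensen, then $\|D\|\le 1$ from Assumption~\ref{as:boundedfeature}) is the function-space version of the same inequality and is a bit more self-explanatory, since the paper's bound on $\|D^{-1/2}A^\top B^{-1/2}\|$ is exactly what your Jensen step proves. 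The more substantive difference is in item~\ref{lm:obs:identification}: to conclude $\theta^\sad_i=W^*_i$, the paper invokes Theorem~\ref{lm:misspecification} with $\eta_1=\eta_2=0$, whereas you argue directly that under Assumption~\ref{as:dualapprox} the primal objective satisfies $P_i(\theta)=\tfrac12\|\cT(\phi\cdot(\theta-W^*_i))\|_{\ltz}^2\ge \tfrac{\muIV}{2}\|\theta-W^*_i\|_2^2$; your route is shorter and keeps the lemma self-contained, at the cost of not illustrating that the identification is the $\eta_1=\eta_2=0$ special case of the misspecification theorem. The remaining steps (item~\ref{lm:obs:dualkappa}, uniqueness of $W^*$ via $\muIV>0$, and $\omega^\sad=0$ via $A W^*_i=b_i$) match the paper up to cosmetic differences.
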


Item~\ref{lm:obs:identification} above shows that under the assumptions listed in Theorem~\ref{thm:convergence}, the saddle-point of $\Phi_i$ equals to the $i$-th row of the unknown transition matrix $W^*$. To emphasize this we now define by $(\thest,\omest)$ the saddle-point of the function $\Phi$.
Next we present some descent lemmas about the sequence $\{ \thet, \omet\}$. Denote the second moment of the stochastic gradient evaluated at the saddle-point of $\Phi$, $(\thest,\omest)$ by \$\signathesq = \E[ \|\Tilde{\nabla}_\theta \Phi(\thest,\omest)\|^2] \quad \text{and} \quad \signaomesq = \E[ \|\Tilde{\nabla}_\ome \Phi(\thest,\omest)\|^2],\$ where $\Tilde{\nabla}_\theta \Phi$ and $\Tilde{\nabla}_\ome \Phi$ are defined in \cref{eq:def:sg}. First we show the variance of stochastic gradient can be bounded by the suboptimality of the current iterate.

\begin{lemma}[Bounding variance of stochastic gradients]
 \label{lm:obs:varboound} Consider the sequence $\{ \omet, \thet\}$. If \cref{as:boundedfeature} holds, then
\#
\E_t \big[ & \|\Tilde{\nabla}_\theta \Phi(\thet,\omet) - {\nabla}_\theta \Phi(\thet,\omet)\|^2\big]  \notag
\\ &\leq
4 ( \muB\inv \| \thet - \thest\|^2 + \| \omet - \omehat(\thet)\|^2    ) + 2 \signathesq,
\label{eq:vartheta}
\\
\E_t \big[ & \|\Tilde{\nabla}_\ome \Phi(\thet,\omet) - {\nabla}_\ome \Phi(\thet,\omet)\|^2\big]  \notag
\\ 
& \leq
16 ( \muB\inv\| \thet - \thest\|^2 +  \| \omet - \omehat (\thet)\|^2 ) + 2 \signaomesq.
\label{eq:varomega}
\#
where we condition on $\cF_t$ and take expectation over the new data tuple $\xi_t$.
\end{lemma}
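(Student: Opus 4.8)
The plan is to work coordinate by coordinate (the index $i$ being suppressed as in the surrounding discussion) and to bound the two \emph{conditional second moments} $\E_t\|\Tilde{\nabla}_\th\Phi(\thet,\omet;\xi_t)\|^2$ and $\E_t\|\Tilde{\nabla}_\ome\Phi(\thet,\omet;\xi_t)\|^2$, since these dominate the left-hand sides of \eqref{eq:vartheta} and \eqref{eq:varomega}. Indeed, writing $\hat A_t=\psi(z_t)\phi(x_t,a_t)^\top$, $\hat B_t=\psi(z_t)\psi(z_t)^\top$ and $\hat b_t=x'_t\psi(z_t)$, we have $\Tilde{\nabla}_\th\Phi(\thet,\omet;\xi_t)=\hat A_t^\top\omet$ and $\Tilde{\nabla}_\ome\Phi(\thet,\omet;\xi_t)=\hat A_t\thet-\hat b_t-\hat B_t\omet$, with conditional means $A$, $B$, $b$; so $\E_t[\Tilde{\nabla}_\th\Phi]=\nabla_\th\Phi$ and $\E_t[\Tilde{\nabla}_\ome\Phi]=\nabla_\ome\Phi$, whence the left-hand sides of \eqref{eq:vartheta} and \eqref{eq:varomega} equal the conditional second moments minus the nonnegative terms $\|\nabla_\th\Phi\|^2$ and $\|\nabla_\ome\Phi\|^2$. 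Assumption~\ref{as:boundedfeature} supplies $\|\hat A_t\|_2\le 1$ and $\|\hat B_t\|_2\le 1$ (each is a rank-one outer product of unit-bounded vectors) and also $\muB=\sigma_{\min}(B)\le 1$.

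For the primal gradient I would split $\hat A_t^\top\omet=\hat A_t^\top\omest+\hat A_t^\top(\omet-\omest)$ and apply $\|u+v\|^2\le 2\|u\|^2+2\|v\|^2$ exactly once. Taking $\E_t$, the first piece contributes precisely $\signathesq$, because $\xi_t$ is independent of $\cF_t$ and $\omest$ is deterministic; the second is at most $\|\omet-\omest\|^2$ by $\|\hat A_t\|_2\le1$. I then route $\omet-\omest=(\omet-\omehat(\thet))+(\omehat(\thet)-\omehat(\thest))$, using $\omest=\omehat(\thest)$, the identity $\omehat(\thet)-\omehat(\thest)=B\inv A(\thet-\thest)$, and the eigenvalue bound $\|B\inv A\|_2^2\le\muB\inv\,\sigma_{\max}(A^\top B\inv A)=\muB\inv L_P\le\muB\inv$ (item~\ref{lm:obs:primalkappa} of Lemma~\ref{lm:obs} gives $L_P\le1$), to obtain $\|\omet-\omest\|^2\le 2\|\omet-\omehat(\thet)\|^2+2\muB\inv\|\thet-\thest\|^2$. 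Assembling these pieces gives exactly the right-hand side of \eqref{eq:vartheta}.

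For the dual gradient I would instead expand around the saddle point, $\Tilde{\nabla}_\ome\Phi(\thet,\omet;\xi_t)=\Tilde{\nabla}_\ome\Phi(\thest,\omest;\xi_t)+\bigl(\hat A_t(\thet-\thest)-\hat B_t(\omet-\omest)\bigr)$, where the data term $\hat b_t$ cancels, and again split with a single application of $\|u+v\|^2\le 2\|u\|^2+2\|v\|^2$, so that the saddle-point noise contributes exactly $2\signaomesq$. The remaining term is at most $2\|\thet-\thest\|^2+2\|\omet-\omest\|^2$ by $\|\hat A_t\|_2,\|\hat B_t\|_2\le1$; now absorb $\|\thet-\thest\|^2\le\muB\inv\|\thet-\thest\|^2$ (valid since $\muB\le1$) and re-use the bound $\|\omet-\omest\|^2\le 2\|\omet-\omehat(\thet)\|^2+2\muB\inv\|\thet-\thest\|^2$ from the previous step. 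This yields a bound with coefficients $12\muB\inv$, $8$, and $2$ on $\|\thet-\thest\|^2$, $\|\omet-\omehat(\thet)\|^2$, and $\signaomesq$, which is dominated by the right-hand side of \eqref{eq:varomega}.

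I do not anticipate a deep obstacle: the argument is essentially careful bookkeeping, and the one point that genuinely requires attention is the \emph{order} in which the inequalities are applied. To keep the coefficient of $\signathesq$ (resp.\ $\signaomesq$) equal to $2$ rather than $3$ or $4$, the saddle-point noise term must be peeled off with a single use of Young's inequality \emph{before} any further decomposition, and only afterwards may the drift $\omet-\omest$ be passed through $\omehat(\thet)$; one must also remember to invoke $L_P\le1$ and $\muB\le1$ (from Lemma~\ref{lm:obs} and Assumption~\ref{as:boundedfeature}) in order to recast the raw norm bounds into the stated $\muB$-weighted form.
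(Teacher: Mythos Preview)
Your proposal is correct and follows essentially the same route as the paper: bound the conditional variance by the conditional second moment, split around the saddle point $(\thest,\omest)$ via a single application of $\|u+v\|^2\le 2\|u\|^2+2\|v\|^2$, use $\|\hat A_t\|_2,\|\hat B_t\|_2\le 1$ from Assumption~\ref{as:boundedfeature}, and route $\omet-\omest$ through $\omehat(\thet)$ with $\|B^{-1}A\|_2^2\le \muB^{-1}L_P\le \muB^{-1}$. Your constants for \eqref{eq:varomega} are in fact slightly sharper ($12$ and $8$ in place of $16$), but otherwise the argument coincides with the paper's.
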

\begin{proof}
See \cref{sec:pf:lm:obs:varboound}.
\end{proof}

\begin{lemma}[One-step descent of primal update] \label{lm:primal:descent}
Consider the update sequence $\{\omet,\thet\}$. Let \ref{as:boundedfeature} (bounded feature map) and \ref{as:nonddualfeature} hold.
{If $\etathet \leq \frac{2}{\muIV + \LP}$}, then 
\$
\E \big[ \| \theta_{t+1} - \theta^*\|^2\big] 
&  \leq 
(1- \muIV \eta_t^\theta  +4 \muB\inv \etathetsq )
\cdot \E\big[ \| \theta_{t} - \theta^*\|^2\big] 
\\
& \quad +
(\muIV^{-1}  \eta^\theta_t + 4 \etathetsq)\cdot \E\big[\| \omega_t - \hat \omega (\theta_t)\|^2 \big]  
\\ 
& \quad +
2 (\eta^\theta_t)^2 \cdot \sigma^2_{\nabla \theta}
\$
\end{lemma}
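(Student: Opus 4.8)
The plan is to control $\E\|\theta_{t+1}-\theta^*\|^2$ in two moves: strip off the noise via a bias--variance split and hand the variance piece to Lemma~\ref{lm:obs:varboound}; then show the noiseless iterate $\tilde\theta_{t+1}=\theta_t-\etathet A^\top\omega_t$ (the auxiliary sequence already introduced above) contracts toward $\theta^*$ at rate $1-\muIV\etathet$ up to a term governed by the dual gap $\|\omega_t-\hat\omega(\theta_t)\|^2$.

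First I would write the update as $\theta_{t+1}=\theta_t-\etathet\,\tilde\nabla_\theta\Phi(\theta_t,\omega_t)$, subtract $\theta^*$, square, and condition on $\cF_t$. Since $\E_t[\tilde\nabla_\theta\Phi(\theta_t,\omega_t)]=\nabla_\theta\Phi(\theta_t,\omega_t)=A^\top\omega_t$, the increment $\theta_{t+1}-\tilde\theta_{t+1}$ has mean zero given $\cF_t$, so $\E_t\|\theta_{t+1}-\theta^*\|^2=\|\tilde\theta_{t+1}-\theta^*\|^2+\etathetsq\,\E_t\|\tilde\nabla_\theta\Phi(\theta_t,\omega_t)-\nabla_\theta\Phi(\theta_t,\omega_t)\|^2$. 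The variance term is exactly the quantity bounded in \eqref{eq:vartheta} of Lemma~\ref{lm:obs:varboound}, i.e.\ by $4\muB\inv\|\theta_t-\theta^*\|^2+4\|\omega_t-\hat\omega(\theta_t)\|^2+2\signathesq$; multiplying by $\etathetsq$ produces precisely the three $\etathetsq$ terms in the claim.

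It then remains to prove the contraction $\|\tilde\theta_{t+1}-\theta^*\|^2\le(1-\muIV\etathet)\|\theta_t-\theta^*\|^2+\muIV\inv\etathet\,\|\omega_t-\hat\omega(\theta_t)\|^2$. I would split the mean gradient as $A^\top\omega_t=\nabla P(\theta_t)+A^\top(\omega_t-\hat\omega(\theta_t))$, using $A^\top\hat\omega(\theta_t)=A^\top B\inv(A\theta_t-b)=\nabla P(\theta_t)=(A^\top B\inv A)(\theta_t-\theta^*)$ (the last identity holds because $\nabla P(\theta^*)=0$). Hence $\tilde\theta_{t+1}-\theta^*=(I-\etathet A^\top B\inv A)(\theta_t-\theta^*)-\etathet A^\top(\omega_t-\hat\omega(\theta_t))$. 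Passing to the orthonormal basis of right singular vectors of $B^{-1/2}A$ diagonalizes $A^\top B\inv A$ as $\operatorname{diag}(s_1^2,\dots,s_{d_\phi}^2)$ with $s_j^2\in[\muIV,\LP]$ (Proposition~\ref{lm:mupisillposenss} and Lemma~\ref{lm:obs}), and the recursion decouples into scalars $(1-\etathet s_j^2)\bar e_j-\etathet\bar d_j$, where $\bar e$ and $\bar d$ collect the coordinates of $\theta_t-\theta^*$ and of $A^\top(\omega_t-\hat\omega(\theta_t))$ respectively. The step-size bound $\etathet\le 2/(\muIV+\LP)$ is exactly what forces $|1-\etathet s_j^2|\le 1-\muIV\etathet$ for every $j$; the elementary inequality $(a-b)^2\le(1+\rho)a^2+(1+\rho\inv)b^2$ with the tuned weight $\rho=\muIV\etathet/(1-\muIV\etathet)$ then collapses $(1+\rho)(1-\muIV\etathet)^2$ to $1-\muIV\etathet$ and $(1+\rho\inv)\etathetsq$ to $\etathet/\muIV$. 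Summing over $j$ and using $\|\bar d\|=\|A^\top(\omega_t-\hat\omega(\theta_t))\|\le\|A\|\cdot\|\omega_t-\hat\omega(\theta_t)\|\le\|\omega_t-\hat\omega(\theta_t)\|$ (Assumption~\ref{as:boundedfeature}) gives the contraction. Combining with the variance bound and taking total expectation completes the proof.

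The main obstacle is this contraction step: one must cleanly separate the genuine descent of the primal gradient step on the quadratic $P$ from the perturbation caused by running with the current dual iterate $\omega_t$ rather than its conditional optimum $\hat\omega(\theta_t)$, while pinning the coefficient on $\|\omega_t-\hat\omega(\theta_t)\|^2$ at exactly $\muIV\inv\etathet$ with no extra $\etathetsq$ slack. That rigidity is what dictates the precise Young weight and its link to the step-size condition, and is why a coordinatewise (spectral) argument, rather than a cruder operator-norm/Cauchy--Schwarz estimate, seems necessary.
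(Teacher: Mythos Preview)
Your proof is correct and follows the same overall architecture as the paper's: a bias--variance split of $\E_t\|\theta_{t+1}-\theta^*\|^2$, Lemma~\ref{lm:obs:varboound} for the variance term, and a contraction bound on $\|\tilde\theta_{t+1}-\theta^*\|^2$ that separates the exact gradient step on $P$ from the dual-gap perturbation $A^\top(\omega_t-\hat\omega(\theta_t))$. The only substantive difference is in how the contraction is argued. Where you diagonalize $A^\top B^{-1}A$ and apply Young's inequality coordinatewise, the paper works at the norm level: it introduces the auxiliary point $\hat\theta_{t+1}=\theta_t-\etathet\nabla P(\theta_t)$, invokes the black-box one-step gradient-descent contraction (Lemma~\ref{lm:gddescent}) to get $\|\hat\theta_{t+1}-\theta^*\|\le(1-\muIV\etathet)\|\theta_t-\theta^*\|$, and then applies the scalar inequality $((1-\epsilon)a+b)^2\le(1-\epsilon)a^2+\epsilon^{-1}b^2$ (Lemma~\ref{lm:splitsquare}) with $\epsilon=\muIV\etathet$ to the triangle-inequality bound $\|\tilde\theta_{t+1}-\theta^*\|\le\|\hat\theta_{t+1}-\theta^*\|+\|\tilde\theta_{t+1}-\hat\theta_{t+1}\|$. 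This yields exactly the same coefficients without any spectral decomposition, so your closing remark that a coordinatewise argument ``seems necessary'' is not borne out: the paper's triangle-inequality route is precisely the ``cruder operator-norm'' approach you suspected would lose constants, but it does not.
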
 
\begin{proof}
See \cref{sec:pf:lm:primal:descent}.
\end{proof}

\begin{lemma}[One-step descent of dual update] 
\label{lm:dual:descent} Consider the update sequence $\{ \omet, \thet\}$. Let \ref{as:boundedfeature}~and~\ref{as:nonddualfeature} hold. {If $\etathet \leq \frac{2}{\muB+ \LB}$}, then
\#
\E\big[\| \ometp - \hat \omega(\thetp) \|^2\big]
\leq &  
\big(1-\mu_B \etaomet + 
32 (
\muB\invsq{ (\etathet)^2}{(\etaomet)\inv}  + \etaometsq + \muB\inv\etathetsq
)
\big) \cdot \E \big[\| \omet - \hat \omega (\thet)\|^2\big]  
\notag
\\
& +  32
\big(
\muB\invsq{ (\etathet)^2}{(\etaomet)\inv}  + \muB\inv \etaometsq + \muB\invsq \etathetsq
\big)
\cdot \E \big[\| \thet - \theta^*\|^2\big] 
\notag
\\
& +32 \big( \etaometsq \signaomesq + \muB\inv\etathetsq  \signathesq \big)
\,. 
\label{eq:1027b}
\#
\end{lemma}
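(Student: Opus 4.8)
The plan is to track the squared error $\|\ometp - \hat\omega(\thetp)\|^2$ by inserting the intermediate point $\hat\omega(\thet)$ and controlling two sources of change: the genuine dual descent step taken from $(\thet,\omet)$ toward $\hat\omega(\thet)$, and the ``drift'' of the target caused by moving the primal variable from $\thet$ to $\thetp$. Concretely, I would write
\$
\ometp - \hat\omega(\thetp) = \underbrace{\bigl(\ometp - \hat\omega(\thet)\bigr)}_{\text{dual progress}} + \underbrace{\bigl(\hat\omega(\thet) - \hat\omega(\thetp)\bigr)}_{\text{target drift}},
\$
then expand the square and take $\E_t[\cdot]$. For the dual-progress term I would use the update rule \eqref{eq:omegaupdate}, the fact that $\E_t[\natomephi] = \nabla_\omega\Phi(\thet,\omet)$, and the $\muB$-strong concavity / $\LB$-smoothness of $\omega\mapsto\Phi(\thet,\omega)$ from Lemma~\ref{lm:obs}\ref{lm:obs:dualkappa}, together with the stepsize restriction $\etaomet\le 2/(\muB+\LB)$, to obtain a contraction factor of roughly $(1-\muB\etaomet)$ on $\|\omet-\hat\omega(\thet)\|^2$ plus a variance term $\etaometsq\cdot\E_t\|\natomephi - \nabla_\omega\Phi\|^2$. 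Here the key move that avoids assuming bounded gradient variance is to invoke Lemma~\ref{lm:obs:varboound}, equation \eqref{eq:varomega}, which bounds that conditional variance by $16(\muB\inv\|\thet-\thest\|^2 + \|\omet-\hat\omega(\thet)\|^2) + 2\signaomesq$; this is exactly where the extra self-bounding terms in the statement come from.

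For the target-drift term, since $\hat\omega(\theta) = B\inv(A\theta - b)$ is affine in $\theta$ with $\|\hat\omega(\theta)-\hat\omega(\theta')\| \le \|B\inv A\|\cdot\|\theta-\theta'\| \le (L_A/\muB)\|\theta-\theta'\|$ (using $\|B\inv\|\le\muB\inv$ and $\|A\| = L_A$), I would bound $\|\hat\omega(\thet)-\hat\omega(\thetp)\|^2 \le \muB\invsq L_A^2 \|\thet-\thetp\|^2$. Then $\|\thet-\thetp\|^2 = \etathetsq\|\natthephi\|^2$ by the primal update \eqref{eq:thetaupdate}, and under Assumption~\ref{as:boundedfeature} one has $\|\natthephi\| = \|\phi_t\psi_t^\top\omet\|\le\|\omet\|$, so after adding and subtracting $\omest$ and $\hat\omega(\thet)$ (recalling $\omest = \hat\omega(\thest)$ is the saddle point) this drift is controlled by $\etathetsq\muB\invsq L_A^2$ times $(\|\omet-\hat\omega(\thet)\|^2 + \muB\invsq L_A^2\|\thet-\thest\|^2 + \signathesq)$-type terms. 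The cross term between dual progress and target drift I would handle with Young's inequality $2\langle a,b\rangle \le \lambda^{-1}\|a\|^2 + \lambda\|b\|^2$, choosing $\lambda$ so the $\|a\|^2$ multiple is absorbed into the $(1-\muB\etaomet)$ contraction; this is precisely why the cross-term contributes the mixed exponents $\muB\invsq\etathetsq(\etaomet)\inv$ that appear in \eqref{eq:1027b}. Finally I would absorb all feature-norm bounds and the constants $L_A, L_B$ (which satisfy $L_A\le 1$, $L_B\le 1$ under Assumption~\ref{as:boundedfeature}) into the numerical constant $32$, collect the coefficient of $\E[\|\omet-\hat\omega(\thet)\|^2]$, the coefficient of $\E[\|\thet-\thest\|^2]$, and the residual noise term $32(\etaometsq\signaomesq + \muB\inv\etathetsq\signathesq)$, matching the claimed inequality.

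The main obstacle, I expect, is bookkeeping rather than conceptual: keeping careful track of which error terms are ``contracting'' (multiplying $\|\omet-\hat\omega(\thet)\|^2$ and needing to stay below $1$ after the $-\muB\etaomet$ is applied) versus which are ``source'' terms (multiplying $\|\thet-\thest\|^2$, which is handled by the separate primal Lemma~\ref{lm:primal:descent}), and ensuring the Young's-inequality split for the cross term is done so that no $O(1)$ constant leaks into the contraction coefficient. A secondary subtlety is that the variance bound from Lemma~\ref{lm:obs:varboound} reintroduces $\|\omet-\hat\omega(\thet)\|^2$ on the right-hand side with an $O(\etaometsq)$ coefficient, so one must verify this is genuinely lower-order (absorbed for $\etaomet$ small, which the final stepsize schedule $\etaomet = \Theta(1/t)$ guarantees), and that it does not require any projection step — the whole point being that the self-bounding variance estimate replaces the usual bounded-gradient assumption and thereby lets the unprojected SGDA iterates stay controlled.
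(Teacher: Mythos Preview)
Your plan matches the paper's proof in structure: the same two-piece decomposition $\ometp-\hat\omega(\thetp)=[\ometp-\hat\omega(\thet)]+[\hat\omega(\thet)-\hat\omega(\thetp)]$, the $(1-\muB\etaomet)$ contraction from Lemma~\ref{lm:gddescent} applied to the deterministic dual step, the self-bounding variance control from Lemma~\ref{lm:obs:varboound}, and a Young-type split of the cross term (the paper uses the equivalent form $((1-\epsilon)a+b)^2\le(1-\epsilon)a^2+\epsilon^{-1}b^2$ of Lemma~\ref{lm:splitsquare} with $\epsilon=\muB\etaomet$). You also correctly note that the stepsize hypothesis should read $\etaomet\le 2/(\muB+\LB)$.

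There is, however, one ordering issue that, as written, would not reproduce the stated constants. You propose to apply Young's inequality to the \emph{random} pair and to bound the drift through the random primal increment $\|\thet-\thetp\|^2=\etathetsq\|\natthephi\|^2$. After the Young split the drift carries a factor $(\muB\etaomet)^{-1}$; since your bound on $\E_t[\|\natthephi\|^2]$ (via $\|\omet\|^2\to\|\omet-\omest\|^2$) contains a $+2\signathesq$ piece and a $\muB^{-1}\|\thet-\thest\|^2$ piece, you would produce a $\signathesq$ contribution of order $\muB^{-3}\etathetsq(\etaomet)^{-1}$ and pick up an extra $\muB^{-1}$ on the $\|\thet-\thest\|^2$ coefficient, neither of which matches \eqref{eq:1027b}. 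The paper avoids this by doing the bias--variance decomposition \emph{first}. In the bias part it works with the deterministic quantity $\|\E_t[\thet-\thetp]\|^2$, bounded in a dedicated sub-lemma (Lemma~\ref{lm:diffprime}) by $2\etathetsq(L_P^2\|\thet-\thest\|^2+L_A^2\|\omet-\hat\omega(\thet)\|^2)$, using the identity $A^\top\omet=\nabla P(\thet)+A^\top(\omet-\hat\omega(\thet))$ together with the sharper bound $\|B^{-1}A\|^2\le L_P\muB^{-1}$ rather than your $L_A^2\muB^{-2}$. Thus the $(\muB\etaomet)^{-1}$ inflation multiplies only noise-free quantities, while $\signathesq$ and $\signaomesq$ enter solely through the variance part, which carries no $\etaomet^{-1}$. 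In the downstream use this is harmless because $\omest=0$ forces $\signathesq=0$ (Lemma~\ref{lm:obs}\ref{lm:obs:identification}), but to recover the lemma exactly as stated you should interpose the bias--variance split before handling the cross term.
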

\begin{proof}
See \cref{sec:pf:lm:dual:descent}.
\end{proof}

Equipped with Lemmas \ref{lm:primal:descent} and \ref{lm:dual:descent}, we can derive a recursion by choosing appropriate stepsize sequences $\etaomet$ and $\etathet$. We set 
\$
\etathet = \frac{\beta}{\gamma +t},\quad \etaomet = \frac{\alpha \beta}{\gamma + t}
\$
for some positive $\alpha,\beta,\gamma$, which will be chosen later. For some positive $\lambda$ (to be chosen later) we define the potential function $P_t$ with $a_t = \E[\|\thet - \thest \|\sq]$ and $b_t=\E[ \| \omet - \hat \ome (\thet)\|\sq]$,
\$
P_t = a_t + \lambda b_t,
\$
and then derive a recursion formula for $P_t$. We have by Lemma \ref{lm:primal:descent} and \ref{lm:dual:descent},
\#
P_{t+1} 
&=a_{t+1}+\lambda b_{t+1}
\notag
\\
& \leq \big(1- \muIV \etathet + 2^5 (\lambda \alpha\inv \muB\invsq \etathet + \III) \big) a_t
\notag
\\
&
+
\big(1- \muB \etaomet + 2^5 ( \alpha \invsq \cdot \muB\invsq \etaomet +\lambda \inv \muIV \inv \etathet + \I)  \big) (\lambda b_t)
\notag
 \\
 &+ \II
\label{eq:recursion1}
\#
where
\#
\I &= \etaometsq + \muB\inv\etathetsq + \lambda\inv \etathetsq,
\notag
\\
\notag
\II & = 2 (\eta^\theta_t)^2 \cdot \sigma^2_{\nabla \theta} + 4 \lambda  \big( \muB\inv \etathetsq \signathesq + \etaometsq \signaomesq\big)
\notag
\\
\III &= \muB\inv \etathetsq + \lambda \muB\inv \etaometsq + \lambda \muB\invsq \etathetsq,
\notag
\#

Our strategy is straight-forward. We find a suitable choice of the free parameters $(\lambda, \gamma,\alpha,\beta)$ such the the sequence $\tilde{P}_t$ decays at the rate $1/t$.

\textbf{Step 1.} Choose $\gamma = \gamma(\alpha, \beta,\lambda)$ such that (i) the stepsize requirements in Lemmas \ref{lm:primal:descent} and \ref{lm:dual:descent} are met, and (ii) the two terms $2^5 \cdot \III$ and $2^5 \cdot \I$ are less than $\frac{1}{2} \muIV\etathet$ and $\frac{1}{2} \muB\etaomet$, respectively.

For any positive $\alpha,\beta,\lambda$, we pick $\gamma$ large enough such that the following inequalities hold for all $t\geq1$,
\$
& 2^5 \cdot \III \leq \frac{1}{2} \muIV\etathet
\\
& 2^5 \cdot \I \leq \frac{1}{2} \muB\etaomet
\$

Note $\eta^\theta_0 =\beta/\gamma$, and $\eta^\omega_0 = \alpha\beta/\gamma$. The above inequalities suggest it suffices to set $\gamma$ large enough. Concretely, for any fixed positive $(\alpha,\beta,\lambda)$ with, we can make $\gamma$ satisfy the following inequalities
\# \label{eq:chooselambda0}
\gamma & \geq 2^8 \cdot \max\{ 
\beta\cdot \muB\inv\muIV\inv, 
{\alpha\sq\lambda\beta \muB\inv\muIV\inv},
\beta\lambda\muB\invsq\muIV\inv,
\alpha\beta \muB\inv,
\alpha\inv\beta \muB\invsq, 
\alpha\inv\lambda\inv\beta\muB\inv \}
\#
To ensure the stepsizes are small enough to meet the conditions in Lemma \ref{lm:primal:descent} and \ref{lm:dual:descent} we need for all $t$,  
\$
\etathet \leq \frac{2}{\LP+\muIV},\quad \etaomet \leq \frac{2}{\LB + \muB},
\$
it suffices to control $\eta^\theta_0$ and $\eta^\omega_0$ by setting
\#
\gamma \geq \max\{ \beta, \alpha\beta\}. \label{eq:chooselambda1}
\#
For any fixed $(\alpha,\beta,\lambda)$, the inequalities \cref{eq:chooselambda0} and \cref{eq:chooselambda1} give the choice of $\gamma$.

\textbf{Step 2.} Pick $\alpha,\lambda$ such that the recursion reduces to the form $P_{t+1}\leq (1-\frac14 \muIV \etathet) P_t + \text{noise}$.
By the choice of $\gamma$ in Step 1 (Eq. \ref{eq:chooselambda0} and Eq. \ref{eq:chooselambda1}), the recursion \cref{eq:recursion1} reduces to 
\# \label{eq:recursionafterlambda}
P_{t+1} 
&=a_{t+1}+\lambda b_{t+1}
\\
& \leq (1- \frac12 \muIV \etathet + 2^5 (\lambda \alpha\inv \muB\inv \etathet ) ) a_t
\notag
\\
&
+
(1- \frac12 \muB \etaomet + 2^5 ( \alpha \invsq \cdot \muB\invsq \etaomet +\lambda \inv \muIV \inv\etathet )  ) (\lambda b_t)
\notag
 \\
 &+ \II
\notag
\#
We find $(\alpha,\lambda)$ such that
\#
& 2^5 (\lambda \alpha\inv \muB\invsq \etathet ) \leq \frac14 \muIV\etathet
\notag
\\
 & 2^5 ( \alpha \invsq \cdot \muB\invsq \etaomet +\lambda \inv \muIV\inv \etathet ) \leq \frac14 \muB\etaomet
\notag
\#
It suffices to set 
\#
& \lambda = \muB^{1/2} \label{eq:choicelambda}
\\
& \alpha = 2^8 \cdot \muB^{-1.5} \muIV^{-1} \label{eq:choicealpha}
\#

Together the choice of $\lambda ,\alpha$ in \cref{eq:choicelambda} and \cref{eq:choicealpha} implies that the recursion \cref{eq:recursionafterlambda} simplifies to
\# \label{eq:recursion:siglambdasig}
P_{t+1} & \leq \big(1-\frac14 \muIV\etathet\big)a_t + (1-\frac14 \muB \etaomet)(\lambda b_t) + \II
\\
& \leq  \big(1-\frac14 \muIV\etathet \big) P_t +\Big( 2 (\eta^\theta_t)^2 \cdot \sigma^2_{\nabla \theta} + 4\lambda  \big(\muB\inv \etathetsq \signathesq + \etaometsq \signaomesq\big)\Big)\,, 
\label{eq:0129555}
\#
where we used $1-\frac14 \muB\etaomet \leq 1-\frac14 \muIV\etathet$ because \cref{eq:choicealpha} implies $\alpha \geq \muIV\muB\inv$.

Next we bound the last term in \cref{eq:0129555}. Now we study $\signathesq, \signaomesq$. By Item~\ref{lm:obs:identification} of Lemma~\ref{lm:obs}, we have the primal variable in the saddle-point of the minimax problem \cref{eq:def:Phi} equals to the truth that generates the data, i.e., we have $x_t' = x_{t+1} = (\thest) \cdot \phi(x_t,a_t)+e_t$, and that $\omest =0$. The variances of the gradient at the optima $(\thest, \omest)$ are 
\$
\signathesq&= \E_{\xi_t} \big[\| \Tilde{\nabla}_\th \Phi(\thest,\omest;\xi_t)\|\sq\big] 
\\
& =  \E\big[\|(\phi(x_t,a_t) \psi(z_t)^\top) \omest  \|\sq\big] 
\\
& = 0
\$
and 
\$
\signaomesq &= \E_{\xi_t}\big[\| \Tilde{\nabla}_\ome \Phi(\thest,\omest;\xi_t) \|\sq\big] 
\\
& =  \E\big[ \| ( \phi(x_t,a_t)^\top \thest -x_{t}' - \psi(z_t)^\top \omest  )\psi(z_t) \|\sq\big] 
\\
&  = \E\big[\|e_t \psi(z_t) \|\sq\big]
\\ 
& \leq \E[  e_t^2] = \sigma^2
\$
where we have used $\sup_z \|\psi(z)\|_2 \leq 1$ by \ref{as:boundedfeature}. This implies 
\$
2 (\eta^\theta_t)^2 \cdot \sigma^2_{\nabla \theta} + 4 \lambda  \big(\muB\inv \etathetsq \signathesq + \etaometsq \signaomesq\big) = \lambda \cdot 4 \etaometsq \cdot \sigma^2.
\$
We now restore the omitted state dimension index $i$, and the recursion \cref{eq:recursion:siglambdasig} writes
\$ 
\E\big[\|& \theta_{t+1,i} - \theta_{i}^*\|\sq\big] + \lambda \E\big[\|  \omega_{t+1,i}-\hat \omega_i(\theta_{t+1}) \|\sq\big]
\\
& \leq  
\big(1-\frac14 \muIV\etathet \big)  \Big( \E[\| \theta_{t,i} - \theta_{i}^*\|\sq] + \lambda \E\big[\|  \omega_{t,i}-\hat \omega_i(\theta_{t}) \|\sq\big] \Big) + \lambda \cdot 4 \etaometsq \cdot \sigma^2 . 
\$
Summing over $i=1,\ldots, d_x$, we have a recursion formula on the sequence $\Tilde{P_t}=\E[\|W_t-W^*\|_F^2]+\lambda \E[\|K_t-\hat K (W_t) \|_F^2]$.
\#
\Tilde{P}_{t+1} \leq (1-\frac14 \muIV\etathet) \Tilde{P_t} +  \lambda \cdot 4 \etaometsq \cdot d_x \sigma^2 .
\label{eq:readyforinduction}
\#

\textbf{Step 3.} Pick $\beta,\nu$ such that $\Tilde{P_t}=O(\nu t\inv)$. Set  
\$
& \beta = 8\muIV\inv,
\\
& \nu = \max \Big \{\gamma \Tilde{P}_0, \,\, \big(\frac14 \muIV\beta-1 \big)\inv \beta\sq \alpha\sq\lambda \cdot d_x \sigma^2  \Big \} = \max\{\gamma \Tilde{P}_0,\,\,\text{const.}\times \muIV^{-4} \muB^{-2.5}\}.
\$
Together with our choice of $\alpha$ in \cref{eq:choicealpha} and $\lambda$ in \cref{eq:choicelambda}, we have the following choice of $\gamma$ (Eq. \ref{eq:chooselambda0} and Eq. \ref{eq:chooselambda1})
\$
\gamma = 2^8\cdot \alpha\sq \beta \lambda\cdot \muB\inv\muIV\inv = \text{const.}\times \muIV^{-4} \muB^{-3.5}.
\$

Next, we claim for all $t\geq 0$,
\# \label{eq:Ptinduction}
\Tilde{P}_t \leq \frac{\nu}{\gamma + t}\,.
\#
We prove by induction. For the base case $t=0$, the inequality \cref{eq:Ptinduction} holds by definition of $\nu$. Next, assume for some $t\geq 0$, the inequality \cref{eq:Ptinduction} holds. We investigate $P_{t+1}$. 
By the recursion formula \cref{eq:readyforinduction},
\#
\Tilde{P}_{t+1} & \leq (1-\frac14 \muIV \etathet) \Tilde{P}_t +  \lambda \cdot 4 \etaometsq \cdot d_x \sigma^2
\label{eq:11307a}
\\
& \leq \frac{\gamma + t - \frac14 \muIV \beta}{\gamma + t} \cdot \frac{\nu}{\gamma + t} + \lambda \frac{4\alpha\sq\beta\sq\cdot d_x \sigma^2 }{(\gamma+t)\sq}
\label{eq:11307b}
\\
& =   \frac{(\gamma + t - 1)\nu}{(\gamma + t)\sq } - \frac{(\frac14 \muIV \beta - 1)\nu}{(\gamma + t)\sq}  + \lambda \frac{4\alpha\sq\beta\sq\cdot d_x \sigma^2 }{(\gamma+t)\sq}
\label{eq:11307c}
\\
& \leq \frac{\nu}{\gamma+t+1}.
\label{eq:11307d}
\#
where \cref{eq:11307a} holds due to the recursion formula \cref{eq:readyforinduction};
\cref{eq:11307b} holds due to the induction assumption that $\Tilde{P_t}\leq \nu /(\gamma + t)$;
\cref{eq:11307c} holds because (i) $4\inv \muIV \beta - 1=1\geq 0$ by our choice of $\beta$, and (ii) the definition of $\nu$ ensures the sum of last two terms in \cref{eq:11307c} is negative;
\cref{eq:11307d} holds because $(\gamma+t-1)/(\gamma+t)\sq\leq (\gamma+t+1)\inv$.
This proves the claim \cref{eq:Ptinduction}.

This proves Theorem~\ref{thm:convergence}.

\subsection{Proof of Theorem \ref{thm:convergence} (ii)}
\begin{proof}
We recall the error decomposition of $\Vstar - \Vpihat$ presented in Lemma \ref{lm:errordecomposition}.
Conditioning on the training data, the matrix $W_T$ and the functions $\{\iota_h\}_h$ are deterministic. Recall $\xi_h=\langle \hat Q_h, \pi^*_h - \hat \pi _h \rangle_\cA $ for all $x\in \cS$, and $\iota_h = (r_h + \P\Vhat_{h+1}) - \hat Q_h$ for all $(x,a)\in \cS \times \cA$. 
First by definition of $\xi_h=\langle \hat Q_h, \pi^*_h - \hat \pi _h \rangle_\cA$ and that $\hat \pi _h$ is greedy w.r.t.\ $\Qhath$, we have
\$
\sumh \E_{\pi^*} [ \xi_h(x_h) \cond x_1=x ] \leq 0 \quad \text{for all } x.
\$
Based on the error decomposition of  $\Vstar - \Vpihat$ (Lemma \ref{lm:errordecomposition}), we have for all $(x,a)$,
\#
\|\Vstar - \Vpihat \|_\infty &= \sup_{x} \Vstar(x) - \Vpihat(x)
\notag
\\
&\leq \sup_x \Bigg\{ 
\sumh \E_{\pi^*} [\iota _h (x_h,a_h) \cond x_1=x] + \sumh \E_{\hat \pi} [\iota _h (x_h,a_h) \cond x_1=x]
\Bigg\}. \label{eq:decomp:noxi}
\#
Next we derive an upper bound for $\| \iota_h\|_\infty = \sup_{x,a}|\iota_h(x,a) |$.
\#
\sup_{x,a} | \iota_h(x,a)| & = \sup_{x,a} \Big| (r_h + \P\Vhathp) - \Qhath \Big| 
\notag
\\
& =  \sup_{x,a} \Big| (r_h + \P\Vhathp) -  (r_h  + \hat \P \Vhathp) \Big|
\label{eq:118856b}
\\
& =  \sup_{x,a} \Big| \P\Vhathp -   \hat \P \Vhathp \Big|
\notag
\\
& \leq  \sup_{x,a} \Bigg\{ \sqrt{\E_{x'\sim \cP_{W^*}(\cdot \cond x,a)}\big[\Vhathp(x') \sq \big]} \cdot \min\Big( \frac{\|(W_T - W^*)\phi(x,a) \|_2}{\sigma},  1\Big) \Bigg\}
\label{eq:118856d}
\\
& \leq  \min \Big\{ \frac{\|W_T - W^* \|}{\sigma},  1\Big\}\cdot H.
\label{eq:118856e}
\#
Here \cref{eq:118856b} holds by definition of $\Qhath$
\cref{eq:118856d} holds due to Lemma \ref{lm:gausdiff}; recall $\cP_{W}(x'\cond x,a)$ is the probability density of multivariate Normal with mean $ W \phi(x,a)$ and variance $\sigma^2 I_{d_x}$.
\cref{eq:118856e} holds because for all $h\in[H]$ we have $\Vhath\leq H$, and that $\| (W_T-W^*)\phi(x,a)\|\leq \|W_T-W^*\|\|\phi(x,a)\|$. Note for all $(x,a)$ we have $\| \phi(x,a)\|\leq 1$ (\cref{as:boundedfeature}).

Next we continue from \cref{eq:decomp:noxi}.
\$
\|\Vstar - \Vpihat \|_\infty & \leq \sup_x \Bigg\{  \sumh \E_{\pi^*} \big[ \| \iota _h\|_\infty \cond x_1=x\big] + \sumh \E_{\hat \pi} \big[ \| \iota _h\|_\infty \cond x_1=x\big] \Bigg\}
\\
& \leq 2H\cdot \max_{h\in[H]}\|\iota_h\|_\infty
\\
& \leq 2H\sq \cdot \min\Big\{ \frac{\|W_T - W^* \|}{\sigma},  1\Big\}  \leq 2H \sq \sigma\inv \cdot \| W_T-W^*\|.
\$
Now we take expectation on both sides w.r.t.\ the sampling process, we have
\$
\E \big[ \|\Vstar - \Vpihat \|_\infty  \big] 
& \leq 2H\sq \sigma\inv \cdot \E\big[\| W_T-W^*\|\big]
\\
& \leq  2H\sq \sigma\inv \cdot \sqrt{\E \big[\| W_T-W^*\|^2_F\big]} \leq  2H\sq \sigma\inv \sqrt{\frac{\nu}{\gamma + T}}\,.
\$
Note we trivially have $\|\Vstar - \Vpihat\|_\infty\leq H $. So we conclude
\$
\E \big[ \|\Vstar - \Vpihat \|_\infty \big] \leq  H \cdot \min \Bigg \{ 2H\sigma\inv \sqrt{\frac{\nu}{\gamma + T}},1   \Bigg\}.
\$
This completes the proof of Theorem~\ref{thm:convergence} (ii).
\end{proof}

\subsection{Proof of Theorem~\ref{lm:misspecification}} \label{pf:lm:misspecification}
\begin{proof}[Proof of Theorem~\ref{lm:misspecification}]
Denote $\theta^\sad = W^\sad_i$. We omit the subscript $i$ in $f^*_i$ and $x_i'$. This theorem studies the relation between the two quantities:
\begin{itemize}
    \item An element in the primal function space, $ \phi \cdot \thest \in \Hphi$, where $\theta^*$ solves the following minimax problem.
\# \label{eq:1210848}
\min_{f\in \Hphi} \max_{u\in \Hpsi} \E[(f(x,a) - x') u(z)] - \frac12 \E[u(z)\sq].
\#
\item The truth $f^*$ that satisfies $\E[f^*(x,a)\cond z] = \E[x'\cond z]$.
\end{itemize}

It can be verified that the optimal primal variable of the above minimax problem \cref{eq:1210848} exists and is unique. Specifically, for $f = \theta \cdot \psi \in \Hphi$, due to \ref{as:nonddualfeature}, the inner maximization is uniquely attained at 
\$
\psi \cdot \hat{\omega}(\theta) \in \Hpsi,\quad
\hat{\omega}(\theta) \defeq \E[\psi(z)\psi(z)^\top] \inv \E\big[\psi(z) \cdot (f(x,a) - x')\big] .
\$
Also note 
\$
\psi \cdot \hat{\omega}(\theta)= \Pi_\psi\cT ( \theta\cdot \phi - f^*)
\$
due to the definition of the projection operator $\Pi_\psi:\ltz \to \Hpsi$, defined by for all $u\in \ltz$,
\$
\Pi_\psi u = \argmin_{u' \in \Hpsi} \| u - u'\|_{\ltz  } = \psi^\top \E[\psi(z)\psi(z)^\top]\inv \E[\psi(z) u(z)].
\$ Now we plug in the optimal value and define, for $f\in \Hphi$,
\$L(f) & \defeq  \max_{u\in \Hpsi} \E\big[(f(x,a) - x') u(z)\big] - \frac12 \E[u(z)\sq]
\\
& = \frac12 \E[\psi(z) \cdot (f(x,a) - x')] ^\top B\inv \E[\psi(z) \cdot (f(x,a) - x')]
\\
& =\frac12 \| \Pi_\psi \cT(f-f^*) \|_{\ltz}^2.
\$
The unique minimizer of $L(f)$ over $\Hphi$ is
\$
 \phi \cdot \theta^\sad\in\Hphi,\quad \theta^\sad =  [ A^\top B\inv A ]\inv A^\top B\inv \E[\psi(z)x'] \in \R^{d_\phi}.
\$
Note
\$
Qf^* = \phi \cdot \theta^\sad
\$
by definition of the operator $Q$ in Theorem~\ref{lm:misspecification}.
We define $\hat f = \Pi_\phi f^*$, the projection of $f^*$ onto $\Hphi$ w.r.t\ the norm $\|\cdot\|_{\ltxa}$. We have the decomposition
\$
\| f^* - \theta^\sad \cdot \phi \|_{\ltxa}\leq \| f^* - \hat f \|_{\ltxa} + \| \hat f - \theta^\sad \cdot \phi \|_{\ltxa}.
\$
For the first term we have $\| f^* - \hat f \|_{\ltxa}\leq \eta_1$ by definition of $\eta_1$. For the second term, we further decompose and use the definition of $\muIV$ and Proposition~\ref{lm:mupisillposenss}.
\#
\| \hat f - & \theta^\sad \cdot\phi\|_{\ltxa}
\notag
\\
&\leq \| \hat f - \theta^\sad \cdot\phi\|_\phi
\label{eq:1129:a}
\\ 
& \leq \muIV\inv \cdot\|\cT (  \hat f - \theta^\sad \cdot\phi) \|_{\ltz}
\label{eq:1129:b}
\\
& \leq \muIV\inv \cdot \big( \|\cT (  \hat f - f^*) \|_{\ltz} + \|\cT (  f^* - \theta^\sad \cdot\phi) \|_{\ltz} \big)
\label{eq:1129:c}
\\
& \leq \muIV\inv \cdot \big( \|\cT (  \hat f - f^*) \|_{\ltz} + \|\Pi_\psi\cT (  f^* - \theta^\sad \cdot\phi) \|_{\ltz} + \eta_2\cdot \mu \big)
\label{eq:1129:d}
\\
& \leq \muIV\inv \cdot \big( \|\cT (  \hat f - f^*) \|_{\ltz} + \|\Pi_\psi\cT (  f^* - \hat{f} \,) \|_{\ltz} + \eta_2\cdot \mu \big)
\label{eq:1129:e}
\\
&\leq \muIV\inv \cdot \big( 2\|\cT (  \hat f - f^*) \|_{\ltz}  + \eta_2\cdot \mu \big)
\label{eq:1129:f}
\\
& \leq  2c \cdot\eta_1 +  \muIV\inv \cdot   \eta_2\cdot \mu .
\label{eq:1129:g}
\#
Here
\cref{eq:1129:a} follows since $\phi$ is bounded;
\cref{eq:1129:b} follows by definition of $\muIV$;
\cref{eq:1129:c} follows since $T$ is linear and we use I inequality;
\cref{eq:1129:d} follows by definition of $\eta_2$ and $\mu$;
\cref{eq:1129:e} follows because $\phi^\top \thest$ minimizes $f\mapsto \|\Pi_\psi\cT ( f^* - f)\|_\ltz\sq$ over $\Hphi$ and that $\hat f \in \Hphi$;
\cref{eq:1129:f} follows because the projection operator is non-expansive;
\cref{eq:1129:g} follows by definition of the constant~$c$.

This completes the proof of Theorem~\ref{lm:misspecification}.
\end{proof}
  

\section{Proofs of Lemmas in \cref{app_proofs_of_theorems}}

\subsection{Proof of Lemma~\ref{lm:obs}}
\label{pf:lm:obs}
\begin{proof}

\textbf{Proof of Item~\ref{lm:obs:primalkappa} in Lemma~\ref{lm:obs}.} For strong convexity, we show that the minimum eigenvalue of $\nabla^2P(\theta)$ and is lower bounded by $\muA\sq\LB\inv$. Since the matrix $B$ is full rank (\cref{as:nonddualfeature}) and thus its inverse $B\inv$ has a unique square root $B^{-1/2}$ such that $B\inv =B^{-1/2}B^{-1/2}$. For any $w\in \R^{d_\psi}$ with unit norm we have $\| B^{-1/2}w \| \geq \LB^{-1/2}$. For any $v \in \R^{d_\phi}$ such that $\|v\|=1$,
\$
v^\top \nabla^2P(\theta) v
& = v^\top A^\top B^{-1} A v
= v^\top A^\top B^{-1/2} B^{-1/2} A v 
\\
& = \|B^{-1/2} A v\|^2 \geq \LB\inv \|Av\|^2 \geq \muA\sq \LB\inv 
\$
where we have used the fact that the matrix $A$ has full column rank (\cref{as:nonddualfeature}, $\operatorname{rank}(A)=d_\phi$) and thus for any $u\in \R^{d_\phi}$ such that $\|u\|=1$ we have $\|Au\|\geq \muA $. The proof of $\LP\leq \LA\sq\muB\inv$ follows by similar reasoning. To see $\LP\leq1$, recall $D = \E[\phi(x,a)\phi(x,a)^\top]$. We note
\$
\| A^\top B\inv A\| = \| D^{1/2}(D^{-1/2}A^\top B^{-1/2}) (B^{-1/2}A^\top D^{-1/2})D^{1/2}\| \leq \| D\| \leq 1,
\$
where we have used $\| D^{-1/2}A^\top B^{-1/2}\| \leq 1$ and by \ref{as:boundedfeature} $\|D \|\leq 1$.

\noindent \textbf{Proof of Item~\ref{lm:obs:dualkappa} in Lemma~\ref{lm:obs}.} This is obvious by noting for any $\theta$, $\nabla^2_\ome \Phi(\theta,\omega)= - B$ and that the matrix $B$ satisfies $\mu_B I_{d_\psi} \preceq B\preceq L_BI_{d_\psi}$ with $0<\muB$ (\cref{as:nonddualfeature}).

\noindent\textbf{Proof of Item~\ref{lm:obs:identification} in Lemma~\ref{lm:obs}.} The existence of $W^*$ such that $\E[W^*\phi(x,a)\cond z] = \E[x'\cond x,a]$ is guaranteed by \cref{as:spec}. From this equation, we multiply both sides by $\E[\phi(x,a)\given z]$ and take expectation w.r.t\ $z$, we obtain \$W \E[\E[\phi(x,a)\given z]\times \E[\phi(x,a)\given z]] = \E[\E[x'\given x,a]\times \E[\phi(x,a)\given z]].\$ So if the matrix $\E[\E[\phi(x,a)\given z]\times \E[\phi(x,a)\given z]]$ is invertible then $W^*$ is the unique solution to the above equation. Such invertibility is implied by \cref{as:nonddualfeature}. 

Next we show the existence and uniqueness of the saddle-point of $\Phi_i$. For any fixed $\theta$, by full-rankness of $B$ (\cref{as:nonddualfeature}), the map $\omega\mapsto \Phi_i(\theta,\omega)$ is uniquely maximized at $\omega =\hat \omega_i(\theta)=B\inv (A\theta - b_i)$. Recall $P_i(\theta) = \max_\ome \Phi_i(\th,\ome)=\tfrac{1}{2}(A\th - b_i)^\top B^{-1} (A\th - b_i)$. By Item~\ref{lm:obs:primalkappa} of Lemma~\ref{lm:obs}, the minimum eigenvalue of $\nabla^2 P$ is bounded away from zero due to full-rankness of $A$ and $B$ (\cref{as:nonddualfeature}). Thus $P$ has a unique minimizer. 

Next, we show $W^*_i = \theta^\sad$. \ref{as:dualapprox} implies $\eta_2$ in Theorem~\ref{lm:misspecification} is zero. \ref{as:spec} implies $\eta_1$ in Theorem~\ref{lm:misspecification} is zero. So Theorem~\ref{lm:misspecification} implies $W^*_i = \theta^\sad$.



Finally we show $\hat \omega_i(\theta^\sad)=0$. Recall $\hat \omega_i(\theta) = B\inv (A\theta - b_i)$ for any $\theta \in \R^{d_\phi}$. Recall $b_i$ is defined as $b_i = \E[x'_i \psi(z)]$. Since $\theta^\sad=W^*_i$, we have 
\$A\theta^\sad - b_i & = \E[\psi(z)(\phi(x,a)^\top \theta^\sad - x'_i)] = \E[\psi(z)(\phi(x,a)^\top W^*_i - x'_i)]=\E[\psi(z)e_i]
\\
& =\E[\psi(z)\E[e_i\cond z]]=0.
\$
We conclude $\hat \omega_i(\theta^\sad)=0$.
\end{proof}

\subsection{Proof of Lemma~\ref{lm:obs:varboound}} \label{sec:pf:lm:obs:varboound}
\begin{proof}[Proof of Lemma~\ref{lm:obs:varboound}]
For the inequality \cref{eq:vartheta}, conditioning on $\cF_t$, we take expectation over the new data $\xi_t=(x_t,a_t,z_t,x'_t=x_{t+1})$ (note $\xi_t \notin \cF_t$)
\$
\E_t[\|\Tilde{\nabla}_\theta \Phi(\thet,\omet) - {\nabla}_\theta \Phi(\thet,\omet)\|^2]  &\leq
\E_t[\| \Tilde{\nabla}_\theta \Phi(\thet,\omet)\|^2]
\\
&\leq 2 \E_t[\| \Tilde{\nabla}_\theta \Phi(\thet,\omet)- \Tilde{\nabla}_\theta \Phi(\thest,\omest) \|\sq] + 2 \E_t[\| \Tilde{\nabla}_\theta \Phi(\thest,\omest) \|\sq]
\$
For the first term we use that $\Tilde{\nabla}_\th \Phi(\thet,\omet;\xi_t) =  (\phi(x_t,a_t) \psi(z_t)^\top) \omega_{t} $ and that $\phi$ and $\psi$ are bounded by one (\cref{as:boundedfeature}).
\$
\E_t[\| \Tilde{\nabla}_\theta \Phi(\thet,\omet)- \Tilde{\nabla}_\theta \Phi(\thest,\omest) \|\sq] & = \E_t[\| \phi_t \psi_t^\top (\omet - \omest) \|\sq]
 \leq \| \omet - \omest \| \sq
\$
We bound $ \| \omet - \omest \|\sq$ by 
\#
\| \omet - \omest \|\sq & \leq 2 \| \omet - \hat \ome (\thet) \| \sq + 2 \| \hat \ome(\thet) - \omest\| \sq
\label{eq:12121141}
\\
& = 2 \| \omet - \hat \ome (\thet) \| \sq + 2 \|  (B\inv A)(\thest - \thet) \| \sq
\notag
\\
& \leq 2  \| \omet - \hat \ome (\thet) \| \sq + 2  \LP \muB\inv \cdot \| \thest - \thet \| \sq
\label{eq:12121141b}
\\
& \leq 2 ( \| \omet - \hat \ome (\thet) \| \sq + \muB\inv \cdot \| \thest - \thet \| \sq)
\label{eq:12121141c}
\#
where in \cref{eq:12121141} we use that $\omest = B\inv (A\thest - b)$ and $\hat \omega (\thet) = B\inv (A\thet - b)$; in \cref{eq:12121141b} we use $\|B\inv A \| = \|B^{-1/2} (B^{-1/2}A) \| \leq \muB^{-1/2} \LP ^{-1/2}$; in \cref{eq:12121141c} we use $\LP\leq 1$. This completes the proof of the first inequality.

For the second inequality \cref{eq:varomega} we use similar reasoning.
\#
\E_t[ & \|\Tilde{\nabla}_\ome \Phi(\thet,\omet) - {\nabla}_\ome \Phi(\thet,\omet)\|^2] 
\notag
\\
& \leq \E_t[\|\Tilde{\nabla}_\ome \Phi(\thet,\omet) \|^2] 
\notag
\\
& \leq 2\E_t[\|\Tilde{\nabla}_\ome \Phi(\thet,\omet) - \Tilde{\nabla}_\ome \Phi(\thest,\omest)\|^2]  + 2 \E_t[\|\Tilde{\nabla}_\ome \Phi(\thest,\omest)\|^2 ]
\notag
\#
For the first term, note $\Tilde{\nabla}_\ome \Phi(\thet,\omet;\xi_t) =  ( \phi(x_t,a_t)^\top \theta_{t} -x_{t}' - \psi(z_t)^\top \omega_{t}  )\psi(z_t) .$ and thus we have
\#
\E_t[\|\Tilde{\nabla}_\ome \Phi(\thet,\omet) - \Tilde{\nabla}_\ome \Phi(\thest,\omest)\|^2] & = \E_t[\| \psi_t \phi_t^\top(\thet - \thest) + \psi_t\psi_t^\top(\omet - \omest) \|\sq]
\notag
\\
& \leq 2 \| \thet - \thest\|\sq + 2 \| \omet - \omest \|^2.
\notag
\\
& \leq (2 + 4\LP \muB\inv ) \| \thet - \thest\|\sq + 4 \| \omet - \hat \omega(\thet) \|^2 
\notag
\\
& \leq 2^3 (\muB\inv  \| \thet - \thest\|\sq  + \| \omet - \hat \omega(\thet) \|^2)
\notag
\#
where we have used \ref{as:boundedfeature}, and $\mu_B\inv\geq 1$ and $\LP\leq 1$. This proves \cref{eq:varomega}. So we complete the proof of Lemma~\ref{lm:obs:varboound}.
\end{proof}

\subsection{Proof Lemma~\ref{lm:primal:descent}}
\label{sec:pf:lm:primal:descent}

\begin{proof}[Proof of Lemma~\ref{lm:primal:descent}]
Conditioning on $\cF_t$, we have
\# \label{eq:271033a}
\E_t[\|\thetp - \thest  \|^2 ] = \| \E_t[ \thetp - \thest ]\|^2 + \E_t[\|(  \thetp - \thest) - \E_t[  \thetp - \thest ] \|^2 ]
\#
We bound the first term in \cref{eq:271033a}.
\#
\| \E_t[ \thetp - \thest ]\|^2 &= \| {\Tilde{\theta}_{t+1}} - \thest \|^2 
\notag
 \\
 & \leq \big(\| {{\hat \theta}_{t+1}} - \thest\| + \|{\Tilde{\theta}_{t+1}} - {{\hat \theta}_{t+1}} \|\big)^2
 \notag
 \\
 & \leq  \big((1-\etathet\muIV)\| \thet - \thest \| + \|{\Tilde{\theta}_{t+1}} - {{\hat \theta}_{t+1}} \|\big)^2
 \label{eq:271030c}
 \\
 &\leq (1-\etathet\muIV)\| \thet - \thest \|^2 + \frac{1}{\etathet \muIV} \|{\Tilde{\theta}_{t+1}} - {{\hat \theta}_{t+1}} \|^2 \label{eq:271030d},
\#
Here in \cref{eq:271030c} we use Lemma~\ref{lm:gddescent} since (i) ${{\hat \theta}_{t+1}} = \thet - \etathet \nabla P(\thet)$, (ii) $P$ is $\muIV$-strongly convex and $\LP$-smooth (Lemma~\ref{lm:obs}), and (iii) our choice of stepsize.
In \cref{eq:271030d} we use that for any $\epsilon\in (0,1)$, it holds $((1-\epsilon)a+b)\sq \leq (1-\epsilon)a\sq + \epsilon\inv b\sq$; see Lemma~\ref{lm:splitsquare} for a proof.

We bound the second term in \cref{eq:271030d} by 
\$
 \big\|{\Tilde{\theta}_{t+1}} - {{\hat \theta}_{t+1}}\big \|^2 & =(\etathet)^2 \| \nabla_\theta \Phi(\thet,\omet) - \nabla_\theta P(\thet )\|^2
 \\
 &= (\etathet)^2 \| A^\top \omet - A^\top \hat \ome(\thet)\|^2
 \\
 &\leq (\etathet)^2 L_A^2 \| \omet - \hat \ome(\thet)\|^2\,.
\$
Continuing from \cref{eq:271030d}, we have
\#
\big\| \E_t[\thetp - \thest ]\big\|^2 \leq  (1-\etathet\muIV)\| \thet - \thest \|^2 +\etathet\cdot { L_A^2 }{ \muIV \inv } \cdot \| \omet - \hat \ome(\thet)\|^2 \label{eq:1027a}
\#
Next we bound the second term in \cref{eq:271033a}.
\#
\E_t\big[\big\|(  \thetp - \thest) - \E_t[  \thetp - \thest ] \big\|^2 \big] &= \E_t\big[\big\| \thetp - \E_t[\thetp]\big\|^2\big]
\notag
\\
&= \E_t\big[\big\|  \thetp - {\Tilde{\theta}_{t+1}} \big\|^2\big]
\notag
\\
&= (\etathet)^2 \cdot \E_t\big[\big \| \natthephi(\thet,\omet) - \nabla \Phi(\thet,\omet) \big\|^2\big]\,. \label{eq:1017c}
\#
This can be bounded by Lemma~\ref{lm:obs:varboound}. Plugging into \cref{eq:271033a} the bounds in \cref{eq:1027a} and \cref{eq:1017c}, 
\#
\E_t[\big\|\thetp - \thest  \big\|^2 \big] & \leq  (1-\etathet\muIV)\| \thet - \thest \|^2 +(\etathet) { L_A^2 }{ \muIV \inv}\| \omet - \hat \ome(\thet)\|^2
\notag
\\ 
& \quad\quad +  (\etathet)^2 \E_t\big[ \big\| \natthephi(\thet,\omet) - \nabla \Phi(\thet,\omet) \big\|^2\big]
\notag
\\
 &\leq  (1-\etathet\muIV)\| \thet - \thest \|^2 +(\etathet) { L_A^2 }{ \muIV \inv}\big\| \omet - \hat \ome(\thet)\big\|^2 
\notag
\\
& \quad\quad + \etathetsq \cdot \big( 4\big\| \omet - \omehat(\thet)\big\|^2  + 4\LP\muB\inv \| \thet - \thest\|^2 + 2 \signathesq \big)
\notag
\#
where we have used Lemma~\ref{lm:obs:varboound}. Taking expectation on both sides, we get
\#
\E[ \| \theta_{t+1} - \theta^*\|^2] & \leq \big(1- \muIV \eta_t^\theta  + 4\LP\muB\inv \etathetsq \big) \cdot \E\big[ \| \theta_{t} - \theta^*\|^2\big] 
\notag
\\
& \quad + \big(L_A^2\muIV^{-1}\eta^\theta_t  + 4 \etathetsq\big)\cdot \E\big[\big\| \omega_t - \hat \omega (\theta_t)\big\|^2 \big]  
\notag
\\ 
& \quad + 2 (\eta^\theta_t)^2 \cdot \sigma^2_{\nabla \theta}
\notag
\\
& \leq \big(1- \muIV \eta_t^\theta  +4 \muB\inv \etathetsq \big) \cdot \E\big[ \| \theta_{t} - \theta^*\|^2\big] 
\notag
\\
& \quad + \big(\muIV^{-1}  \eta^\theta_t + 4 \etathetsq\big)\cdot \E\big[\big\| \omega_t - \hat \omega (\theta_t)\big\|^2 \big]  
\notag
\\ 
& \quad + 2 (\eta^\theta_t)^2 \cdot \sigma^2_{\nabla \theta}
\notag
\#
where we use $\LP\leq 1$ and $\LA\leq 1$. This completes the proof of Lemma~\ref{lm:primal:descent}.
\end{proof}

\subsection{Proof of Lemma~\ref{lm:dual:descent}}
\label{sec:pf:lm:dual:descent}
\begin{proof}[Proof of Lemma~\ref{lm:dual:descent}]
We first bound the one-step difference of primal updates.

\begin{lemma}[One-step difference] \label{lm:diffprime}
 Consider the update sequence $\{ \omet, \thet\}$. Conditioning on $\cF_t$, we have
\#
\big\| \E_t[\thetp - \thet ]\big\|^2 \leq 2 (\etathet)^2  \big( L_P^2 \cdot  \| \thet - \theta^*\|^2 + L_A^2 \cdot  \| \omet - \hat \omega(\thet) \|^2 \big) \,.
\notag
\#
\end{lemma}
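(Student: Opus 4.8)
The plan is to compute the conditional mean of the primal increment, recognize it as (a multiple of) the primal gradient $\nabla P$, and then split it into one piece controlled by smoothness of $P$ and one piece controlled by the operator norm of $A$.

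First I would use the update rule \eqref{eq:thetaupdate} together with unbiasedness of the stochastic gradient, $\E_t[\natthephi(\thet,\omet)] = \nabla_\th \Phi(\thet,\omet) = A^\top \omet$ (see \eqref{eq:def:sg}). This gives $\E_t[\thetp - \thet] = -\etathet A^\top \omet$, hence $\|\E_t[\thetp - \thet]\|^2 = \etathetsq \,\|A^\top \omet\|^2$, and it remains to bound $\|A^\top \omet\|$. Next I would add and subtract the optimal dual response $\hat\omega(\thet) = B^{-1}(A\thet - b)$, writing $A^\top \omet = A^\top \hat\omega(\thet) + A^\top(\omet - \hat\omega(\thet))$.

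For the first piece, note that $\nabla P(\theta) = A^\top B^{-1}(A\theta - b) = A^\top \hat\omega(\theta)$ from the closed form of $P$ in \eqref{eq:def:P}, and that $\nabla P(\thest) = 0$ since $\thest$ minimizes $P$; therefore $A^\top \hat\omega(\thet) = \nabla P(\thet) - \nabla P(\thest)$, whose norm is at most $\LP \|\thet - \thest\|$ by the $\LP$-smoothness of $P$ established in Item~\ref{lm:obs:primalkappa} of Lemma~\ref{lm:obs}. For the second piece, $\|A^\top(\omet - \hat\omega(\thet))\| \le \|A^\top\| \cdot \|\omet - \hat\omega(\thet)\| = \LA \|\omet - \hat\omega(\thet)\|$ by definition of $\LA = \sigma_{\max}(\sqrt{A^\top A})$. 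Combining the two bounds with $(a+b)^2 \le 2a^2 + 2b^2$ and multiplying through by $\etathetsq$ yields the stated inequality.

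There is no substantive obstacle here; the only point that requires care is the identification $A^\top \hat\omega(\thet) = \nabla P(\thet)$ combined with $\nabla P(\thest) = 0$, which is precisely what lets the leading coefficient on $\|\thet - \thest\|^2$ be $\LP^2$ (the curvature of the \emph{primal} problem) rather than the cruder $\LA^2$; the rest is a routine triangle-inequality split and an application of the already-proved smoothness bound.
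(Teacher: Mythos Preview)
Your proposal is correct and follows essentially the same route as the paper: compute $\E_t[\thetp-\thet]=-\etathet A^\top\omet$, split $A^\top\omet$ via $\hat\omega(\thet)$, identify $A^\top\hat\omega(\thet)=\nabla P(\thet)-\nabla P(\thest)$ to get the $\LP$ factor, bound the remaining piece by $\LA\|\omet-\hat\omega(\thet)\|$, and finish with $(a+b)^2\le 2a^2+2b^2$. There is nothing to add.
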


\begin{proof}[Proof of Lemma~\ref{lm:diffprime}]
We start by noting
\#
\big\| \E_t[ \thetp - \thet ]\big\|^2 &= \big\| {\Tilde{\theta}_{t+1}} - \thet \big\|^2 = \etathetsq \cdot \| A^\top \omet \|^2
\notag
\\
&\leq\etathetsq  \cdot \big( 2\|A^\top \hat \ome(\thet) \|^2 + 2\| A^\top \omet -A^\top \hat \ome (\thet)\|^2\big) \,.  \label{eq:1049c}
\#
For the first term in \cref{eq:1049c}, we have
\#
 \|A^\top \hat \ome(\thet) \| = \| \nabla P(\thet)\| =  \| \nabla P(\thet) - \nabla P (\thest)\| \leq L_P \| \thet - \thest\|.
 \label{eq:1053a}
\#
For the second term in \cref{eq:1049c}, we have
\# \label{eq:1052a}
\| A^\top \omet -A^\top \hat \ome (\thet)\|^2\leq  L_A^2 \| \omet - \hat \ome(\thet)\|^2.
\#
Plugging into \cref{eq:1049c} the bounds in \cref{eq:1053a} and \cref{eq:1052a}, we complete the proof of Lemma~\ref{lm:diffprime}.
\end{proof}

Now we prove Lemma~\ref{lm:dual:descent}. Conditioning on $\cF_t$, we have
\#
\E_t \big[ \|  \ometp - \hat \omega(\thetp) \|^2 \big] & = 
\big\| \E_t[  \ometp - \hat \omega(\thetp) ] \big\|^2
\label{eq:1070a}
\\
&\quad + \E_t \Big[ \big\| \big( \ometp - \hat \omega(\thetp)\big) - \E_t \big[ \ometp - \hat \omega(\thetp) \big] \big\|^2 \Big]\,.
\label{eq:1070b}
\#
Next we bound the first term in \cref{eq:1070a}
\#
\big\| \E_t \big[  \ometp - \hat \omega(\thetp) \big] \big\|^2 & = \big\| \E_t \big[  \ometp - \hat \omega(\thet) \big]+ \E_t \big[ \hat \omega(\thet) - \hat \omega(\thetp) \big] \big\|^2
\notag
\\
& \leq \Big(  \big\| \E_t \big[  \ometp - \hat \omega(\thet) \big] \big\| + \big\| \E_t \big[ \hat \omega(\thet) - \hat \omega(\thetp) \big] \big\| \Big)^2
\notag
\\
& = \Big( \| {\Tilde{\omega}_{t+1}} - \hat \omega(\thet) \| + \big\| \E_t \big[ \hat \omega(\thet) - \hat \omega(\thetp) \big] \big\| \Big)^2
\notag
\\
\label{eq:114833}
& \leq \Big(  (1- \mu_B \etaomet) \| \omet -  \hat \ome(\thet) \| + \big\| \E_t \big[ B^{-1}A (\thet -\thetp) \big] \big\| \Big)^2
\\
& \leq \Big(  (1- \mu_B \etaomet) \| \omet -  \hat \ome(\thet) \| + {L_A  }{\mu_B\inv} \cdot \big\| \E_t[  \thet -\thetp ] \big\| \Big)^2
\notag
\\
& \leq (1- \mu_B \etaomet) \| \omet -  \hat \ome(\thet) \|^2 + \LP\muB\inv \cdot \frac{1}{\mu_B \etaomet} \cdot \big\| \E_t [  \thet -\thetp  ]  \big\| ^2 \,.  
\label{eq:1120a}
\#
Here in \cref{eq:114833} we use that (i) ${\Tilde{\omega}_{t+1}} = \omet +\etaomet \nabla \Phi(\thet,\omet)$, (ii) for $\theta_t$, the map $\omega \mapsto -\Phi(\thet,\ome)$ is $\muB$-strongly convex and $\LB$-smooth (Lemma~\ref{lm:obs}), (iii) our choice of stepsize, and (iv) $\hat \omega(\thet)$ is the minimizer of the map $\omega \mapsto -\Phi(\thet,\ome)$. In \cref{eq:1120a} we use that for any $\epsilon\in (0,1)$, any $a,b\in \R$, it holds $((1-\epsilon)a+b)\sq \leq (1-\epsilon)a\sq + \epsilon\inv b\sq$.

Using Lemma~\ref{lm:diffprime} we can bound the second term in \cref{eq:1120a} by $ \| \omet -  \hat \ome(\thet) \|$ and $\|\thet - \thest \|$.

Now we bound the second term in \cref{eq:1070b}.
\#
\E_t\big [\|& (\ometp - \hat \omega(\thetp)) - \E_t[\ometp - \hat \omega(\thetp)] \|^2 \big] 
\notag
\\
&\leq \E_t\Big[ 2 \big\|  \ometp  - \E_t[\ometp ]  \big\|^2 + 2 \big\| \hat \omega(\thetp)- \E_t \big[\hat \omega(\thetp)\big] \big\|^2 \Big] \,. \label{eq:1101820d}
\#
For the first term in \cref{eq:1101820d} we have
\#
\E_t \big[ \big\|  \ometp  - \E_t[ \ometp]  \big\|^2 \big] & = (\etaomet)^2  \cdot \E_t\big[\big\|\natomephi(\thet,\omet) - \nabla_\ome \Phi(\thet,\omet)\big\|^2\big] 
\notag
\\
\notag
& \leq \etaometsq \cdot \big( 16 \| \thet - \thest\|^2 + 16 \| \omet - \omehat (\thet)\|^2 + 2 \signaomesq \big)
\#
where we have used Lemma~\ref{lm:obs:varboound}. For the second term in \cref{eq:1101820d} we have
\#
\E_t \big[ \big\| & \hat \omega(\thetp)- \E_t[\hat \omega(\thetp)] \big\|^2 \big] 
\notag
\\
&= \E_t\big[ \big\| B^{-1}A\thetp - \E_t[B^{-1}A\thetp]  \big\|^2\big]
\notag
\\
&\leq  \LP\muB\inv \cdot \E_t\big[ \| \thetp - \E_t[\thetp ]\|^2 \big]
\notag
\\
&=
 \LP\muB\inv \cdot \etathetsq \cdot \E_t \big[\big\|\Tilde{\nabla}_\theta \Phi(\thet,\omet) - {\nabla}_\theta \Phi(\thet,\omet)\big\|^2\big] 
\notag
\\
& \leq \muB\inv \cdot \etathetsq \cdot ( 4\| \omet - \omehat(\thet)\|^2  + 4 \muB\inv \| \thet - \thest\|^2 + 2 \signathesq)
\,. \label{eq:1119a}
\#
where we have used Lemma~\ref{lm:obs:varboound} in \cref{eq:1119a}.

Continuing from \cref{eq:1101820d} (the variance part), we obtain
\#
\E_t\Big[ & \big\| \big(\ometp - \hat \omega(\thetp)\big) - \E_t[\ometp - \hat \omega(\thetp)]\big \|^2 \Big] 
\notag
\\
\leq & 2^5 \big(\muB\inv \etaometsq + \muB\invsq \etathetsq  \big)  \| \thet - \thest \|\sq
\notag
\\
& + 2^5 \big(\etaometsq + \muB\inv \etathetsq \big) \|\omet - \hat \ome(\thet) \|\sq 
\notag
\\
& + 4 \big(\muB\inv \etathetsq \signathesq + \etaometsq \signaomesq\big)
\label{eq:111828}
\#
Putting together \cref{eq:1120a}, \cref{eq:111828} and Lemma~\ref{lm:diffprime}, we have
\#
\E_t[\| \ometp - \hat \omega(\thetp) \|^2] \leq & (1-\mu_B \etaomet)  \| \omet - \hat \omega (\thet)\|^2  \notag
\notag
\\
& + 2^5 \big( \muB\invsq \etathetsq/\etaomet + \muB\inv \etaometsq + \muB\invsq \etathetsq\big) \|\thet - \thest  \|\sq 
\notag
\\
& + 2^5  \big(\muB\invsq \etathetsq/\etaomet + \etaometsq + \muB\inv \etathetsq\big) \| \omet - \hat \omega (\thet)\|^2  
\notag
\\ 
& + 4 \big(\muB\inv \etathetsq \signathesq + \etaometsq \signaomesq\big)
\,. \label{eq:1230b}
\#
This completes the proof of Lemma~\ref{lm:dual:descent}
\end{proof}
\section{Supporting Lemmas} 

\begin{lemma} 
\label{lm:splitsquare}
For any $\epsilon\in (0,1)$, any $a,b\in \R$, it holds $( (1-\epsilon)a + b)\sq \leq (1-\epsilon)a\sq + \epsilon\inv b\sq$.
\end{lemma}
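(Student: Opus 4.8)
The plan is to reduce the inequality to the nonnegativity of a single perfect square. \textbf{First}, I would expand the left-hand side as $((1-\epsilon)a+b)\sq = (1-\epsilon)\sq a\sq + 2(1-\epsilon)ab + b\sq$ and form the difference $\Delta \defeq \big[(1-\epsilon)a\sq + \epsilon\inv b\sq\big] - ((1-\epsilon)a+b)\sq$, which is exactly what I must show is nonnegative.

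\textbf{Second}, I would collect the coefficients of $a\sq$, $ab$, and $b\sq$ in $\Delta$. Using $(1-\epsilon)-(1-\epsilon)\sq = \epsilon(1-\epsilon)$ for the $a\sq$ coefficient and $\epsilon\inv - 1 = \epsilon\inv(1-\epsilon)$ for the $b\sq$ coefficient, I obtain $\Delta = \epsilon(1-\epsilon)a\sq - 2(1-\epsilon)ab + \epsilon\inv(1-\epsilon)b\sq = (1-\epsilon)\big(\epsilon a\sq - 2ab + \epsilon\inv b\sq\big)$. The bracketed factor is precisely the perfect square $\big(\sqrt{\epsilon}\,a - \epsilon^{-1/2}b\big)\sq$, so $\Delta = (1-\epsilon)\big(\sqrt{\epsilon}\,a - \epsilon^{-1/2}b\big)\sq$.

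\textbf{Finally}, since $\epsilon\in(0,1)$ forces $1-\epsilon>0$, the product of the positive factor $(1-\epsilon)$ with a square is nonnegative, i.e.\ $\Delta\ge 0$, which is the claim. Equivalently, after dividing by $1-\epsilon>0$ the reduced inequality $\epsilon a\sq - 2ab + \epsilon\inv b\sq \ge 0$ is simply Young's inequality $2ab \le \epsilon a\sq + \epsilon\inv b\sq$. There is essentially no obstacle here; the only points requiring care are that the hypothesis $\epsilon<1$ is used precisely to keep the factor $1-\epsilon$ positive (so the inequality is not reversed when multiplying through), and that $\epsilon>0$ is needed for $\epsilon\inv$ and $\epsilon^{-1/2}$ to be well defined in the first place.
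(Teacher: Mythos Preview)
Your proof is correct and is essentially the same argument as the paper's: the paper invokes the inequality $(a+b)\sq\le(1+\beta)a\sq+(1+\beta\inv)b\sq$ for $\beta>0$ (equivalently the Young inequality $2ab\le\beta a\sq+\beta\inv b\sq$ that you cite at the end) and then specializes to $\beta=\epsilon(1-\epsilon)\inv$, whereas you expand directly and factor the difference as $(1-\epsilon)\big(\sqrt{\epsilon}\,a-\epsilon^{-1/2}b\big)\sq$. Both are the same computation in slightly different dress.
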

\begin{proof}
By the \csineq, we have for all $\beta > 0$, $(a+b)\sq \leq(1+\beta)a\sq + (1+\beta\inv)b\sq$. Setting $\beta=\epsilon(1-\epsilon)\inv$ completes the proof.
\end{proof}
\begin{lemma}[One-step gradient descent for smooth and strongly-convex function] \label{lm:gddescent} 
Suppose $f:\R^d \to \R$ is a $\beta$-smooth and $\alpha$-strongly convex function. Let $ x^* = \argmin_{x\in \R^d} f(x)$. For any $0<  \eta \leq \tfrac{2}{\alpha +\beta}$ and any $x\in \R^d$, let $x^+ = x - \eta \nabla f(x)$. Then $\|{x}^+-{x}^*\| \leq(1- \alpha \eta)\|x-{x}^*\|$.
\end{lemma}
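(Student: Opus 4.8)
The plan is to show that the gradient step $T:=\mathrm{Id}-\eta\nabla f$ is a $(1-\alpha\eta)$-contraction on $\R^d$; the claim then follows by applying the contraction estimate to the pair $(x,x^*)$ and noting $T(x^*)=x^*-\eta\nabla f(x^*)=x^*$, since the strongly convex $f$ has $x^*$ as its unique minimizer. First I would record that $\beta\ge\alpha$ (a function cannot be simultaneously $\alpha$-strongly convex and $\beta$-smooth with $\beta<\alpha$), so $\eta\le \tfrac{2}{\alpha+\beta}\le\tfrac1\alpha$ and hence $1-\alpha\eta\ge 0$, which will be needed to control a cross term's sign.

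The key step is to split off the strongly convex part: set $g:=f-\tfrac{\alpha}{2}\|\cdot\|^2$. By $\alpha$-strong convexity of $f$, the function $g$ is convex; and since $f$ is convex and $\beta$-smooth, $\tfrac{\beta}{2}\|\cdot\|^2-f=\tfrac{\beta-\alpha}{2}\|\cdot\|^2-g$ is convex, so $g$ is $(\beta-\alpha)$-smooth. For any $x,y$, writing $u=x-y$ and $v=\nabla g(x)-\nabla g(y)=\nabla f(x)-\nabla f(y)-\alpha(x-y)$, we get $T(x)-T(y)=(1-\alpha\eta)u-\eta v$, hence
\[
\|T(x)-T(y)\|^2=(1-\alpha\eta)^2\|u\|^2-2\eta(1-\alpha\eta)\langle u,v\rangle+\eta^2\|v\|^2 .
\]
By the Baillon--Haddad co-coercivity of the gradient of a convex $(\beta-\alpha)$-smooth function, $\langle u,v\rangle\ge \tfrac{1}{\beta-\alpha}\|v\|^2\ge 0$ (the case $\alpha=\beta$ is trivial since then $v\equiv 0$). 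Because $1-\alpha\eta\ge 0$ we may discard the cross term in our favor, obtaining
\[
\|T(x)-T(y)\|^2\le (1-\alpha\eta)^2\|u\|^2+\Big(\eta^2-\tfrac{2\eta(1-\alpha\eta)}{\beta-\alpha}\Big)\|v\|^2 ,
\]
and a one-line computation shows the bracket equals $\tfrac{\eta}{\beta-\alpha}\big(\eta(\alpha+\beta)-2\big)\le 0$ precisely because $\eta\le\tfrac{2}{\alpha+\beta}$. Therefore $\|T(x)-T(y)\|\le(1-\alpha\eta)\|x-y\|$, and specializing $y=x^*$ finishes the proof.

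The only subtlety — and the one place where the stepsize bound $\eta\le 2/(\alpha+\beta)$ is actually used in a sharp way — is obtaining the clean contraction factor $1-\alpha\eta$ rather than the cruder $\sqrt{1-2\alpha\eta+\beta^2\eta^2}$ one gets by treating $\nabla f$ as a generic $\alpha$-strongly-monotone, $\beta$-Lipschitz operator. Exploiting that $\nabla f$ is a gradient, via the split $f=g+\tfrac\alpha2\|\cdot\|^2$ and co-coercivity of $\nabla g$, is exactly what buys this improvement; it is this sharper per-step rate that the stepsize schedule in Theorem~\ref{thm:convergence} propagates through Lemmas~\ref{lm:primal:descent} and~\ref{lm:dual:descent}. (In the twice-differentiable case the same bound is immediate from the mean-value identity $x^+-x^*=(I-\eta\bar H)(x-x^*)$ with $\bar H$ having spectrum in $[\alpha,\beta]$ and $\max_{\lambda\in[\alpha,\beta]}|1-\eta\lambda|=1-\eta\alpha$ under the stepsize constraint; the argument above handles the general case.)
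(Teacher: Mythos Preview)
Your proof is correct. The paper does not actually prove this lemma---it simply cites Lemma~3.1 of \cite{du2019linear}---so there is no in-paper argument to compare against; your route (split $f=g+\tfrac{\alpha}{2}\|\cdot\|^2$, apply Baillon--Haddad co-coercivity to the convex $(\beta-\alpha)$-smooth $g$, and check that the coefficient of $\|v\|^2$ is nonpositive exactly when $\eta\le\tfrac{2}{\alpha+\beta}$) is the standard sharp argument and is sound. One minor wording point: you say ``discard the cross term,'' but you do not discard it---you use the co-coercivity lower bound $\langle u,v\rangle\ge\tfrac{1}{\beta-\alpha}\|v\|^2$ to convert it into a negative multiple of $\|v\|^2$ that then cancels against $\eta^2\|v\|^2$; the computation itself is correct.
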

\begin{proof}
See Lemma~3.1 of \cite{du2019linear}.
\end{proof}

\begin{lemma}[Expectation Difference Under Two Gaussians, Lemma~C.2 in \citealt{sham2020information}] \label{lm:gausdiff}
For Gaussian distribution $\mathcal{N}\left(\mu_{1}, \sigma^{2} I\right)$ and $\mathcal{N}\left(\mu_{2}, \sigma^{2} I\right)$ ($\sigma^2 \neq 0$), for any positive measurable function $g$, we have
\$\E_{z \sim N_{1}}[g(z)]-\mathbb{E}_{z \sim N_{2}}[g(z)] \leq \min \left\{\frac{\left\|\mu_{1}-\mu_{2}\right\|}{\sigma}, 1\right\} \sqrt{\mathbb{E}_{z \sim \mathcal{N}_{1}}\left[g(z)^{2}\right]}\,.\$
\end{lemma}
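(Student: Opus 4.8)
The plan is to pass to the likelihood ratio between the two Gaussians and to split the desired bound into its two regimes, the factor $1$ and the factor $\|\mu_1-\mu_2\|/\sigma$. Write $p_1,p_2$ for the densities of $\mathcal N_1=\mathcal N(\mu_1,\sigma\sq I)$ and $\mathcal N_2=\mathcal N(\mu_2,\sigma\sq I)$, and let $L=p_2/p_1$ be the likelihood ratio, so that a change of measure gives $\E_{\mathcal N_2}[g]=\E_{\mathcal N_1}[gL]$ and hence $\E_{\mathcal N_1}[g]-\E_{\mathcal N_2}[g]=\E_{\mathcal N_1}[g\,(1-L)]$. The explicit Gaussian form shows $L(z)$ depends on $z$ only through its projection onto $u=(\mu_2-\mu_1)/\|\mu_2-\mu_1\|$; writing $s=(z\cdot u-\mu_1\cdot u)/\sigma$, which is $\mathcal N(0,1)$ under $\mathcal N_1$, a direct computation yields $L=\exp(ts-\tfrac12 t\sq)$ with $t=\|\mu_1-\mu_2\|/\sigma$, and in particular $\{L<1\}=\{s<t/2\}$.

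First I would dispose of the factor $1$. Since $g\ge 0$ we have $\E_{\mathcal N_2}[g]\ge 0$, whence $\E_{\mathcal N_1}[g]-\E_{\mathcal N_2}[g]\le \E_{\mathcal N_1}[g]\le \sqrt{\E_{\mathcal N_1}[g\sq]}$ by the \csineq{} (Jensen). For the factor $t$, the point is again to exploit $g\ge 0$: only the positive part of $1-L$ can contribute, so $\E_{\mathcal N_1}[g(1-L)]\le\E_{\mathcal N_1}\big[g(1-L)_+\big]$, and the \csineq{} gives $\E_{\mathcal N_1}[g(1-L)_+]\le\sqrt{\E_{\mathcal N_1}[g\sq]}\cdot\sqrt{\E_{\mathcal N_1}[((1-L)_+)\sq]}$. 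Because $(1-L)_+$ is a function of $s$ alone, the last factor is a one-dimensional Gaussian integral, and the remaining task is the bound $\E_{\mathcal N_1}[((1-L)_+)\sq]\le t\sq$.

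To establish this I would use the log-normal form $L=\exp(ts-\tfrac12t\sq)$ together with the elementary inequality $1-e^{x}\le -x$, valid on $\{s<t/2\}$ where the exponent $x=ts-\tfrac12 t\sq$ is negative; this furnishes the pointwise surrogate $(1-L)_+\le \tfrac12 t\sq-ts$. Integrating its square against $\mathcal N(0,1)$ controls the near region, while the crude cap $(1-L)_+\le 1$ controls the far tail where the surrogate overshoots $1$; combining the two estimates gives $\E_{\mathcal N_1}[((1-L)_+)\sq]\le t\sq$. Taking the minimum of the two regimes then yields $\E_{\mathcal N_1}[g]-\E_{\mathcal N_2}[g]\le\min\{t,1\}\sqrt{\E_{\mathcal N_1}[g\sq]}$, as claimed.

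The main obstacle is obtaining the sharp factor $t$ rather than the weaker $\sqrt{e^{t\sq}-1}$ that a naive Cauchy–Schwarz against the full chi-square defect $\E_{\mathcal N_1}[(1-L)\sq]=e^{t\sq}-1$ would produce. The entire improvement comes from the nonnegativity of $g$, which lets us discard the portion of $1-L$ that is negative and pass to $(1-L)_+$. The delicate point is precisely the tail control in the resulting Gaussian integral: the linear surrogate $\tfrac12 t\sq-ts$ overestimates $(1-L)_+\le 1$ for very negative $s$, so it must be capped at $1$ before integrating, and it is this capping that keeps the bound at $t\sq$ uniformly in $t$.
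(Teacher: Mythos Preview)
Your approach is sound in outline but takes a different and harder route than the paper. The paper does \emph{not} pass to $(1-L)_+$: it applies Cauchy--Schwarz directly to $g(1-L)$, evaluates the chi-square defect $\int(p_1-p_2)^2/p_1$ in closed Gaussian form, and then uses the elementary inequality $e^{x}-1\le 2x$ for $0\le x\le 1$ to convert the resulting exponential factor into $t=\|\mu_1-\mu_2\|/\sigma$ on the regime where $t$ is small, taking the minimum with the trivial bound $1$ (from $g\ge 0$) elsewhere. So your worry that ``naive Cauchy--Schwarz only yields the weaker $\sqrt{e^{t^{2}}-1}$'' is exactly what the paper does obtain; the passage to $\min\{t,1\}$ happens \emph{after} Cauchy--Schwarz via $e^{x}\le 1+2x$, not by sharpening the Cauchy--Schwarz step itself.

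Your refinement to $(1-L)_+$ before Cauchy--Schwarz does target the best possible constant---in fact the lemma with constant exactly $1$ is equivalent to $\E_{\mathcal N_1}[(1-L)_+^{2}]\le t^{2}$, as one sees by taking the extremal $g=(1-L)_+$---but your surrogate-plus-capping sketch stops short of a proof: you assert that the near-region integral of $(t^{2}/2-ts)^{2}$ plus the tail $\Phi(t/2-1/t)$ is at most $t^{2}$ uniformly in $t\in(0,1)$ without supplying the estimate, and this is not immediate (the surrogate alone integrates to $t^{4}/4+t^{2}$, so the capping really has work to do). The paper's route sidesteps this computation entirely at the price of, at most, a harmless constant.
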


\begin{proof} For completeness we present a proof. Note
\$ 
\E_{z \sim N_{1}}[g(z)]-\mathbb{E}_{z \sim N_{2}}[g(z)]  
&=\mathbb{E}_{z \sim N_{1}}\left[g(z)\left(1-\frac{\mathcal{N}_{2}(z)}{\mathcal{N}_{1}(z)}\right)\right] 
\\
& \leq \sqrt{\mathbb{E}_{z \sim N_{1}}\left[g(z)^{2}\right]} \sqrt{\int \frac{\left(\mathcal{N}_{1}(z)-\mathcal{N}_{2}(z)\right)^{2}}{\mathcal{N}_{1}(z)} d z}
\\
&=\sqrt{\mathbb{E}_{z \sim N_{1}}\left[g(z)^{2}\right]} \sqrt{\exp \left(\frac{\mid \mu_{1}-\mu_{2} \|^{2}}{2 \sigma^{2}}\right)-1} \,.
\$
Since $g\geq 0$ we have $\E_{z \sim N_{1}}[g(z)]-\mathbb{E}_{z \sim N_{2}}[g(z)] \leq\E_{z \sim N_{1}}[g(z)]\leq \sqrt{\mathbb{E}_{z \sim N_{1}}\left[g(z)^{2}\right]} .$ Finally, we use $\operatorname{exp}(x) \leq 1+2 x$ for $0 \leq x \leq 1$.
\$
\E_{z \sim N_{1}}[g(z)]-\mathbb{E}_{z \sim N_{2}}[g(z)]  
&\leq \sqrt{\mathbb{E}_{z \sim N_{1}}\left[g(z)^{2}\right]} \sqrt{\min \left\{\exp \left(\frac{\left\|\mu_{1}-\mu_{2}\right\|^{2}}{2 \sigma^{2}}\right)-1,1\right\}} 
\\
&\leq \sqrt{\mathbb{E}_{z \sim N_{1}}\left[g(z)^{2}\right]} \cdot \min \left\{\frac{\left\|\mu_{1}-\mu_{2}\right\|^{}}{\sigma^{}}, 1\right\} \,.
\$
This completes the proof.
\end{proof}

\vskip 0.2in
\bibliography{graphbib.bib}
\bibliographystyle{abbrvnat}
\end{document}